\documentclass{article}


\usepackage[preprint,nonatbib]{neurips_2024}




\usepackage[utf8]{inputenc} 
\usepackage[T1]{fontenc}    
\usepackage{hyperref}       
\usepackage{url}            
\usepackage{booktabs}       
\usepackage{amsfonts}       
\usepackage{nicefrac}       
\usepackage{microtype}      
\usepackage{xcolor}         
\usepackage{cite}
\usepackage{amsmath,amssymb,amsthm}
\usepackage{mathtools}
\usepackage{subcaption}

\DeclareMathOperator{\tr}{tr}
\DeclareMathOperator{\Var}{Var}

\DeclareMathOperator{\Clip}{Clip}
\newtheorem{thm}{Theorem}
\newtheorem{ass}{Assumption}
\newtheorem{defi}{Definition}
\newtheorem{lem}{Lemma}
\newtheorem{rmk}{Remark}

\newcommand{\norm}[1]{\left\lVert#1\right\rVert}

\title{A Huber Loss Minimization Approach to Mean Estimation under User-level Differential Privacy}

%

\author{%
	Puning Zhao$^1$ \quad Lifeng Lai$^2$ \quad Li Shen$^3$ \quad Qingming Li$^4$ \quad Jiafei Wu$^1$ \quad Zhe Liu$^1$ \\
	$^1$ Zhejiang Lab \quad $^2$ University of California, Davis\\ \quad $^3$ Sun Yat-Sen University \quad $^4$ Zhejiang University\\
	\texttt{\{pnzhao,wujiafei,zhe.liu\}@zhejianglab.com},\\
	\texttt{lflai@ucdavis.edu},
	\texttt{mathshenli@gmail.com},
	\texttt{liqm@zju.edu.cn}
}

\begin{document}

\maketitle

\begin{abstract}
Privacy protection of users' entire contribution of samples is important in distributed systems. The most effective approach is the two-stage scheme, which finds a small interval first and then gets a refined estimate by clipping samples into the interval. However, the clipping operation induces bias, which is serious if the sample distribution is heavy-tailed. Besides, users with large local sample sizes can make the sensitivity much larger, thus the method is not suitable for imbalanced users. Motivated by these challenges, we propose a Huber loss minimization approach to mean estimation under user-level differential privacy. The connecting points of Huber loss can be adaptively adjusted to deal with imbalanced users. Moreover, it avoids the clipping operation, thus significantly reducing the bias compared with the two-stage approach. We provide a theoretical analysis of our approach, which gives the noise strength needed for privacy protection, as well as the bound of mean squared error. The result shows that the new method is much less sensitive to the imbalance of user-wise sample sizes and the tail of sample distributions. Finally, we perform numerical experiments to validate our theoretical analysis.
 
\end{abstract}


\section{Introduction}

Privacy is one of the major concerns in modern data analysis. Correspondingly, differential privacy (DP) \cite{dwork2006calibrating} has emerged as a standard framework of privacy protection. Various statistical problems have been analyzed with additional DP requirements \cite{dwork2010boosting,kasiviswanathan2011can,dwork2014algorithmic}. Among all these problems, mean estimation is a fundamental one \cite{huang2021instance,asi2022optimal,hopkins2022efficient,nguyen2016collecting}, which is not only useful in its own right \cite{sun2014personalized}, but also serves as a building block of many other tasks relying on estimating gradients, such as private stochastic optimization \cite{chaudhuri2011differentially,bassily2014private,bassily2019private,feldman2020private,asi2021private,cheu2022shuffle} and machine learning \cite{shokri2015privacy,abadi2016deep,mcmahan2017learning}. Existing research on DP mean estimation focuses primarily on item-level cases, i.e. each user contributes only one sample. However, in many practical scenarios, especially in recommendation systems \cite{huang2023federated,li2020federated,mcsherry2009differentially} and federated learning \cite{geyer2017differentially,mcmahan2018general,wei2020federated,kairouz2021advances,fu2024differentially}, a user has multiple samples. We hope to regard them as a whole for privacy protection. 

In recent years, a flurry of works focus on user-level DP \cite{liu2020learning,levy2021learning,ghazi2021user,ghazi2023user}. The most popular one is the Winsorized Mean Estimator (WME) proposed in \cite{levy2021learning}, which takes a two-stage approach. In the first stage, WME identifies an interval, which is small but contains the ground truth $\mu$ with high probability. In the second stage, WME clips user-wise averages to control the sensitivity and then calculates the final average with appropriate noise. This method can be extended to high dimensionality by Hadamard transform \cite{yarlagadda2012hadamard}. The convergence rate has been established in \cite{levy2021learning} under some ideal assumptions. Despite the merit of the two-stage approach from the theoretical perspective, this method may face challenges in many realistic settings. Firstly, \cite{levy2021learning} assumes that users are balanced, which means that users have the same number of items. Nevertheless, in federated learning applications, it is common for clients (each client is regarded as a user here) to possess different numbers of samples \cite{li2019fair,duan2020self,gong2022adaptive}. Secondly, this method is not suitable for heavy-tailed distributions, which is also common in reality \cite{gurbuzbalaban2021heavy,simsekli2019tail,zhang2020adaptive,nguyen2023improved}. For heavy-tailed distributions, the interval generated in the first stage needs to be large enough to prevent clipping bias, which results in large sensitivity. As a result, stronger additive noise is needed for privacy protection, which significantly increases the estimation error. These drawbacks hinder the practical application of user-level DP. We aim to propose new solutions to address these challenges.

Towards this goal, in this paper, we propose a new method, which estimates the mean using Huber loss minimizer \cite{huber2004robust}, and then adds noise for privacy protection. A challenge is that to determine an appropriate noise strength, it is necessary to conduct a thorough analysis of the local sensitivity that considers all possible datasets. To overcome this challenge, we divide datasets into three types, including those with no outliers, a few outliers, and many outliers, and analyze these cases separately. Based on the sensitivity analysis, we then use the smooth sensitivity framework \cite{nissim2007smooth} to determine the noise strength carefully.

Our method has the following advantages. Firstly, our method adapts well to imbalanced datasets, since the threshold $T_i$ of Huber loss are selected adaptively according to the sample size per user, which leads to a better tradeoff between sensitivity and bias. Secondly, our method performs better for heavy-tailed distributions, since we control sensitivity by penalizing large distances using Huber loss, which yields a smaller bias than the clipping operation. Apart from solving these practical issues, it worths mentioning that our method solves robustness (to model poisoning attacks) and privacy issues simultaneously. In modern data analysis, it is common for a system to suffer from both poisoning and inference attacks at the same time \cite{wang2022poisoning,lyu2023attention,lyu2024trojvlm}. Consequently, many recent works focus on unified methods for item-level DP and robustness to cope with both attacks simultaneously \cite{liu2021robust,liu2023label,qi2024towards,li2023robustness}. To the best of our knowledge, our method is the first attempt to unify robustness and DP at user-level.

The main contribution is summarized as follows.  
\begin{itemize}
    \item We propose the Huber loss minimization approach, which finds the point with minimum Huber distance to all samples. Our method is convenient to implement and only requires linear time complexity.
    \item For the simplest case with balanced users, we provide a theoretical analysis, which shows that our method makes a slight improvement for bounded distributions and a significant improvement for heavy-tailed distributions over the two-stage approach. 
    \item For imbalanced users, we design an adaptive strategy to select weights and connecting points in Huber loss, which makes our method much less sensitive to the imbalance of local sample sizes of users.
    \item We conduct experiments using both synthesized and real data, which also verify the effectiveness of the proposed method for imbalanced users and heavy-tailed distributions.
\end{itemize}

\section{Related Work}\label{sec:related}
\textbf{User-level DP.} \cite{geyer2017differentially} applies a brute-force clipping method \cite{abadi2016deep} for user-level DP in federated learning. \cite{liu2020learning} made the first step towards optimal rates under user-level DP, which analyzed discrete distribution estimation problems. The popular method WME was proposed in \cite{levy2021learning}, which uses the idea of two-stage approaches \cite{smith2011privacy,kamath2020private,li2023robustness}. The two-stage method for user-level DP has also been extended to stochastic optimization problems \cite{bassily2023user,liu2024user}. \cite{cummings2022mean} analyzes mean estimation for boolean signals under user-level DP in heterogeneous settings. There are also some works focusing on black-box conversion from item-level DP to user-level counterparts. \cite{ghazi2021user} analyzed general statistical problems, which shows that a class of algorithms for item-level DP problems having the pseudo-globally stable property can be converted into user-level DP algorithms. Following \cite{ghazi2021user}, \cite{bun2023stability} expanded such transformation for any item-level algorithms. \cite{ghazi2023user} extends the works to smaller $m$. It is discussed in \cite{liu2024user} that these black-box methods have suboptimal dependence on $\epsilon$. \cite{acharya2023discrete,zhao2024learning,ma2024better} studies user-level DP under local model.

\textbf{From robustness to DP.} Robustness and DP have close relationships since they both require the outputs to be insensitive to minor changes in input samples. There are three types of methods for conversion from robust statistics to DP. The first one is propose-test-release (PTR), which was first proposed in \cite{dwork2009differential}, and was extended into high dimensional cases in \cite{liu2022differential}. The second choice is smooth sensitivity \cite{nissim2007smooth}, which calculates the noise based on the "smoothed" local sensitivity. For example, \cite{bun2019average} designed a method to protect trimmed mean with smooth sensitivity. The third solution is inverse sensitivity \cite{asi2023robustness,asi2020instance,huang2021instance}, which can achieve pure differential privacy (i.e. $\delta = 0$). All these methods require a detailed analysis of the sensitivity. For some recently proposed high dimensional estimators \cite{diakonikolas2019robust,diakonikolas2017being,diakonikolas2023algorithmic}, the sensitivity is usually large and hard to analyze. As a common method for robust statistics \cite{huber2004robust}, Huber loss minimization has been widely applied in robust regression \cite{hall1990adaptive,zhao2024robust}, denoising \cite{sardy2001robust} and robust federated learning \cite{zhao2024huber}. Huber loss has also been used in DP \cite{avella2023differentially,song2021evading} for linear regression problems.

\textbf{Concurrent work.} After the initial submission of this paper, we notice an independent work \cite{agarwal2024private}, which also studies mean estimation under user-level DP (which is called person-level DP in \cite{agarwal2024private}). \cite{agarwal2024private} considers \emph{directional} bound, which requires that the moment is bounded in every direction. However, we consider \emph{non-directional} bound, which bounds the $\ell_2$ norm of a random vector. We refer to Section 1.3.1 in \cite{agarwal2024private} for further discussion.

Our work is the first attempt to use the Huber loss minimization method in user-level DP. With an adaptive selection of weights and connecting points between quadratic and linear parts, our method achieves a significantly better performance for imbalanced users and heavy-tailed distributions. 

\section{Preliminaries}\label{sec:prelim}

In this section, we introduce definitions and notations. To begin with, we recall some concepts of DP and introduce the notion of user-level DP. Denote $\Omega$ as the space of all datasets, and $\Theta$ as the space of possible outputs of an algorithm.

\begin{defi}\label{def:dp}
	(Differential Privacy (DP) \cite{dwork2006calibrating}) Let $\epsilon, \delta\geq 0$. A function $\mathcal{A}:\Omega\rightarrow \Theta$ is $(\epsilon, \delta)$-DP if for any measurable subset $O\subseteq \Theta$ and any two adjacent datasets $\mathcal{D}$ and $\mathcal{D}'$,
	\begin{eqnarray}
		\text{P}(\mathcal{A}(\mathcal{D})\in O)\leq e^\epsilon\text{P}(\mathcal{A}(\mathcal{D}')\in O)+\delta,
		\label{eq:dp}
	\end{eqnarray}
	in which $\mathcal{D}$ and $\mathcal{D}'$ are adjacent if they differ only on a single sample. Moreover, $\mathcal{A}$ is $\epsilon$-DP if \eqref{eq:dp} holds with $\delta = 0$. 
\end{defi}

Definition \ref{def:dp} is about item-level DP. In this work, we discuss the case where the dataset contains multiple users, i.e. $\mathcal{D}=\{D_1,\ldots, D_n\}$, with the $i$-th user having $m_i$ samples. Considering that the sample sizes of users are usually much less sensitive \cite{wei2021user}, throughout this work, we assume that the local sample sizes $m_i$ are public information. Under this setting, user-level DP is defined as follows.

\begin{defi}\label{def:userdp}
	(User-level DP \cite{levy2021learning}) Two datasets $\mathcal{D}$, $\mathcal{D}'$ are user-level adjacent if they differ in items belonging to only one user. In particular, if $\mathcal{D}=\{D_1,\ldots, D_n\}$, $\mathcal{D}'=\{D_1',\ldots,D_n'\}$, in which $|D_i|=|D_i'|=m_i$ for all $i$, and there is only one $i\in [n]$ such that $D_i\neq D_i'$, then $\mathcal{D}$ and $\mathcal{D'}$ are user-level adjacent. A function $\mathcal{A}$ is user-level $(\epsilon, \delta)$-DP if \eqref{eq:dp} is satisfied for any two user-level adjacent datasets $\mathcal{D}$ and $\mathcal{D'}$.
\end{defi}

Since $m_i$, $i=1,\ldots, N$ are public information, in Definition \ref{def:userdp}, it is required that $|D_i|=|D_i'|$, which means that two adjacent datasets need to have the same sample sizes for all users. We then state some concepts related to sensitivity, which describes the maximum change of the output after replacing a user with another one:

\begin{defi}\label{def:ls}
	(Sensitivity) Define the local sensitivity of function $f$ as 
	\begin{eqnarray}
		LS_f(\mathcal{D})=\underset{d_H(\mathcal{D},\mathcal{D}')=1}{\sup}\norm{f(\mathcal{D})-f(\mathcal{D}')},
		\label{eq:ls}
	\end{eqnarray}
	in which $d_H(\mathcal{D}, \mathcal{D}')=\sum_{i=1}^n \mathbf{1}(D_i\neq D_i')$ denotes the Hamming distance. The global sensitivity of $f$ is $GS_f=\sup_{\mathcal{D}} LS_f(\mathcal{D})$.
\end{defi}

Adding noise proportional to the global sensitivity can be inefficient, especially for user-level problems. In this work, we use the smooth sensitivity framework \cite{nissim2007smooth}.

\begin{defi}\label{def:smooth}
	(Smooth sensitivity) $S_f$ is a $\beta$-smooth sensitivity of $f$, if (1) for any $\mathcal{D}$, $S_f(\mathcal{D})\geq LS_f(\mathcal{D})$; (2) for any neighboring $\mathcal{D}$ and $\mathcal{D}'$, $S_f(\mathcal{D})\leq e^\beta S_f(\mathcal{D}')$.
\end{defi}
The smooth sensitivity can be used to determine the scale of noise. In this work, the noise follows Gaussian distribution. It has been shown in \cite{nissim2007smooth} that if $\mathbf{W}\sim \mathcal{N}(0,(S^2(\mathcal{D})/\alpha^2)\mathbf{I})$, then the final output $f(\mathcal{D})+\mathbf{W}$ is $(\epsilon, \delta)$-DP for the following $(\alpha, \beta)$ pair:
\begin{eqnarray}
	\alpha = \left\{
	\begin{array}{ccc}
		\frac{\epsilon}{\sqrt{\ln \frac{1}{\delta}}} &\text{if} & d=1\\
		\frac{\epsilon}{5\sqrt{2\ln \frac{2}{\delta}}} &\text{if} & d>1,
	\end{array}
	\right.
	\label{eq:alpha}
	\beta = \left\{
	\begin{array}{ccc}
		\frac{\epsilon}{2\ln \frac{1}{\delta}} & \text{if} & d=1\\
		\frac{\epsilon}{4(d+\ln \frac{2}{\delta})} & \text{if} & d>1.
	\end{array}
	\right.	
	\label{eq:param}
\end{eqnarray}

\textbf{Notations.} Throughout this paper, $\norm{\cdot}$ denotes the $\ell_2$ norm by default. $a\lesssim b$ means that $a\leq Cb$ for some absolute constant $C$, and $\gtrsim$ is defined conversely. $a\sim b$ if $a\lesssim b$ and $b\lesssim a$.

\section{The Proposed Method}\label{sec:method}

This section introduces the algorithm structures. Details about parameter selection are discussed together with the theoretical analysis in Section \ref{sec:balanced} and \ref{sec:unbalanced}, respectively. For a dataset $\mathcal{D}=\{D_1,\ldots, D_n\}$, in which $D_i\subset \mathbb{R}^d$ is the local dataset of the $i$-th user, denote $m_i=|D_i|$ as the sample size of the $i$-th user. Denote $N$ as the total number of samples, then $N=\sum_{i=1}^n m_i$. We calculate the user-wise mean first, i.e.
	$\mathbf{y}_i(\mathcal{D})=(1/m_i)\sum_{\mathbf{x}\in D_i}\mathbf{x}$.
Recall that for the case with balanced users, we use equal weights for all users. Now since users are unbalanced, we assign a weight $w_i$ for the $i$-th user. The new proposed estimator (before adding noise) is
\begin{eqnarray}
	\hat{\mu}_0(\mathcal{D}) = \underset{\mathbf{s}}{\arg\min}\sum_{i=1}^n w_i\phi_i(\mathbf{s}, \mathbf{y}_i(\mathcal{D})),
	\label{eq:mu0unb}
\end{eqnarray}
in which $w_i$ is the normalized weight of user $i$, i.e. $\sum_{i=1}^n w_i=1$. $\phi_i$ is the Huber loss function:
\begin{eqnarray}
	\phi_i(\mathbf{s}, \mathbf{y})=\left\{
	\begin{array}{ccc}
		\frac{1}{2}\norm{\mathbf{s}-\mathbf{y}}^2 &\text{if} & \norm{\mathbf{s}-\mathbf{y}}\leq T_i\\
		T_i\norm{\mathbf{s}-\mathbf{y}}-\frac{1}{2}T_i^2 &\text{if} & \norm{\mathbf{s}-\mathbf{y}}>T_i.
	\end{array}
	\right.
	\label{eq:phii}
\end{eqnarray}

$T_i$ is the connecting point between quadratic and linear parts of Huber loss. For balanced users, $w_i$ and $T_i$ are the same for all users. For imbalanced users, $w_i$ and $T_i$ are set differently depending on the per-user sample sizes. The general guideline is that $w_i$ increases with $m_i$, while $T_i$ decreases with $m_i$. The final output needs to satisfy user-level $(\epsilon, \delta)$-DP requirement. Hence, we set
\begin{eqnarray}
	\hat{\mu}(\mathcal{D})=\Clip(\hat{\mu}_0(\mathcal{D}), R_c)+\mathbf{W},
	\label{eq:final}
\end{eqnarray}
in which
	$\Clip(\mathbf{v}, R_c)=\mathbf{v}\min\left(1,R_c/\norm{\mathbf{v}}\right)$
is the function that clips the result into $B_d(\mathbf{0}, R_c)$. The clipping operation is used to control the worst case sensitivity. $\mathbf{W}$ denotes the noise added to the estimated value. In this work, we use Gaussian noise $\mathbf{W}\sim \mathcal{N}(0, \sigma^2 \mathbf{I})$. The clipping radius $R_c$ is determined by the knowledge of the range of $\mu$. Given a prior knowledge $\norm{\mu}\leq R$, then we can set $R_c=R$. Actually, similar to \cite{li2023robustness}, our analysis shows that $R_c$ can grow exponentially with $n$ without significantly compromising the accuracy. The noise parameter $\sigma^2$ needs to be determined carefully through a detailed sensitivity analysis.

Now we comment on the implementation. As has been discussed in \cite{zhao2024huber}, minimizing multi-dimensional Huber loss can be implemented by a modification of an iterative Weiszfeld's algorithm \cite{weiszfeld2009point,beck2015weiszfeld}. The overall worst-case time complexity is $O(nd/\xi)$, in which $\xi$ is the desired precision. Moreover, for bounded support, with high probability, the algorithm requires only one iteration with time complexity $O(nd)$. Details can be found in Appendix \ref{sec:implement}.

\section{Analysis: Balanced Users}\label{sec:balanced}
In this section, to gain some insights, we focus on the relatively simpler case and assume that all users have the same number of items, i.e. $m_i$ are equal for all $i$. Therefore, throughout this section, we omit the subscript and use $m$ to denote the local sample size. Now $w_i=1/n$ for all $i$. $T_i$ are also the same for all users, denoted as $T$ in this section. Let $\mathbf{X}$ be a random vector, whose statistical mean $\mu:=\mathbb{E}[\mathbf{X}]$ is unknown. Given a dataset $\mathcal{D}=\{D_1,\ldots, D_n\}$, the goal is to estimate $\mu$, while satisfying user-level $(\epsilon, \delta)$-DP. We present the sensitivity analysis for the general dataset $\mathcal{D}$ first, and then analyze the estimation error for a randomly generated dataset. To begin with, define 
\begin{eqnarray}
	Z(\mathcal{D})=\underset{i\in [n]}{\max}\norm{\mathbf{y}_i(\mathcal{D})-\bar{\mathbf{y}}(\mathcal{D})},
	\label{eq:zdf}
\end{eqnarray}
in which 
	$\bar{\mathbf{y}}(\mathcal{D})=(1/n)\sum_{i=1}^n \mathbf{y}_i(\mathcal{D})$
is the overall average. A small $Z(\mathcal{D})$ indicates that the user-wise means $\mathbf{y}_i$ are concentrated within a small region. If $Z(\mathcal{D})$ is large, then there are some outliers. The sensitivity is bounded separately depending on whether the user-wise means are well concentrated. Throughout this section, we use $LS(\mathcal{D})$ to denote the local sensitivity of $\Clip(\hat{\mu}_0(\mathcal{D}), R_c)$, in which the subscript $f$ in \eqref{eq:ls} is omitted.

\emph{1) No outliers.} We first bound the local sensitivity for the case with $Z(\mathcal{D})<(1-2/n)T$, in which $T$ represents $T_i$ in \eqref{eq:phii} for all $i$. It requires that all $\mathbf{y}_i$'s are not far away from their average.
\begin{lem}\label{lem:common}
	If $Z(\mathcal{D})<(1-2/n)T$, then
	\begin{eqnarray}
		LS(\mathcal{D})\leq \frac{T+Z(\mathcal{D})}{n-1}.
	\end{eqnarray}
\end{lem}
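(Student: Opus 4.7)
The key idea is that under the hypothesis $Z(\mathcal{D}) < (1-2/n)T$, every user-mean $\mathbf{y}_i$ lies strictly inside the quadratic regime of the Huber loss around the ordinary average $\bar{\mathbf{y}}$. First I would verify that $\hat{\mu}_0(\mathcal{D}) = \bar{\mathbf{y}}$. Since $\|\mathbf{y}_i - \bar{\mathbf{y}}\| \leq Z(\mathcal{D}) < T$ for every $i$, each $\phi_i$ is just $\tfrac{1}{2}\|\mathbf{s}-\mathbf{y}_i\|^2$ in a neighbourhood of $\bar{\mathbf{y}}$, the gradient $\sum_i (\bar{\mathbf{y}} - \mathbf{y}_i)$ vanishes there, and by convexity this is the unique minimiser.

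Next I would bound the movement under an adjacent dataset $\mathcal{D}'$ that differs from $\mathcal{D}$ only in user $k$'s block. Let $\tilde{\mathbf{y}} = \frac{1}{n-1}\sum_{i\neq k}\mathbf{y}_i$ be the average of the \emph{unchanged} users; a direct computation gives $\tilde{\mathbf{y}} - \bar{\mathbf{y}} = (\bar{\mathbf{y}} - \mathbf{y}_k)/(n-1)$, so $\|\tilde{\mathbf{y}} - \bar{\mathbf{y}}\| \leq Z(\mathcal{D})/(n-1)$. Writing the first-order optimality condition for $\hat{\mu}_0(\mathcal{D}')$ as $\sum_{i\neq k}\nabla_{\mathbf{s}}\phi_i(\hat{\mu}_0(\mathcal{D}'),\mathbf{y}_i) = -\nabla_{\mathbf{s}}\phi_k(\hat{\mu}_0(\mathcal{D}'),\mathbf{y}_k')$, and using that the Huber gradient always has norm at most $T$, the right-hand side has norm at most $T$. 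If all unchanged users remain in the quadratic regime at $\hat{\mu}_0(\mathcal{D}')$, the left-hand side equals $(n-1)(\hat{\mu}_0(\mathcal{D}') - \tilde{\mathbf{y}})$, yielding $\|\hat{\mu}_0(\mathcal{D}') - \tilde{\mathbf{y}}\| \leq T/(n-1)$ and therefore $\|\hat{\mu}_0(\mathcal{D}') - \bar{\mathbf{y}}\| \leq (T+Z(\mathcal{D}))/(n-1)$ by the triangle inequality. Because clipping onto $B_d(\mathbf{0},R_c)$ is $1$-Lipschitz, this bound transfers to $LS(\mathcal{D})$.

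The technical obstacle is justifying the ``quadratic regime'' assumption used in the optimality argument; this is the place where the slack $(1-2/n)$ rather than $1$ is needed. I would close it by a self-consistency check: for any $i\neq k$,
\begin{equation*}
\|\hat{\mu}_0(\mathcal{D}') - \mathbf{y}_i\| \leq \|\hat{\mu}_0(\mathcal{D}') - \bar{\mathbf{y}}\| + Z(\mathcal{D}) \leq \frac{T+Z(\mathcal{D})}{n-1} + Z(\mathcal{D}),
\end{equation*}
and the hypothesis $Z(\mathcal{D}) < (1-2/n)T$ is exactly what makes the right-hand side strictly less than $T$. To turn this consistency check into a proof, I would use a homotopy argument: continuously deform $\mathbf{y}_k$ to $\mathbf{y}_k'$ and use continuity of the (unique) Huber minimiser. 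At $t=0$ the minimiser is $\bar{\mathbf{y}}$ and lies strictly inside the quadratic regime of all users; by the computation above, as long as the quadratic regime is preserved, the minimiser stays strictly inside it, so continuity prevents the boundary from ever being crossed. Uniqueness of the minimiser along the path follows from strict convexity of $\sum_{i\neq k}\phi_i$ at any point where at least one unchanged user is in the quadratic regime, which holds throughout.
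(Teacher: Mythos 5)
Your proposal is correct and reaches the stated bound, but it closes the key circularity by a different mechanism than the paper. Both arguments share the same skeleton: first show $\hat{\mu}_0(\mathcal{D})=\bar{\mathbf{y}}$ when $Z(\mathcal{D})\leq T$ (the paper's Lemma \ref{lem:ybar}), then exploit the first-order optimality condition for $\mathcal{D}'$ together with the fact that the Huber gradient has norm at most $T$, with the slack $(1-2/n)$ entering exactly where you put it. Where you diverge is in justifying that the perturbed minimizer has not escaped the quadratic regime of the unchanged users. You run a homotopy in the data and an open--closed argument on the path of minimizers, which requires uniqueness and continuity of the $\arg\min$ along the deformation; as you note, uniqueness rests on local strong convexity at the minimizer, which is itself part of what is being established, so the continuity step needs a little care (e.g., a Berge-type upper semicontinuity of the minimizer set plus the observation that any minimizer lying in the quadratic ball of some user is the unique minimizer). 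The paper avoids the homotopy entirely: it defines the gradient field $g$ of the perturbed objective, bounds $\norm{g(\hat{\mu}_0(\mathcal{D}))}\leq T+Z(\mathcal{D})$, and lower-bounds $\norm{g(a)-g(b)}\geq (n-1)\min\{\norm{a-b},\,T-Z(\mathcal{D})\}$ by integrating the Hessian along the segment between the two minimizers; if the segment exited the ball of radius $T-Z(\mathcal{D})$ one would already accumulate $(n-1)(T-Z(\mathcal{D}))>T+Z(\mathcal{D})$ of gradient change, a contradiction, so the minimum is attained at $\norm{a-b}$ and the bound follows. Your route buys a more transparent closed-form computation ($\hat{\mu}_0(\mathcal{D}')=\tilde{\mathbf{y}}-\frac{1}{n-1}\nabla\phi_k$ in the quadratic regime, which also makes clear why the two contributions $T$ and $Z$ enter separately); the paper's route buys a self-contained argument that never needs the minimizer to vary continuously with the data, and it extends directly to replacing $k$ users at once (the strengthened Lemma \ref{lem:omega}, which is reused later). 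Neither issue is a genuine gap in your argument, but if you write it up you should either supply the argmin-continuity details or adopt the paper's min-trick to sidestep them.
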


The proof of Lemma \ref{lem:common} is shown in Appendix \ref{sec:concentrated}. Here we provide some intuition. From the definition \eqref{eq:ls}, local sensitivity is the maximum change of $\hat{\mu}_0(\mathcal{D})$ after replacing some $\mathbf{y}_i$ with $\mathbf{y}_i'$. To achieve such maximum change, the optimal choice is to move $\mathbf{y}_i$ sufficiently far away in the direction of $\hat{\mu}_0(\mathcal{D}) - \mathbf{y}_i$. The impact of $\mathbf{y}_i$ on $\hat{\mu}_0(\mathcal{D})$ is roughly $Z(\mathcal{D})/(n-1)$, while the impact of $\mathbf{y}_i'$ is roughly $T/(n-1)$. Since $\mathbf{y}_i$ and $\mathbf{y}_i'$ are at opposite direction with respect to $\hat{\mu}_0(\mathcal{D})$, the overall effect caused by replacing $\mathbf{y}_i$ with $\mathbf{y}_i'$ is upper bounded by $(T+Z(\mathcal{D}))/(n-1)$.

\emph{2) A few outliers.} Now we consider a more complex case: $Z(\mathcal{D})$ is large, and the dataset is not well concentrated, but the number of outliers is not too large. Formally, assume that there exists another dataset $\mathcal{D}^*$ whose Hamming distance to $\mathcal{D}$ is bounded by $k$, and $\mathcal{D}^*$ is well concentrated. Then we have the following lemma to bound the local sensitivity.
\begin{lem}\label{lem:outlier}
	For a dataset $\mathcal{D}$, if there exists a dataset $\mathcal{D}^*$ such that $d_H(\mathcal{D}, \mathcal{D}^*)\leq k$, in which $d_H$ is the Hamming distance (see Definition \ref{def:ls}), and
		$Z(\mathcal{D}^*)<\left(1-2(k+1)/n\right) T$,
	then
		$LS(\mathcal{D})\leq 2T/(n-k)$.
\end{lem}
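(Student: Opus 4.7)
The plan is to view $\hat{\mu}_0$ as an M-estimator and combine a bounded-gradient argument with strong convexity, using the well-concentrated reference dataset $\mathcal{D}^*$ to identify which users behave like quadratic contributors. Fix a user-level neighbor $\mathcal{D}'$ of $\mathcal{D}$ differing only in user $j$, and let $S \subseteq [n]$ be the set of indices on which $\mathbf{y}_i(\mathcal{D}) = \mathbf{y}_i(\mathcal{D}^*)$, so $|S| \geq n-k$. Since $Z(\mathcal{D}^*) < (1 - 2(k+1)/n) T < T$, each $\mathbf{y}_i(\mathcal{D}^*)$ lies in a ball of radius $Z(\mathcal{D}^*)$ around $\bar{\mathbf{y}}(\mathcal{D}^*)$, and $\hat{\mu}_0(\mathcal{D}^*) = \bar{\mathbf{y}}(\mathcal{D}^*)$ because every user is in the quadratic branch of $\phi$.

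First, I would localize $\hat{\mu}_0(\mathcal{D})$ near the good-user cluster. Writing the first-order optimality $0 = (1/n) \sum_i \nabla_\mathbf{s}\phi(\hat{\mu}_0(\mathcal{D}), \mathbf{y}_i(\mathcal{D}))$ and splitting the sum over $S$ and its complement, the outlier contribution has norm at most $kT/n$, because each Huber gradient is bounded by $T$. Assuming the good users lie in the quadratic branch at $\hat{\mu}_0(\mathcal{D})$, the remaining term equals $(|S|/n)(\hat{\mu}_0(\mathcal{D}) - \mathbf{m}_S)$ with $\mathbf{m}_S = (1/|S|)\sum_{i \in S} \mathbf{y}_i(\mathcal{D})$, which yields $\|\hat{\mu}_0(\mathcal{D}) - \mathbf{m}_S\| \leq kT/|S| \leq kT/(n-k)$. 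One then verifies that this localization, together with the cluster radius $Z(\mathcal{D}^*)$ and the margin in the hypothesis, keeps every good user inside the Huber threshold $T$ at $\hat{\mu}_0(\mathcal{D})$, closing the loop; the same analysis applied to $\mathcal{D}'$ localizes $\hat{\mu}_0(\mathcal{D}')$ in the same way.

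With both minimizers in a neighborhood where at least $|S|$ users are in the quadratic branch, $F(\cdot; \mathcal{D}) := (1/n)\sum_i \phi(\cdot, \mathbf{y}_i(\mathcal{D}))$ is $(|S|/n)$-strongly convex along the segment joining them, because the Huber Hessian dominates $(|S|/n)\mathbf{I}$ (the remaining $n-|S|$ terms are only PSD, but harmless). On the other hand, $F(\cdot;\mathcal{D})$ and $F(\cdot;\mathcal{D}')$ differ only through user $j$, whose weighted Huber gradient has norm at most $T/n$, so $\|\nabla F(\hat{\mu}_0(\mathcal{D}'); \mathcal{D})\| = \|\nabla F(\hat{\mu}_0(\mathcal{D}'); \mathcal{D}) - \nabla F(\hat{\mu}_0(\mathcal{D}'); \mathcal{D}')\| \leq 2T/n$. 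Combining the gradient bound with strong convexity gives $\|\hat{\mu}_0(\mathcal{D}) - \hat{\mu}_0(\mathcal{D}')\| \leq (2T/n) / ((n-k)/n) = 2T/(n-k)$; since clipping is $1$-Lipschitz, this transfers to $LS(\mathcal{D})$.

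The main obstacle is the self-consistency check: certifying that every index in $S$ stays in the quadratic branch not only at $\hat{\mu}_0(\mathcal{D})$ but at every point on the segment to $\hat{\mu}_0(\mathcal{D}')$, so that the strong-convexity constant $(n-k)/n$ really applies throughout. The margin $1 - 2(k+1)/n$ in the hypothesis is calibrated precisely so that the crude localization $kT/(n-k)$ plus the cluster radius $Z(\mathcal{D}^*)$ leaves headroom below $T$; carefully threading this chain of inequalities, and handling possible boundary cases via a subgradient argument at the kink of $\phi$, is where the bookkeeping of the proof will live.
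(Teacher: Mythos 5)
Your overall architecture --- bound the one-user gradient perturbation by $2T/n$, lower-bound the curvature by $(n-k)/n$ using the users that agree with the well-concentrated $\mathcal{D}^*$, and divide --- is exactly the engine of the paper's proof (which additionally splits into the cases where the replaced user is or is not one of the $k$ outliers, only to find that the second case, with the same $2T$ and $n-k$ constants you use, dominates). The problem is in the step you yourself flag as the main obstacle: your self-consistency check does not close under the stated hypothesis. Your localization is relative to $\mathbf{m}_S$, the empirical mean of the good users, and the price of converting ``$\hat{\mu}_0(\mathcal{D})$ is within $kT/(n-k)$ of $\mathbf{m}_S$'' into ``every good $\mathbf{y}_i$ is within $T$ of $\hat{\mu}_0(\mathcal{D})$'' is the \emph{diameter} of the good cluster, i.e. $2Z(\mathcal{D}^*)$, since the good points are only guaranteed to lie in a ball of radius $Z(\mathcal{D}^*)$ around $\bar{\mathbf{y}}(\mathcal{D}^*)$ and $\mathbf{m}_S$ is merely some point of that ball. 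The resulting requirement is roughly $kT/(n-k)+2Z(\mathcal{D}^*)\leq T$, i.e. $Z(\mathcal{D}^*)\lesssim T/2$ for $k\ll n$, whereas the hypothesis only guarantees $Z(\mathcal{D}^*)<\left(1-2(k+1)/n\right)T$, which for small $k/n$ permits $Z(\mathcal{D}^*)$ arbitrarily close to $T$. Concretely, with $n=100$, $k=1$ and $Z(\mathcal{D}^*)=0.9T$ the hypothesis holds but your quadratic-branch certification fails, so the strong-convexity constant $(n-k)/n$ is not justified on the segment. On top of this, the localization itself is circular as stated (you assume the good users are in the quadratic branch in order to derive the bound that is supposed to certify it), and breaking that circularity needs slack that, by the above, you do not have.

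The paper avoids both issues by centering everything at $\bar{\mathbf{y}}(\mathcal{D}^*)$ rather than $\mathbf{m}_S$: Lemma \ref{lem:ybar} shows non-circularly that $\hat{\mu}_0(\mathcal{D}^*)=\bar{\mathbf{y}}(\mathcal{D}^*)$ (a critical point of a convex function), and the modulus-of-continuity bound in Lemma \ref{lem:omega} gives $\norm{\hat{\mu}_0(\mathcal{D})-\bar{\mathbf{y}}(\mathcal{D}^*)}\leq k(T+Z(\mathcal{D}^*))/(n-k)$ via a gradient-perturbation argument that uses a $\min\{\cdot,\cdot\}$ lower bound on the integrated curvature, so it never needs to pre-certify that the far endpoint lies in the strongly convex region. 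With that center, each good $\mathbf{y}_i$ costs only \emph{one} $Z(\mathcal{D}^*)$ in the triangle inequality, and the chain $(k+1)(T+Z(\mathcal{D}^*))/(n-k-1)+Z(\mathcal{D}^*)\leq T$ is equivalent to $Z(\mathcal{D}^*)\leq(1-2(k+1)/n)T$ --- the hypothesis is calibrated to this one-$Z$ budget, not to your two-$Z$ budget. If you replace your $\mathbf{m}_S$-based localization by the Lemma~\ref{lem:common}/\ref{lem:omega} bound (applied to $\mathcal{D}$ vs.\ $\mathcal{D}^*$ with $k$ changes and to $\mathcal{D}'$ vs.\ $\mathcal{D}^*$ with $k+1$ changes) and adopt the $\min$-trick to sidestep the bootstrapping, your single-case $2T/(n-k)$ derivation goes through and is, if anything, slightly cleaner than the paper's two-case analysis.
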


The proof of Lemma \ref{lem:outlier} is shown in Appendix \ref{sec:outlier}. The intuition is that since there exists a well concentrated dataset $\mathcal{D}^*$ with $d_H(\mathcal{D}, \mathcal{D}^*)\leq k$, $\mathcal{D}$ contains no more than $k$ outliers. At least $n-k$ other user-wise mean values fall in a small region. To achieve the maximum change of $\hat{\mu}_0(\mathcal{D})$, the optimal choice is to replace an outlier $\mathbf{y}_i$ with $\mathbf{y}_i'$, such that $\mathbf{y}_i-\hat{\mu}_0(\mathcal{D})$ and $\mathbf{y}_i'-\hat{\mu}_0(\mathcal{D})$ have opposite directions. Each of them has an effect of roughly $T/(n-k)$ on $\hat{\mu}_0(\mathcal{D})$, thus the overall change is $2T/(n-k)$.

\emph{3) Other cases.} For all other cases, since $\norm{\Clip(\hat{\mu}_0, R_c)}\leq R_c$ always hold, the local sensitivity can be bounded by $LS(\mathcal{D})\leq 2R_c$.

From the analysis above, we now construct a valid smooth sensitivity. Define
\begin{eqnarray}
	\Delta(\mathcal{D})=\min\left\{k|\exists \mathcal{D}^*, d_H(\mathcal{D}, \mathcal{D}^*)\leq k, Z(\mathcal{D}^*)<\frac{1}{2} T \right\}.
	\label{eq:delta}
\end{eqnarray}

$\Delta(\mathcal{D})$ can be viewed as the number of outliers. From \eqref{eq:delta}, if $\mathcal{D}$ is well concentrated, with $Z(\mathcal{D})<T/2$, then $\Delta(\mathcal{D}) = 0$. Now we define $G(\mathcal{D}, k)$ as follows.

\begin{defi}\label{def:G}
	(a) If $Z(\mathcal{D})<(1-2/n)T$, $k=0$, then
		$G(\mathcal{D}, 0)=(T+Z(\mathcal{D}))/(n-1)$;
		
	(b) If conditions in (a) are not satisfied, and $\Delta(\mathcal{D})$ exists, if $k\leq n/4-1-\Delta(\mathcal{D})$,
	\begin{eqnarray}
		G(\mathcal{D}, k)= \frac{2T}{n-k-\Delta(\mathcal{D})};
		\label{eq:Gb}
	\end{eqnarray}	
	(c) If conditions in (a) and (b) are not satisfied, then 
		$G(\mathcal{D}, k) = 2R_c$.
\end{defi}
Based on Definition \ref{def:G}, the smooth sensitivity is given by
\begin{eqnarray}
	S(\mathcal{D}) = \max_k e^{-\beta k}G(\mathcal{D}, k),
	\label{eq:S}
\end{eqnarray}
in which $\beta$ is determined in \eqref{eq:param}. Then we show that $S(\mathcal{D})$ is a valid smooth sensitivity, and the privacy requirement is satisfied.
\begin{thm}\label{thm:dp}
	With $\sigma = S(\mathcal{D})/\alpha$, in which $\alpha$ is determined in \eqref{eq:param}, $\mathbf{W}\sim \mathcal{N}(0, \sigma^2 \mathbf{I})$, the estimator $\hat{\mu}$ defined in \eqref{eq:final} is $(\epsilon, \delta)$-DP.
\end{thm}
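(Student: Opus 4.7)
The plan is to invoke the smooth-sensitivity Gaussian mechanism of Nissim, Raskhodnikova and Smith \cite{nissim2007smooth} already quoted above \eqref{eq:param}: once $S(\mathcal{D})$ is shown to be a valid $\beta$-smooth upper bound on the local sensitivity of $\Clip(\hat{\mu}_0,R_c)$, adding $\mathcal{N}(0,(S(\mathcal{D})/\alpha)^2\mathbf{I})$ automatically yields the claimed $(\epsilon,\delta)$-DP guarantee. Thus the entire proof reduces to verifying the two conditions of Definition~\ref{def:smooth}: (i) $S(\mathcal{D})\ge LS(\mathcal{D})$ for every $\mathcal{D}$, and (ii) $S(\mathcal{D})\le e^{\beta} S(\mathcal{D}')$ for every user-level adjacent pair.

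For (i), I plug $k=0$ into $S(\mathcal{D})=\max_k e^{-\beta k}G(\mathcal{D},k)$ and walk through the three branches of Definition~\ref{def:G}. If $Z(\mathcal{D})<(1-2/n)T$ (branch (a)), Lemma~\ref{lem:common} directly delivers $LS(\mathcal{D})\le G(\mathcal{D},0)$. If instead branch (b) applies, the witness $\mathcal{D}^*$ of \eqref{eq:delta} satisfies $Z(\mathcal{D}^*)<T/2\le\bigl(1-2(\Delta(\mathcal{D})+1)/n\bigr)T$ thanks to the range constraint $\Delta(\mathcal{D})\le n/4-1$, so Lemma~\ref{lem:outlier} with $k=\Delta(\mathcal{D})$ returns exactly $LS(\mathcal{D})\le 2T/(n-\Delta(\mathcal{D}))=G(\mathcal{D},0)$. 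In branch (c) the trivial bound $\norm{\Clip(\hat{\mu}_0,R_c)}\le R_c$ yields $LS(\mathcal{D})\le 2R_c=G(\mathcal{D},0)$. In all cases $S(\mathcal{D})\ge G(\mathcal{D},0)\ge LS(\mathcal{D})$.

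For (ii), I aim to establish the shifted monotonicity $G(\mathcal{D},k)\le G(\mathcal{D}',k+1)$ for every $k\ge 0$ and every user-level neighbor $\mathcal{D}'$; a one-line reindexing then gives
\[ S(\mathcal{D})\le\max_k e^{-\beta k}G(\mathcal{D}',k+1)=e^{\beta}\max_k e^{-\beta(k+1)}G(\mathcal{D}',k+1)\le e^{\beta}S(\mathcal{D}'). \]
The conceptual reason is that $G(\mathcal{D},k)$ is really an upper bound on $\max_{d_H(\mathcal{D}'',\mathcal{D})\le k}LS(\mathcal{D}'')$: in branch (b), any such $\mathcal{D}''$ sits within Hamming distance $k+\Delta(\mathcal{D})$ of the witness $\mathcal{D}^*$, so Lemma~\ref{lem:outlier} applied with $k'=k+\Delta(\mathcal{D})$ produces exactly $2T/(n-k-\Delta(\mathcal{D}))$. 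Combined with the triangle inequality $\Delta(\mathcal{D})\le\Delta(\mathcal{D}')+1$ for user-level neighbors, any $\mathcal{D}''$ within distance $k$ of $\mathcal{D}$ is within distance $k+1$ of $\mathcal{D}'$, which delivers the desired inequality.

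I expect the main obstacle to be the bookkeeping at the interfaces between the three branches of $G$, rather than any deep estimate. Branch (a) only fires at $k=0$, so when $\mathcal{D}$ lives in branch (a) and $\mathcal{D}'$ is forced into branch (b) or (c) at $k+1=1$, I verify directly that $G(\mathcal{D},0)=(T+Z(\mathcal{D}))/(n-1)<2T/n$ (using $Z(\mathcal{D})<(1-2/n)T$) is dominated by $G(\mathcal{D}',1)\ge 2T/(n-1)$ in branch (b) and $G(\mathcal{D}',1)=2R_c$ in branch (c). A short argument also rules out the pathological pair where $\mathcal{D}$ is in branch (c) and $\mathcal{D}'$ in branch (b) at $k+1$: the branch-(c) condition $k+\Delta(\mathcal{D})\ge n/4$ together with $\Delta(\mathcal{D})\le\Delta(\mathcal{D}')+1$ contradicts the branch-(b) condition $k+1+\Delta(\mathcal{D}')\le n/4-1$. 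Once (i) and (ii) are in hand, the Gaussian smooth-sensitivity instantiation with the $(\alpha,\beta)$ pair of \eqref{eq:param} closes the proof.
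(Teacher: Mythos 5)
Your proposal is correct and follows essentially the same route as the paper's proof: verify the two conditions of Definition~\ref{def:smooth} (the first via Lemmas~\ref{lem:common} and~\ref{lem:outlier} and the trivial $2R_c$ bound, the second via the shifted monotonicity $G(\mathcal{D},k)\le G(\mathcal{D}',k+1)$ using $\Delta(\mathcal{D})\le\Delta(\mathcal{D}')+1$), then invoke the smooth-sensitivity Gaussian mechanism of \cite{nissim2007smooth}. Your treatment of the branch interfaces is somewhat more explicit than the paper's, but it is the same argument.
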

To prove Theorem \ref{thm:dp}, we need to show that $S$ is a valid smooth sensitivity, i.e. two conditions in Definition \ref{def:smooth} are satisfied. The detailed proof is provided in Appendix \ref{sec:dp}.

In the analysis above, all results are derived for a general dataset $\mathcal{D}$. In the remainder of this section, we analyze the performance of estimator \eqref{eq:final} for randomly generated samples.

\subsection{Bounded Support}
Let $\mathbf{X}$ be a random vector generated from distribution $P$ with an unknown statistical mean $\mu:=\mathbb{E}[\mathbf{X}]$. We make the following assumption:

\begin{ass}\label{ass:bounded}
	$\mathbf{X}$ is supported on $B_d(0,R)=\{\mathbf{u}|\norm{\mathbf{u}}\leq R\}\subset \mathbb{R}^d$.
\end{ass}

The mean squared error is analyzed in the following theorem.

\begin{thm}\label{thm:mse}
	Let $R_c=R$, and 		$T=C_TR\ln(mn^3(d+1))/\sqrt{m}$ with $C_T>16\sqrt{2/3}$. If $n>(4/\beta)\ln(nR_c/T)$, then
	\begin{eqnarray}
		\mathbb{E}\left[\norm{\hat{\mu}(D)-\mu}^2\right] \lesssim \frac{R^2}{mn} + \frac{dR^2}{mn^2\epsilon^2}\ln(mnd) \ln \frac{1}{\delta}.
		\label{eq:mse}
	\end{eqnarray}
\end{thm}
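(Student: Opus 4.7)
The plan is to decompose the mean squared error into a statistical (bias) contribution and the variance of the privacy noise, and to control each on a high-probability ``good event'' on which all user-wise means $\mathbf{y}_i$ lie within a small neighborhood of $\mu$. Since, conditional on the data, $\mathbf{W}$ has zero mean and covariance $\sigma^2 \mathbf{I} = (S(\mathcal{D})^2/\alpha^2)\mathbf{I}$, one has
\begin{equation*}
\mathbb{E}\|\hat{\mu}(\mathcal{D}) - \mu\|^2 = \mathbb{E}\|\Clip(\hat{\mu}_0(\mathcal{D}), R_c) - \mu\|^2 + \frac{d}{\alpha^2}\mathbb{E}[S(\mathcal{D})^2].
\end{equation*}
The first summand will produce the $R^2/(mn)$ term and the second will produce the privacy-noise term in \eqref{eq:mse}.

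First I would define the good event $\mathcal{E} = \{\max_{i\in[n]} \|\mathbf{y}_i - \mu\| \leq T/4\}$. Because each $\mathbf{y}_i$ is the average of $m$ i.i.d.\ samples in $B_d(0,R)$, a Pinelis/vector-Hoeffding bound combined with a union bound over users yields $P(\mathcal{E}^c) \lesssim 1/(mn^3(d+1))$ once $C_T$ exceeds the stipulated threshold $16\sqrt{2/3}$. On $\mathcal{E}$ the quantity $Z(\mathcal{D})$ is at most $T/2 < (1-2/n)T$, and by the first-order optimality condition of the Huber objective every gradient $\mathbf{y}_i - \bar{\mathbf{y}}$ lies in the quadratic region; convexity then forces $\hat{\mu}_0(\mathcal{D}) = \bar{\mathbf{y}}(\mathcal{D})$. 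Since $\|\mu\| \leq R = R_c$ and $\|\bar{\mathbf{y}}\|\leq R$, clipping is inactive, so
\begin{equation*}
\mathbb{E}\left[\|\Clip(\hat{\mu}_0,R_c) - \mu\|^2 \mathbf{1}_{\mathcal{E}}\right] \leq \mathbb{E}\|\bar{\mathbf{y}} - \mu\|^2 \leq \frac{R^2}{mn}
\end{equation*}
by the standard variance bound on the sample mean of $N=mn$ bounded vectors.

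Next I would bound $S(\mathcal{D})$ on $\mathcal{E}$. Because $Z(\mathcal{D}) < T/2$, Definition \ref{def:G} gives $\Delta(\mathcal{D}) = 0$, so case (a) yields $G(\mathcal{D},0) = (T+Z(\mathcal{D}))/(n-1) \lesssim T/n$; case (b) yields $G(\mathcal{D},k) = 2T/(n-k) \lesssim T/n$ for $1 \leq k \leq n/4-1$ (and the factor $e^{-\beta k}$ only helps); and case (c) contributes $e^{-\beta k}G(\mathcal{D},k) \leq 2R_c e^{-\beta n/4}$ for $k \geq n/4$, which is $\lesssim T/n$ precisely under the hypothesis $n > (4/\beta)\ln(nR_c/T)$. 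Therefore $S(\mathcal{D}) \lesssim T/n$ on $\mathcal{E}$, so
\begin{equation*}
\frac{d}{\alpha^2}\mathbb{E}\left[S(\mathcal{D})^2 \mathbf{1}_\mathcal{E}\right] \lesssim \frac{d T^2}{n^2 \alpha^2} \lesssim \frac{d R^2 \ln^2(mn^3(d+1)) \ln(1/\delta)}{m n^2 \epsilon^2},
\end{equation*}
matching the stated bound up to logarithmic factors absorbed by the $\lesssim$ notation.

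Off the good event, crude worst-case bounds suffice: $\|\Clip(\hat{\mu}_0,R_c) - \mu\|^2 \leq 4R^2$ and $S(\mathcal{D}) \leq 2R_c = 2R$ always, so the contribution from $\mathcal{E}^c$ is at most $(4R^2 + 4dR^2\ln(1/\delta)/\epsilon^2) P(\mathcal{E}^c)$, which is dominated by the main terms thanks to $P(\mathcal{E}^c)\lesssim 1/(mn^3(d+1))$. The main obstacle is the case analysis bounding $S(\mathcal{D})$ on $\mathcal{E}$: one must verify that the interior cases (a) and (b) are dominated by the $k=0$ term and that the assumption $n > (4/\beta)\ln(nR_c/T)$ is exactly what neutralizes the worst-case term $2R_c e^{-\beta n/4}$ arising from case (c). A secondary subtlety is choosing the vector concentration bound tightly enough in $d$ so that $P(\mathcal{E}^c)$ shrinks faster than the worst-case contribution grows, which is why the constant $C_T$ must be taken at least $16\sqrt{2/3}$.
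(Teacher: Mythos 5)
Your proposal is correct and follows essentially the same route as the paper's proof: the same bias--noise decomposition (exploiting that $\mathbf{W}$ is conditionally zero-mean), the same high-probability event on which $Z(\mathcal{D})\le T/2$ so that $\hat{\mu}_0(\mathcal{D})=\bar{\mathbf{y}}(\mathcal{D})$ and $S(\mathcal{D})\lesssim T/n$, the same use of the hypothesis $n>(4/\beta)\ln(nR_c/T)$ to neutralize the $2R_ce^{-\beta n/4}$ term, and the same crude worst-case treatment of the complement event. The only nitpick is that after the union bound over $n$ users the failure probability is more safely quoted as $O(1/(mn^2))$ rather than $O(1/(mn^3(d+1)))$, but either value is small enough for the bad-event contributions to be dominated.
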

The proof of Theorem \ref{thm:mse} is shown in Appendix \ref{sec:mse}. With the selection rule of $T$ in Theorem \ref{thm:mse}, it is shown that with high probability, $\Delta(\mathcal{D}) = 0$, indicating that $\mathcal{D}$ is expected to be well concentrated around the population mean $\mu$. The smooth sensitivity $S(\mathcal{D})$ can then be bounded. The first term in the right hand side of \eqref{eq:mse} is the non-private estimation error, i.e. the error of $\hat{\mu}_0(\mathcal{D})$, while the second term is the error caused by noise $\mathbf{W}$. The condition $n>(4/\beta)\ln(nR_c/T)$ is necessary, since it ensures that $G(\mathcal{D}, k)=2R_c$ (Definition \ref{def:G} (c)) occurs only for sufficiently large $k$, thus $e^{-\beta k}$ is small, and does not affect the calculation of $S(\mathcal{D})$ in \eqref{eq:S}. A lower bound on the number of users $n$ has also been imposed for the two-stage method \cite{levy2021learning}.

For the simplest case with bounded support and balanced users, the two-stage approach in \cite{levy2021learning} is already nearly optimal (Corollary 1 in \cite{levy2021learning}). Therefore, improvement in polynomial factors is impossible. Nevertheless, we still improve on the logarithm factor. The main purpose of Theorem \ref{thm:mse} is to show that our improvement on heavy-tailed distributions and imbalanced users is not at the cost of hurting the performance under the simplest case with bounded distributions and balanced users.

\subsection{Unbounded Support}
Now we analyze the heavy-tailed case. Instead of requiring $\mathbf{X}\in B_d(0, R)$, we now assume that $\mathbf{X}$ has $p$-th bounded moment.
\begin{ass}\label{ass:tail}
	Suppose that $\mu\in B(\mathbf{0}, R)$, and the $p$-th ($p\geq 2$) moment of $\mathbf{X}-\mu$ is bounded, i.e.
		$\mathbb{E}[\norm{\mathbf{X}}^p]\leq M_p$.
\end{ass}
In Assumption \ref{ass:tail}, higher $p$ indicates a lighter tail and vice versa. We then show the convergence rate of mean squared error in Theorem \ref{thm:heavytail}.

\begin{thm}\label{thm:heavytail}
	Let $R_c=R$, and
	\begin{eqnarray}
		T=C_T\max\left\{\sqrt{\frac{1}{m}\ln \frac{3(d+1)}{\nu}}, 2 (3m)^{\frac{1}{p}-1}\nu^{-\frac{1}{p}}\ln \frac{3(d+1)}{\nu} \right\},
		\label{eq:Ttail}
	\end{eqnarray}
	with $\nu=\sqrt{d}/(n\epsilon)$ and $C_T> 8M_p^\frac{1}{p}$. If $n>8(1+(1/\beta)\ln(n/2T))$, then under Assumption \ref{ass:tail},
	\begin{eqnarray}
		\mathbb{E}\left[\norm{\hat{\mu}(D)-\mu}^2\right]\lesssim \frac{1}{mn}+\left[\frac{d\ln(nd)}{mn^2\epsilon^2}+\left(\frac{d}{m^2n^2\epsilon^2}\right)^{1-\frac{1}{p}}\ln^2(nd)\right] \ln \frac{1}{\delta}.
		\label{eq:msetail}
	\end{eqnarray}
\end{thm}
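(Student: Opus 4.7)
The plan is to follow the template of Theorem~\ref{thm:mse}: write $\mathbb{E}[\norm{\hat{\mu}(\mathcal{D})-\mu}^2]\leq 2\mathbb{E}[\norm{\Clip(\hat{\mu}_0,R_c)-\mu}^2]+2\mathbb{E}[\norm{\mathbf{W}}^2]$ and bound the two pieces separately, but with the bounded-support concentration of the user-wise means replaced by a two-regime tail bound tuned to the $p$-th moment assumption. The choice of $T$ in \eqref{eq:Ttail} is exactly calibrated so that each user-wise mean $\mathbf{y}_i$ stays within $T/4$ of $\mu$ with probability at least $1-\nu$: the two branches of the $\max$ correspond to the Gaussian-like body and the polynomial tail of $\mathbf{y}_i-\mu$.

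The first technical step is to prove this per-user concentration. For the body I would use a coordinatewise sub-Gaussian/Bernstein estimate together with a union bound over the $d$ coordinates, which gives the branch of order $\sqrt{(1/m)\ln((d+1)/\nu)}$; for the tail I would apply a Rosenthal-type inequality to get $\mathbb{E}[\norm{\mathbf{y}_i-\mu}^p]\lesssim m^{-p/2}+m^{1-p}M_p$, then Markov's inequality together with a standard truncation refinement to extract the branch $(3m)^{1/p-1}\nu^{-1/p}\ln((d+1)/\nu)$. A union bound over users yields $P(\exists i:\norm{\mathbf{y}_i-\mu}>T/4)\leq n\nu$.

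The second step is to convert this into a bound on the smooth sensitivity $S(\mathcal{D})$ from \eqref{eq:S}. Calling a user bad when $\norm{\mathbf{y}_i-\mu}>T/4$, the number of bad users $\Delta(\mathcal{D})$ is stochastically dominated by $\mathrm{Bin}(n,\nu)$, whose upper tail decays as $e^{-\Omega(t)}$ past its mean $n\nu$. Plugging this into Definition~\ref{def:G}(b), and using the hypothesis $n>8(1+(1/\beta)\ln(n/(2T)))$ to guarantee that case~(c) only activates at $k$ large enough that $e^{-\beta k}\cdot 2R_c$ is negligible, I obtain $\mathbb{E}[S(\mathcal{D})^2]\lesssim T^2/n^2$. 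Combined with $\sigma=S(\mathcal{D})/\alpha$ and $\alpha\sim \epsilon/\sqrt{\ln(1/\delta)}$ from \eqref{eq:alpha}, this yields $\mathbb{E}[\norm{\mathbf{W}}^2]\lesssim d\,T^2\ln(1/\delta)/(n^2\epsilon^2)$; substituting the two branches of $T^2$ with $\nu=\sqrt{d}/(n\epsilon)$ produces exactly $d\ln(nd)/(mn^2\epsilon^2)$ and $(d/(m^2n^2\epsilon^2))^{1-1/p}\ln^2(nd)$, matching the bracketed terms in \eqref{eq:msetail}.

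For the non-private component I would argue that on the high-probability event where every $\mathbf{y}_i$ lies within $T/2$ of $\mu$, all summands in \eqref{eq:mu0unb} fall in the quadratic regime of the Huber loss, so $\hat{\mu}_0=\bar{\mathbf{y}}(\mathcal{D})$ and the standard variance bound gives $\mathbb{E}[\norm{\hat{\mu}_0-\mu}^2]\lesssim 1/(mn)$ up to constants depending on $M_p^{2/p}$. Off this event the clipping to $B_d(\mathbf{0},R_c)$ caps the error by a constant, and the probability of the event is small enough to absorb into the leading $1/(mn)$ term. I expect the main obstacle to lie in the smooth-sensitivity step: I need the tail of $\Delta(\mathcal{D})$ to decay fast enough that neither the $2R_c$ fallback in Definition~\ref{def:G}(c) nor the $(n-k-\Delta)^{-1}$ blow-up in \eqref{eq:Gb} dominates the $\max_k$ in \eqref{eq:S}, and this tension is precisely what forces the specific form of $\nu=\sqrt{d}/(n\epsilon)$ and the lower bound on $n$.
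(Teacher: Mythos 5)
Your treatment of the noise term is essentially the paper's: the two-branch tail bound for $\norm{\mathbf{y}_i-\mu}$ (paper's Lemma \ref{lem:concunb}, proved by truncation plus a matrix Bernstein bound), the binomial count of outliers, the bound $\Delta(\mathcal{D})\leq n_{out}$ via a surrogate dataset, and the resulting $S(\mathcal{D})\lesssim T/n$ on the event $n_{out}<n/8$ all match Appendix \ref{sec:heavytail}.

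The genuine gap is in your non-private component. You restrict to the event that \emph{every} $\mathbf{y}_i$ lies within $T/2$ of $\mu$ and claim its complement has probability small enough to absorb into $1/(mn)$. But with the calibration $\nu=\sqrt{d}/(n\epsilon)$, the union bound gives failure probability $n\nu=\sqrt{d}/\epsilon\geq 1$ for any $\epsilon\leq 1$ — the "bad" event is not rare at all, and the fallback contribution $R^2\,\text{P}(\text{some user is bad})$ is of constant order, not $O(1/(mn))$. You cannot choose $\nu$ much smaller without inflating $T$ (hence the noise term) beyond the claimed rate; this tension is exactly why the paper does not argue this way. Instead it conditions only on $n_{out}<n/8$ (overwhelmingly likely by Chernoff), and on that event bounds $\norm{\hat{\mu}_0(\mathcal{D})-\mu}$ by the triangle inequality through $\hat{\mu}_0(\mathcal{D}^*)$, where $\mathcal{D}^*$ replaces the $n_{out}$ outliers by $\mu$: the robustness of the Huber minimizer (the modulus-of-continuity bound $\omega(\mathcal{D}^*,n_{out})\lesssim T n_{out}/n$ from Lemma \ref{lem:omega}) controls $\norm{\hat{\mu}_0(\mathcal{D})-\hat{\mu}_0(\mathcal{D}^*)}$, contributing $T^2\mathbb{E}[n_{out}^2]/n^2$, and a separate bias lemma (Lemma \ref{lem:addbias}) shows that replacing the tail mass introduces bias only $O(r_0\nu)$, whose square fits inside \eqref{eq:msetail}. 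Your sketch is missing both of these ingredients — the robustness step that tolerates a nonvanishing expected number of outliers, and the truncation-bias control on $\mathbb{E}[\bar{\mathbf{Y}}^*]-\mu$ — and without them the non-private error bound does not go through.
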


The proof of Theorem \ref{thm:heavytail} is shown in Appendix \ref{sec:heavytail}. Here we provide an intuitive understanding. From the central limit theorem, each user-wise mean $\mathbf{y}_i(\mathcal{D})$ is the average of $m$ i.i.d variables, thus it has a Gaussian tail around the population $\mu$. However, since $\mathbf{X}$ is only required to have $p$-th bounded moment, the tail probability away from $\mu$ is still polynomial. The formal statement of the tail bound is shown in Lemma \ref{lem:concunb} in the appendix. Then the threshold $T$ is designed based on the high probability upper bound of $Z(\mathcal{D})$ to ensure that with high probability, $\Delta(\mathcal{D})$ is small. Regarding the result, we have the following remarks.

\begin{rmk}
	Here we comment on small and large $m$ limits. If $m=1$, the right hand side of \eqref{eq:msetail} becomes $O(1/n+(d/n^2\epsilon^2)^{1-1/p})$, which matches existing analysis on item-level DP for heavy-tailed random variables  \cite{kamath2020private}. For the opposite limit, with $m^{1-2/p}\gtrsim n^{2/p}\ln(nd)$, then the convergence rate is the same as the case with bounded support, indicating that the tail of sample distribution does not affect the error more than a constant factor.
\end{rmk}
\begin{rmk}
	Now we compare \eqref{eq:msetail} with the two-stage approach. Following the analysis in \cite{levy2021learning}, it can be shown that the bound of mean squared error in \cite{levy2021learning} is $\tilde{O}((d/(n^2\epsilon^2))(1/m+m^{4/p-2}n^{6/p}))$ (we refer to Appendix \ref{sec:baseline} for details). Therefore, we have achieved an improved rate in \eqref{eq:msetail}.
\end{rmk}

The theoretical results in this section are summarized as follows. If the support is bounded, our method has the same convergence rate as the existing method. For heavy-tailed distributions, our approach significantly reduces the error, since our method avoids the clipping process. In federated learning applications, it is common for gradients to have heavy-tailed distributions \cite{gurbuzbalaban2021heavy,simsekli2019tail,zhang2020adaptive,nguyen2023improved}, thus our method has the potential of improving the performance of federated learning under DP requirements. Apart from heavy-tailed distributions, another common characteristic in reality is that users are usually imbalanced. We analyze it in the next section.

\section{Analysis: Imbalanced Users}\label{sec:unbalanced}

Now we analyze the general case where $m_i$, $i=1,\ldots, n$ are different. Recall that for balanced users, we have defined $Z(\mathcal{D})$ in \eqref{eq:zdf} that finds the maximum distance from $\mathbf{y}_i(\mathcal{D})$ to their average $\bar{\mathbf{y}}(\mathcal{D})$. For imbalanced users, instead of taking the maximum, we define $Z_i$ separately for each $i$:
\begin{eqnarray}
	Z_i(\mathcal{D})=\norm{\bar{\mathbf{y}}(\mathcal{D})-\mathbf{y}_i(\mathcal{D})},
\end{eqnarray}
in which
	$\bar{\mathbf{y}}(\mathcal{D})=\sum_{i=1}^n w_i\mathbf{y}_i(\mathcal{D})$
is the average of samples all over the dataset. 

From now on, without loss of generality, suppose that users are arranged in ascending order of $m_i$, i.e. $m_1\leq \ldots \leq m_n$. Define
\begin{eqnarray}
	h(\mathcal{D}, k)=\frac{\sum_{i=n-k+1}^n w_i(T_i+Z_i(\mathcal{D}))}{\sum_{i=1}^{n-k} w_i}.
	\label{eq:h}
\end{eqnarray}
Similar to the case with balanced users, we analyze the sensitivity for datasets with no outliers, a few outliers, and other cases separately.

\emph{1) No outliers.} We show the following lemma.
\begin{lem}\label{lem:unbcommon}
	If
		$h(\mathcal{D}, 1)\leq \underset{i}{\min}(T_i-Z_i(\mathcal{D}))$,
	then $LS(\mathcal{D})\leq h(\mathcal{D}, 1)$.
\end{lem}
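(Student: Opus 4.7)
The plan is to reduce everything to the quadratic regime of the Huber losses and then apply first-order optimality. Under the hypothesis, $Z_i(\mathcal{D}) \le T_i - h(\mathcal{D},1) \le T_i$ for every $i$, so at $\mathbf{s} = \bar{\mathbf{y}}(\mathcal{D})$ each term $\phi_i$ lies in its quadratic piece with $\nabla_{\mathbf{s}} \phi_i(\bar{\mathbf{y}},\mathbf{y}_i) = \bar{\mathbf{y}} - \mathbf{y}_i$, and $\sum_i w_i(\bar{\mathbf{y}} - \mathbf{y}_i) = \mathbf{0}$ by the definition of $\bar{\mathbf{y}}$. Strict convexity of the objective then identifies $\hat{\mu}_0(\mathcal{D}) = \bar{\mathbf{y}}(\mathcal{D})$, which is the main lever for the rest of the argument.

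Next, fix a user index $j$ and an adjacent dataset $\mathcal{D}'$ in which only $\mathbf{y}_j$ is replaced by some $\mathbf{y}_j'$. The first-order optimality condition at $\hat{\mu}_0' := \hat{\mu}_0(\mathcal{D}')$ reads
\[
\sum_{i \ne j} w_i \nabla_{\mathbf{s}} \phi_i(\hat{\mu}_0',\mathbf{y}_i) + w_j \mathbf{g}_j' = \mathbf{0},
\]
where $\mathbf{g}_j' := \nabla_{\mathbf{s}} \phi_j(\hat{\mu}_0',\mathbf{y}_j')$ always satisfies $\norm{\mathbf{g}_j'} \le T_j$ by construction of the Huber loss. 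I would then run a self-consistency argument: suppose tentatively that $\norm{\hat{\mu}_0' - \hat{\mu}_0(\mathcal{D})} \le h(\mathcal{D},1)$. Then for each $i \ne j$ the triangle inequality and the hypothesis give $\norm{\hat{\mu}_0' - \mathbf{y}_i} \le h(\mathcal{D},1) + Z_i(\mathcal{D}) \le T_i$, so those users remain in the quadratic regime and $\nabla_{\mathbf{s}} \phi_i(\hat{\mu}_0',\mathbf{y}_i) = \hat{\mu}_0' - \mathbf{y}_i$. Substituting this and using $\hat{\mu}_0(\mathcal{D}) = \bar{\mathbf{y}}(\mathcal{D})$ yields the closed form
\[
\hat{\mu}_0' - \hat{\mu}_0(\mathcal{D}) = \frac{w_j}{\sum_{i \ne j} w_i}\bigl[(\hat{\mu}_0(\mathcal{D}) - \mathbf{y}_j) - \mathbf{g}_j'\bigr],
\]
whose norm is at most $w_j(T_j + Z_j(\mathcal{D}))/\sum_{i \ne j} w_i$.

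I would close the self-consistency loop via a continuous-deformation argument: interpolate $\mathbf{y}_j(t)$ from $\mathbf{y}_j$ to $\mathbf{y}_j'$ for $t \in [0,1]$. The induced minimizer $\hat{\mu}_0(t)$ is continuous in $t$ by uniqueness and standard stability for strictly convex programs, starts at $\hat{\mu}_0(\mathcal{D})$ where the bound holds trivially, and so if the bound ever failed it would first do so at equality while the quadratic-regime conditions for $i \ne j$ still hold by continuity; the strict bound derived above then yields a contradiction. Taking the worst case over $j$ and invoking the parameter selection from Section \ref{sec:unbalanced} (users ordered so that $m_1 \le \cdots \le m_n$, with $w_i$ increasing and $T_i$ decreasing in $m_i$), the dominating index is $j = n$, matching $h(\mathcal{D},1) = w_n(T_n + Z_n(\mathcal{D}))/\sum_{i \ne n} w_i$.

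The main obstacle I expect is making $\max_j w_j(T_j+Z_j(\mathcal{D}))/\sum_{i \ne j} w_i \le h(\mathcal{D},1)$ fully rigorous: because $Z_j(\mathcal{D})$ is not monotone in $j$ in general, pinning down $j=n$ as the worst case needs the monotonicity of $w_j/(1-w_j)$ combined with the hypothesis $h(\mathcal{D},1) \le T_i - Z_i(\mathcal{D})$ to dominate the smaller-index contributions by the $j=n$ term. A secondary subtlety is the sub-differentiability of $\phi_j$ exactly at the quadratic–linear transition $\norm{\hat{\mu}_0' - \mathbf{y}_j'} = T_j$, which I would handle by a small perturbation and a limit argument so that the first-order condition applies cleanly.
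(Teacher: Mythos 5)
Your proposal follows the same route as the paper's proof: first identify $\hat{\mu}_0(\mathcal{D})=\bar{\mathbf{y}}(\mathcal{D})$ by first-order optimality in the all-quadratic regime (the paper's Lemma~\ref{lem:ybar2}), then bound the displacement of the minimizer after one replacement by combining the $T_j$-bound on the replaced user's gradient with local strong convexity of the weighted objective near $\bar{\mathbf{y}}$. Your continuous-deformation/self-consistency device plays exactly the role of the paper's integration of $\nabla g$ along the segment joining the two minimizers (which produces a $\min\{\norm{\hat{\mu}_0(\mathcal{D})-\hat{\mu}_0(\mathcal{D}')},\min_i(T_i-Z_i)\}$ that the hypothesis then removes); the two mechanisms are interchangeable and both yield the per-user bound $w_j(T_j+Z_j(\mathcal{D}))/\sum_{i\neq j}w_i$, which your closed-form computation derives correctly. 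One small adjustment: set your self-consistency threshold at $\min_i(T_i-Z_i)$ (the radius on which the $i\neq j$ terms are guaranteed quadratic) rather than at $h(\mathcal{D},1)$, so that the closed form is available up to the first potential exit time.

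The obstacle you flag at the end is genuine, and you should know that the paper does not resolve it either: its proof stops at $LS(\mathcal{D})\leq\max_{j}\,w_j(T_j+Z_j(\mathcal{D}))/\sum_{i\neq j}w_i$ and silently identifies this maximum with $h(\mathcal{D},1)$, i.e.\ with the $j=n$ term. Your proposed fix via monotonicity of $w_j/(1-w_j)$ handles only the $T$-part: since $w_jT_j$ is non-decreasing and $\sum_{i\neq j}w_i$ is minimized at $j=n$, indeed $w_jT_j/\sum_{i\neq j}w_i\leq w_nT_n/\sum_{i\neq n}w_i$. But the $Z$-part cannot be dominated this way, because the hypothesis only gives $Z_j\leq T_j-h(\mathcal{D},1)$ and places no relation between $Z_j$ and $Z_n$. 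If $Z_n$ is small while some $Z_j$ with $j<n$ sits near its ceiling $T_j-h(\mathcal{D},1)$, the $j$-th term can exceed $h(\mathcal{D},1)$ (in the balanced specialization $w_i\equiv 1/n$, $T_i\equiv T$, by a factor approaching $2$). So the lemma as stated really requires either replacing $h(\mathcal{D},1)$ by $\max_j w_j(T_j+Z_j(\mathcal{D}))/\sum_{i\neq j}w_i$ (or defining $h$ with $\max_iZ_i$ in place of $Z_n$, as the balanced-case Lemma~\ref{lem:common} effectively does), or adding a hypothesis controlling the off-maximal $Z_j$. Your write-up is no less complete than the paper's on this point, but the resolution you sketch would not go through as described.
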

The general idea of the proof is similar to Lemma \ref{lem:common}. However, the details become more complex since now the samples are unbalanced. The detailed proof is shown in Appendix \ref{sec:unbcommon}.

\emph{2) A few outliers.} Similar to Lemma \ref{lem:outlier}, we find a neighboring dataset $\mathcal{D}^*$ that is well concentrated and then bounds the local sensitivity. The formal statement is shown in the following lemma.

\begin{lem}\label{lem:unboutlier}
	For a dataset $\mathcal{D}$, if there exists another dataset $\mathcal{D}^*$ such that 
		$h(\mathcal{D}^*, k+1)<\min_i(T_i-Z_i(\mathcal{D}^*))$,
	then
		$LS(\mathcal{D})\leq 2\max_i (w_iT_i)/\sum_{i=1}^{n-k-1}w_i$.
\end{lem}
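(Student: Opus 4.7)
The plan is to mirror the structure of Lemma \ref{lem:outlier} from the balanced setting, but carefully tracking the heterogeneous weights $w_i$ and thresholds $T_i$. Let $\mathcal{D}'$ be any user-level neighbor of $\mathcal{D}$, differing only in user $i_0$, and set $\mathbf{u}=\hat{\mu}_0(\mathcal{D})$, $\mathbf{u}'=\hat{\mu}_0(\mathcal{D}')$ and $\mathbf{u}^*=\hat{\mu}_0(\mathcal{D}^*)$. Define $F(\mathbf{s})=\sum_i w_i\phi_i(\mathbf{s},\mathbf{y}_i(\mathcal{D}))$ and $F'(\mathbf{s})$ analogously for $\mathcal{D}'$; the goal is to upper bound $\norm{\mathbf{u}-\mathbf{u}'}$. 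Under the (implicit) assumption $d_H(\mathcal{D},\mathcal{D}^*)\leq k$ used in the analogous balanced statement, both $\mathcal{D}$ and $\mathcal{D}'$ lie within Hamming distance $k+1$ from $\mathcal{D}^*$; let $\mathcal{B}$ collect the indices on which at least one of $\mathcal{D},\mathcal{D}'$ disagrees with $\mathcal{D}^*$, so $|\mathcal{B}|\leq k+1$.

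The first and most delicate step is to show that every user $i\notin\mathcal{B}$ sits in the quadratic regime of its Huber loss at both $\mathbf{u}$ and $\mathbf{u}'$. Since $\min_i(T_i-Z_i(\mathcal{D}^*))>0$, every user is in the quadratic regime at $\bar{\mathbf{y}}(\mathcal{D}^*)$, which makes it the unique root of $\sum_i w_i(\mathbf{s}-\mathbf{y}_i(\mathcal{D}^*))=0$; hence $\mathbf{u}^*=\bar{\mathbf{y}}(\mathcal{D}^*)$. Re-running the gradient-balancing argument behind Lemma \ref{lem:unbcommon} with up to $k+1$ (rather than a single) adversarial user replacements from $\mathcal{D}^*$, each replaced user contributes at most $w_i(T_i+Z_i(\mathcal{D}^*))$ to the perturbation of the first-order condition at $\mathbf{u}^*$, while the untouched users provide a restoring Hessian $\sum_{i\notin\mathcal{B}} w_i\mathbf{I}$ locally around $\mathbf{u}^*$. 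The aggregate then sums to at most the numerator of $h(\mathcal{D}^*,k+1)$ over its denominator, giving $\norm{\mathbf{u}-\mathbf{u}^*}\leq h(\mathcal{D}^*,k+1)$ and the same bound for $\mathbf{u}'$. The triangle inequality and the hypothesis now imply, for every $i\notin\mathcal{B}$,
\begin{equation*}
\norm{\mathbf{u}-\mathbf{y}_i(\mathcal{D})}\leq \norm{\mathbf{u}-\mathbf{u}^*}+Z_i(\mathcal{D}^*)<T_i,
\end{equation*}
and likewise at $\mathbf{u}'$, confirming the co-activity claim.

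With co-activity secured, the remainder is a standard strong-convexity estimate. Along the segment $[\mathbf{u},\mathbf{u}']$ the Hessian of $F$ dominates $\sum_{i\notin\mathcal{B}}w_i\,\mathbf{I}$. Worst-casing over $\mathcal{B}$ with $|\mathcal{B}|\leq k+1$ pulls out the $k+1$ largest weights; since users are arranged in ascending $m_i$ and the selection rule makes $w_i$ non-decreasing in $m_i$, what remains is at least $\sum_{i=1}^{n-k-1}w_i$. Meanwhile $\nabla F(\mathbf{u})=\nabla F'(\mathbf{u}')=0$ gives
\begin{equation*}
\nabla F(\mathbf{u}')=w_{i_0}\bigl(\nabla\phi_{i_0}(\mathbf{u}',\mathbf{y}_{i_0}(\mathcal{D}))-\nabla\phi_{i_0}(\mathbf{u}',\mathbf{y}_{i_0}(\mathcal{D}'))\bigr),
\end{equation*}
whose norm is at most $2w_{i_0}T_{i_0}\leq 2\max_i(w_iT_i)$ because every Huber gradient has length capped by $T_i$. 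Chaining these with $\langle \nabla F(\mathbf{u}')-\nabla F(\mathbf{u}),\mathbf{u}'-\mathbf{u}\rangle\geq \bigl(\sum_{i=1}^{n-k-1}w_i\bigr)\norm{\mathbf{u}-\mathbf{u}'}^2$ and dividing by $\norm{\mathbf{u}-\mathbf{u}'}$ delivers the claimed bound.

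The hard part will be the rigorous execution of the second paragraph: one must re-derive the fixed-point localization of Lemma \ref{lem:unbcommon} while allowing $k+1$ adversarial replacements, and verify that the numerator of $h(\mathcal{D}^*,k+1)$ truly dominates the worst-case aggregated perturbation. This hinges on showing that the adversary maximizing $\sum_{i\in\mathcal{B}}w_i(T_i+Z_i(\mathcal{D}^*))$ selects the top-$(k+1)$ indices — a step where the monotonicity dictated by the weight-selection rule becomes essential. A secondary bookkeeping subtlety is that $|\mathcal{B}|$ may equal $k+1$ rather than $k$: the extra slot in $h(\cdot,k+1)$ is precisely what absorbs the additional disagreement between $\mathcal{D}$ and $\mathcal{D}'$, so a single localization bound simultaneously controls $\mathbf{u}$ and $\mathbf{u}'$.
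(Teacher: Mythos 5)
Your proof is correct and follows essentially the same route as the paper's: localize $\hat{\mu}_0(\mathcal{D})$ and $\hat{\mu}_0(\mathcal{D}')$ near $\hat{\mu}_0(\mathcal{D}^*)=\bar{\mathbf{y}}(\mathcal{D}^*)$ via the bound $\omega(\mathcal{D}^*,\cdot)\leq h(\mathcal{D}^*,\cdot)$, then combine the $2w_{i_0}T_{i_0}$ cap on the single-user gradient perturbation with local strong convexity $\succeq\bigl(\sum_{i=1}^{n-k-1}w_i\bigr)\mathbf{I}$ from the untouched users; your only departure is that you avoid the paper's case split on whether the replaced user lies in $I$ by bounding both Huber gradients at $\mathbf{u}'$ by $T_{i_0}$ directly, which is a harmless streamlining. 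The two caveats you flag --- that the statement implicitly needs $d_H(\mathcal{D},\mathcal{D}^*)\leq k$, and that $h(\mathcal{D}^*,k+1)$ dominates the worst-case perturbation ratio only because the weight/threshold rule makes the top-indexed users the extremal choice --- are both present, and equally unremarked, in the paper's own argument.
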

The proof of Lemma \ref{lem:unboutlier} is shown in Appendix \ref{sec:unboutlier}, which just follows the proof of Lemma \ref{lem:outlier}.

\emph{3) Other cases.} Finally, for all cases not satisfying the conditions in Lemma \ref{lem:unbcommon} and \ref{lem:unboutlier}, we can just bound the local sensitivity with $2R_c$, i.e. $LS(\mathcal{D})\leq 2R_c$.

Similar to the case with balanced users, now we define
\begin{eqnarray}
	\Delta(\mathcal{D})=\min\{k|\exists \mathcal{D}^*, d_H(\mathcal{D}, \mathcal{D}^*)=k, h(\mathcal{D}^*, k_0)<\min_i(T_i-Z_i(\mathcal{D}))\},
	\label{eq:deltaunb}
\end{eqnarray}
in which $k_0$ is any integer, and can be viewed as a design parameter. Correspondingly, the smooth sensitivity $G(\mathcal{D}, k)$ is defined as follows.

\begin{defi}\label{def:Gunb}
	(a) If $h(\mathcal{D}, 1)\leq \underset{i}{\min}(T_i-Z_i(\mathcal{D}))$, then $G(\mathcal{D}, 0) = h(\mathcal{D}, 1)$;
	
	(b)	If the conditions in (a) are not satisfied, and $\Delta(\mathcal{D})$ exists, then for all $k\leq k_0-\Delta(\mathcal{D}) - 1$,
	\begin{eqnarray}
		G(\mathcal{D}, k)=\frac{2\underset{i\in [n]}{\max}w_iT_i}{\sum_{i=1}^{n-\Delta(\mathcal{D})-k-1}w_i};
	\end{eqnarray} 
	
	(c) If the conditions in both (a) and (b) are not satisfied, then 
		$G(\mathcal{D}, k) = 2R_c$.
\end{defi}

We still use the same settings of $\alpha$ and $\beta$ as in the case with balanced samples. With smooth sensitivity calculated using \eqref{eq:S}, the privacy requirement is satisfied:
\begin{thm}\label{thm:dpunb}
	Let $\mathbf{W}\sim \mathcal{N}(0, (S(\mathcal{D})^2/\alpha^2)\mathbf{I})$, in which $S(\mathcal{D}) = \underset{k}{\max} e^{-\beta k}G(\mathcal{D}, k)$, then the estimator $\hat{\mu}$ is $(\epsilon, \delta)$-DP.
\end{thm}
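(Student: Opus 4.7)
The plan is to verify that $S(\mathcal{D})=\max_k e^{-\beta k}G(\mathcal{D},k)$ is a valid $\beta$-smooth sensitivity of the clipped estimator $\Clip(\hat{\mu}_0(\cdot),R_c)$, from which the theorem follows by invoking the Gaussian smooth-sensitivity mechanism of \cite{nissim2007smooth} with the $(\alpha,\beta)$ pair specified in \eqref{eq:param}. This mirrors the template already used to prove Theorem \ref{thm:dp} in the balanced case, but with the upper bounds on $LS(\mathcal{D})$ supplied by Lemmas \ref{lem:unbcommon} and \ref{lem:unboutlier} in place of Lemmas \ref{lem:common} and \ref{lem:outlier}.

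First, I would establish the pointwise domination $S(\mathcal{D})\geq LS(\mathcal{D})$ by noting $S(\mathcal{D})\geq G(\mathcal{D},0)$ and checking $G(\mathcal{D},0)\geq LS(\mathcal{D})$ in each of the three branches of Definition \ref{def:Gunb}. Case (a) is precisely Lemma \ref{lem:unbcommon}. In case (b), applying Lemma \ref{lem:unboutlier} with $k=\Delta(\mathcal{D})$ and the witness $\mathcal{D}^*$ attaining the minimum in \eqref{eq:deltaunb} yields $LS(\mathcal{D})\leq 2\max_i w_iT_i/\sum_{i=1}^{n-\Delta(\mathcal{D})-1}w_i=G(\mathcal{D},0)$. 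Case (c) is trivial because the post-clipping output lies in $B_d(\mathbf{0},R_c)$, so $LS(\mathcal{D})\leq 2R_c$.

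The heart of the argument is the smoothness inequality $S(\mathcal{D})\leq e^\beta S(\mathcal{D}')$ for user-level neighbors $\mathcal{D},\mathcal{D}'$. I would derive it from the structural claim $G(\mathcal{D},k)\leq G(\mathcal{D}',k+1)$ for every $k$. The key observation is $|\Delta(\mathcal{D})-\Delta(\mathcal{D}')|\leq 1$: any witness $\mathcal{D}^*$ for $\Delta(\mathcal{D})$ satisfies $d_H(\mathcal{D}',\mathcal{D}^*)\leq 1+\Delta(\mathcal{D})$ by the triangle inequality on Hamming distance, and symmetrically. Thus in case (b), the denominator $\sum_{i=1}^{n-\Delta(\mathcal{D})-k-1}w_i$ in $G(\mathcal{D},k)$ is at least $\sum_{i=1}^{n-\Delta(\mathcal{D}')-(k+1)-1}w_i$, giving $G(\mathcal{D},k)\leq G(\mathcal{D}',k+1)$. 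Once the structural claim is in hand,
$$S(\mathcal{D})=\max_k e^{-\beta k}G(\mathcal{D},k)\leq \max_k e^{-\beta k}G(\mathcal{D}',k+1)=e^\beta\max_k e^{-\beta(k+1)}G(\mathcal{D}',k+1)\leq e^\beta S(\mathcal{D}'),$$
which completes the verification that $S$ is a $\beta$-smooth upper bound on $LS$. Adding Gaussian noise of scale $S(\mathcal{D})/\alpha$ with the $(\alpha,\beta)$ calibration of \eqref{eq:param} then yields user-level $(\epsilon,\delta)$-DP by the standard smooth-sensitivity guarantee.

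I expect the main obstacle to be the case analysis underlying the structural claim: the pairs $(\mathcal{D},k)$ and $(\mathcal{D}',k+1)$ need not fall into the same branch of Definition \ref{def:Gunb}, so one has to check $G(\mathcal{D},k)\leq G(\mathcal{D}',k+1)$ across every possible transition. In particular, one must confirm that the fallback $2R_c$ in case (c) dominates the values arising in cases (a) and (b) for the relevant ranges of $\Delta$ and $k$, so that switching regimes between $\mathcal{D}$ and $\mathcal{D}'$ never breaks monotonicity; conversely, when both $(\mathcal{D},k)$ and $(\mathcal{D}',k+1)$ sit in case (a) the inequality reduces to a direct manipulation of $h(\mathcal{D},1)$ versus $h(\mathcal{D}',1)$, which is a routine consequence of a single user-wise mean changing by at most the diameter of the clipping region.
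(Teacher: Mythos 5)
Your proposal is correct and follows essentially the same route the paper uses: verify the two conditions of Definition \ref{def:smooth} (domination of $LS$ via Lemmas \ref{lem:unbcommon} and \ref{lem:unboutlier}, and $\beta$-smoothness via $\Delta(\mathcal{D})\leq\Delta(\mathcal{D}')+1$ together with the index-shift argument giving $G(\mathcal{D},k)\leq G(\mathcal{D}',k+1)$), then invoke the Gaussian smooth-sensitivity calibration of \cite{nissim2007smooth}. This is exactly the template of the paper's proof of Theorem \ref{thm:dp} in Appendix \ref{sec:dp}, which is the argument the paper implicitly relies on for Theorem \ref{thm:dpunb}, including the same cross-branch checks you flag at the end.
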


Now we analyze the convergence of the algorithm. We begin with Assumption \ref{ass:unbalance}. Intuitively, this assumption requires that the users can not be too unbalanced. At least half samples belong to users whose sample sizes are not very large.
\begin{ass}\label{ass:unbalance}
	Suppose there exists a constant $\gamma\geq 1$. Let
		$k_c=\min\{i|m_i>\gamma N/n \}$,
	then 
		$\sum_{i=k_c}^n m_i\leq N/2$.
\end{ass}

In Assumption \ref{ass:unbalance}, $\gamma$ can be viewed as the degree of imbalance. For a better explanation, we provide the following examples:
\begin{itemize}
	\item If users are balanced, then $\gamma = 1$;
	\item If the $i$-th user has $ki$ samples (which means that the number of items belonging to each user is linear in its order), then for large $n$, $\gamma$ is approximately $\sqrt{2}$.
\end{itemize}
In general, $\gamma$ is large if users are highly imbalanced. Under Assumption \ref{ass:unbalance}, the convergence of mean squared error is shown in Theorem \ref{thm:mseunb}.

\begin{thm}\label{thm:mseunb}
	Let the weights of users in \eqref{eq:mu0unb} be
		$w_i=m_i\wedge m_c/(\sum_{j=1}^n m_j\wedge m_c)$,
	in which $m_c=\gamma N/n$. Moreover, let
	\begin{eqnarray}
		T_i=C_T\sqrt{\frac{R^2\ln(Nn^2(d+1))}{m_i\wedge m_c}},
		\label{eq:Ti}
	\end{eqnarray}
	in which $C_T>16\sqrt{2/3}$. In \eqref{eq:deltaunb}, $k_0=\lfloor n/8\gamma\rfloor$. 
	
	With the parameters above, if $n>8\gamma(1+(1/2\beta)\ln(Nn))$, then under Assumption \ref{ass:bounded} and \ref{ass:unbalance},
	\begin{eqnarray}
		\mathbb{E}\left[\norm{\hat{\mu}(\mathcal{D})-\mu}^2\right]\lesssim \frac{R^2}{N}+\frac{dR^2\gamma}{Nn\epsilon^2}\ln^2(Nnd)\ln \frac{1}{\delta}.
		\label{eq:mseunb}
	\end{eqnarray}
\end{thm}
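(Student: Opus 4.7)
The plan is to adapt the proof of Theorem~\ref{thm:mse} to the imbalanced, weighted setting. Decompose
\[
\mathbb{E}\|\hat{\mu}-\mu\|^2 \leq 2\,\mathbb{E}\|\Clip(\hat{\mu}_0, R_c)-\mu\|^2 + 2d\,\mathbb{E}[S(\mathcal{D})^2]/\alpha^2
\]
and work on a high-probability concentration event $\mathcal{E}$. Since each $\mathbf{y}_i(\mathcal{D}) - \mu$ is an average of $m_i$ i.i.d.\ centered vectors in $B_d(\mathbf{0},R)$, a vector Hoeffding bound together with a union bound over users produces $\mathcal{E}$ with $\mathbb{P}(\mathcal{E})\geq 1-1/(Nn)$ on which $\|\mathbf{y}_i-\mu\| \lesssim R\sqrt{\ln(Nn^2(d+1))/m_i}$ for every $i$. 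Because $T_i$ uses $m_i \wedge m_c \leq m_i$ in the denominator with the same logarithm, the choice $C_T > 16\sqrt{2/3}$ forces $Z_i(\mathcal{D}) \leq T_i/4$ for all $i$ on $\mathcal{E}$, so condition (a) of Definition~\ref{def:Gunb} holds and $\Delta(\mathcal{D}) = 0$.

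\textbf{Non-private error.} On $\mathcal{E}$ every $\mathbf{y}_i$ lies in the quadratic zone of every Huber loss around the weighted mean $\bar{\mathbf{y}}$, so the first-order optimality gives $\hat{\mu}_0(\mathcal{D}) = \bar{\mathbf{y}}(\mathcal{D}) = \sum_i w_i\mathbf{y}_i$, and
\[
\mathbb{E}\|\hat{\mu}_0-\mu\|^2 = \sum_i w_i^2\,\mathbb{E}\|\mathbf{y}_i-\mu\|^2 \leq \frac{R^2}{W^2}\sum_i \frac{(m_i\wedge m_c)^2}{m_i} \leq \frac{R^2}{W},
\]
where $W=\sum_j (m_j\wedge m_c)$. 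Assumption~\ref{ass:unbalance} implies that users with $m_i\leq m_c$ supply at least $N/2$ of the samples, hence $W\geq N/2$ and the non-private error is $O(R^2/N)$. The complementary event $\mathcal{E}^c$ contributes at most $R_c^2\cdot\mathbb{P}(\mathcal{E}^c)=O(R^2/(Nn))$, which is absorbed.

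\textbf{Smooth sensitivity and noise.} On $\mathcal{E}$, case (a) of Definition~\ref{def:Gunb} gives $G(\mathcal{D},0)=h(\mathcal{D},1)\lesssim w_n T_n$; for $1\leq k\leq k_0-1$, case (b) gives $G(\mathcal{D},k)=2\max_i(w_iT_i)/\sum_{j\leq n-k-1}w_j$. A direct computation yields $\max_i w_iT_i \leq C_T R\sqrt{m_c\ln(Nn^2(d+1))}/W \lesssim R\sqrt{\gamma\ln(Nnd)/(Nn)}$. Assumption~\ref{ass:unbalance} bounds the number of users with $m_i>m_c$ by $n/(2\gamma)$, so with $k_0=\lfloor n/(8\gamma)\rfloor$ the denominator in case (b) stays $\geq 1/4$ for every relevant $k$. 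For $k\geq k_0$, case (c) gives $G=2R_c$, but the hypothesis $n>8\gamma(1+(1/(2\beta))\ln(Nn))$ forces $e^{-\beta k_0}\cdot 2R_c\lesssim R/\sqrt{Nn}$, which is dominated by the case (b) bound. Combining these estimates yields $\mathbb{E}[S(\mathcal{D})^2]\lesssim R^2\gamma\ln^2(Nnd)/(Nn)$, where the second logarithm emerges from tracking the crossover between cases (b) and (c) and from the integration against the $\mathcal{E}^c$ contribution; multiplying by $d/\alpha^2$ with $1/\alpha^2\lesssim \ln(1/\delta)/\epsilon^2$ yields the noise term in \eqref{eq:mseunb}.

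\textbf{Main obstacle.} The principal difficulty is the coupled smooth sensitivity analysis: the weights $w_i$, thresholds $T_i$, and the case (b) denominator all depend on the per-user sample sizes, while the exponential damping $e^{-\beta k}$ must absorb the $2R_c$ jump in case (c) at exactly the cutoff $k_0=\lfloor n/(8\gamma)\rfloor$. Assumption~\ref{ass:unbalance} (through $\gamma$) and the lower bound on $n$ conspire to make this cutoff correct---large enough that case (c) is exponentially suppressed, yet small enough that the light users dominate the denominator in case (b). Verifying this balance cleanly, propagating the factor $\gamma$ faithfully through both the non-private error and the smooth sensitivity, and matching the constants so that $C_T>16\sqrt{2/3}$ suffices, is the most delicate step.
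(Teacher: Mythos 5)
Your proposal is correct and follows essentially the same route as the paper's proof in Appendix~\ref{sec:mseunb}: a union-bound concentration event on which $\hat{\mu}_0(\mathcal{D})=\sum_i w_i\mathbf{y}_i$ (Lemma~\ref{lem:Zub} and Lemma~\ref{lem:ybar2}), the variance bound via $N_c=\sum_j m_j\wedge m_c\geq N/2$ from Assumption~\ref{ass:unbalance}, the smooth-sensitivity bound through $\max_i w_iT_i\lesssim R\sqrt{\gamma\ln(Nnd)/(Nn)}$ with the case-(b) denominator kept bounded below for $k\leq k_0$, and the lower bound on $n$ suppressing the $2R_c$ case. The one imprecision is your claim that the case-(a) condition $h(\mathcal{D},1)\leq\min_i(T_i-Z_i)$ yields $\Delta(\mathcal{D})=0$: by \eqref{eq:deltaunb} this actually requires $h(\mathcal{D},k_0)<\min_i(T_i-Z_i)$ with $h$ increasing in $k$ (the paper's Lemma~\ref{lem:condhold}), though the estimates you already carry out for case (b) are exactly what is needed to verify it.
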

The proof of Theorem \ref{thm:mseunb} is shown in Appendix \ref{sec:mseunb}. If users are balanced, then $\gamma = 1$, with $N=nm$, \eqref{eq:mseunb} reduces to \eqref{eq:mse}. From \eqref{eq:mseunb} and Assumption \ref{ass:unbalance} it can be observed that our method is much less sensitive to a single large $m_i$. As long as $m_i$ are not very large for most users, the convergence rate of mean squared error is not affected. There are two intuitive reasons. Firstly, $w_i$ is upper bounded by $m_c/(\sum_{j=1}^n m_j\wedge m_c)$, thus the worst-case sensitivity is controlled. Secondly, $T_i$ are set adaptively to achieve a good tradeoff between sensitivity and bias. With larger $m_i$, smaller $T_i$ is used, and vice versa.  We refer to Appendix \ref{sec:baseline} for comparison with the two-stage approach.

\section{Numerical Examples}\label{sec:numerical}

In this section, we show numerical experiments. We compare the performance of our new Huber loss minimization approach (denoted as HLM) versus the two-stage approach proposed in \cite{levy2021learning}, called Winsorized Mean Estimator (denoted as WME).

\subsection{Balanced Users}
Here all users have the same sizes of local datasets. In the following experiments, we fix $\epsilon=1$ and $\delta = 10^{-5}$. For a fair comparison, the parameter $T$ for our method as well as $\tau$ in \cite{levy2021learning} are both tuned optimally for each case. Figure \ref{fig:balanced} shows the curve of mean squared error. Let the number of users $n$ be either $1,000$ and $10,000$. In each curve, $n$ is fixed, while the number of samples per user $m$ varies from $1$ to $1,000$. The results are plotted in logarithm scale. Figure \ref{fig:balanced}(a)-(c) show the results of uniform distribution in $[-1,1]$, Gaussian distribution $\mathcal{N}(0,1)$, and the Lomax distribution, whose pdf is $f(x)=a/(1+x)^{a+1}$ (we use $a=4$ in Figure \ref{fig:balanced}(c)). Figure \ref{fig:balanced} (d)-(f) shows the corresponding experiments with dimensionality $d=3$. Finally, Figure \ref{fig:balanced} (g) and (h) show the results using the IPUMS dataset \cite{ipums} for total income and salary, respectively, which are typical examples of data following heavy-tailed distributions.

\begin{figure}[h!]
	\begin{subfigure}{0.24\linewidth}
		\includegraphics[width=\linewidth,height=0.8\linewidth]{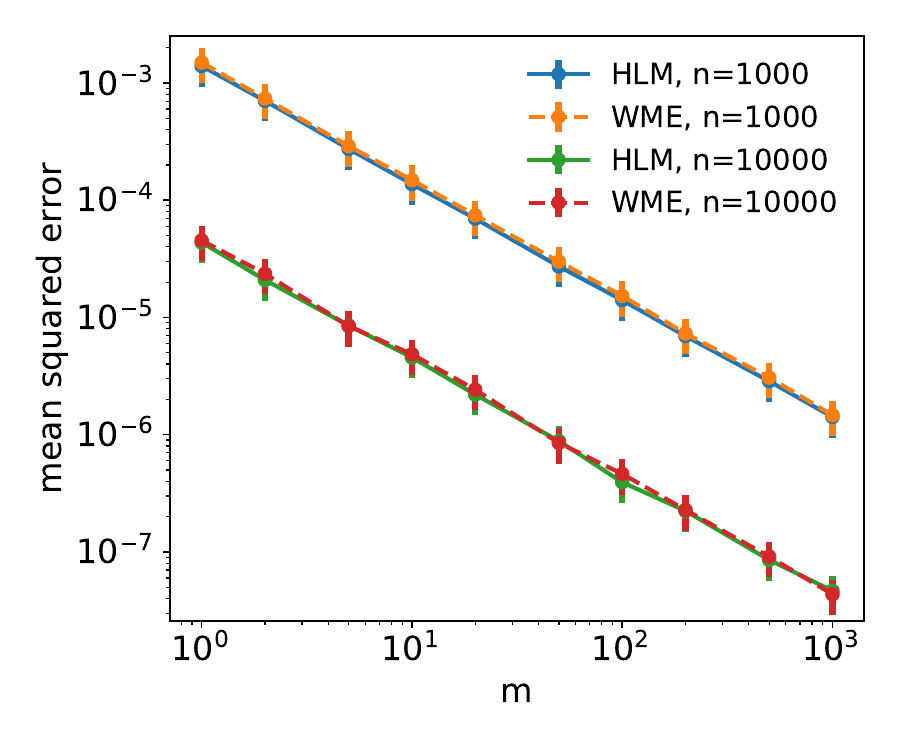}\caption{Uniform, $d=1$}
	\end{subfigure}
	\begin{subfigure}{0.24\linewidth}
		\includegraphics[width=\linewidth,height=0.8\linewidth]{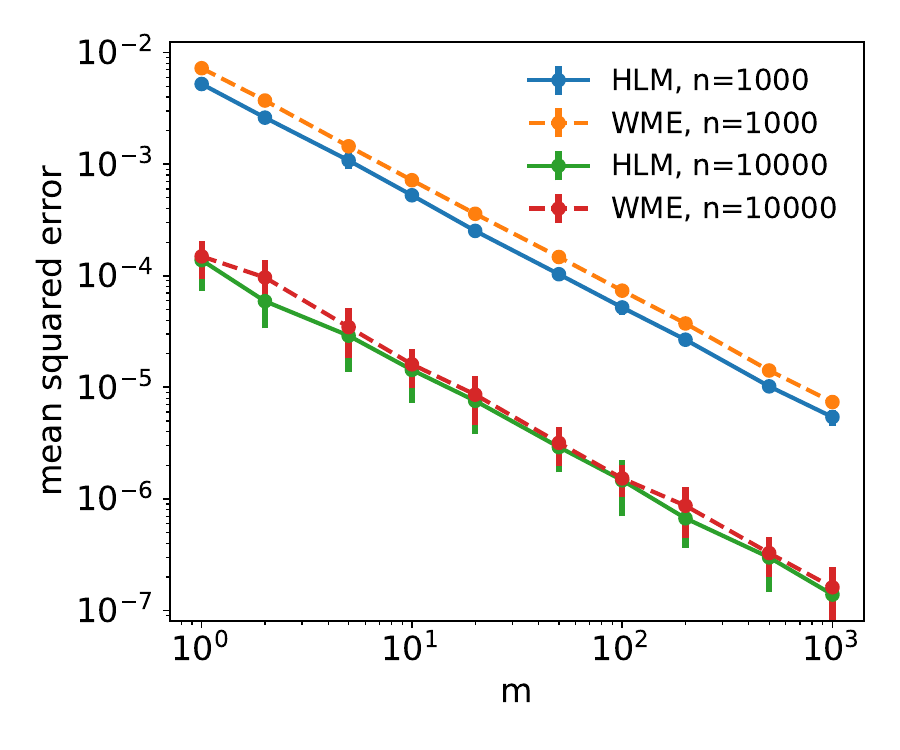}\caption{Gaussian, $d=1$}
	\end{subfigure}	
	\begin{subfigure}{0.24\linewidth}
		\includegraphics[width=\linewidth,height=0.8\linewidth]{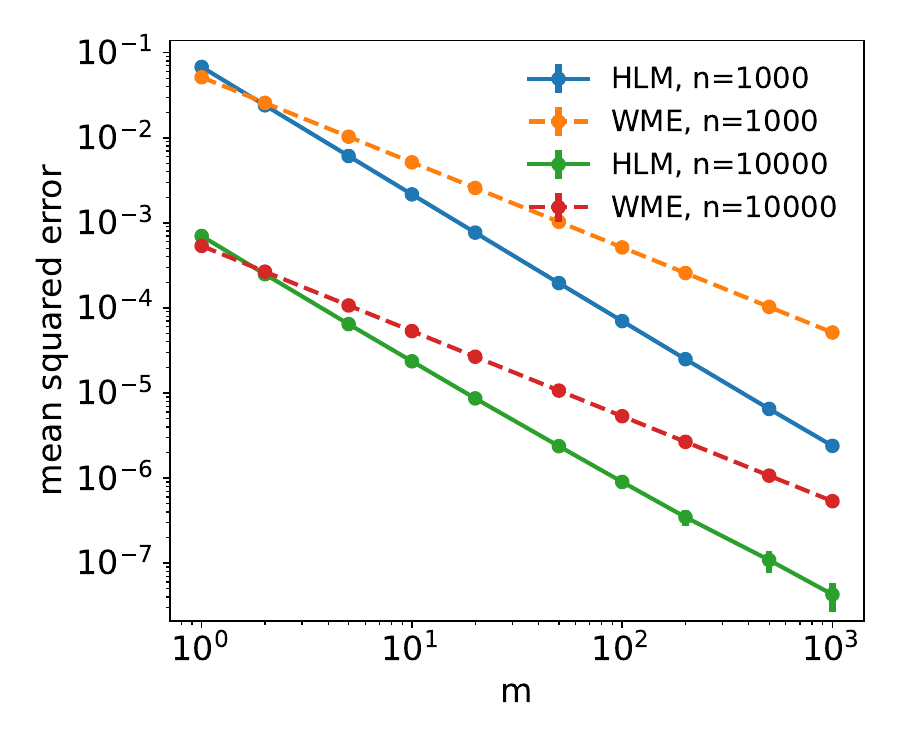}\caption{Lomax, $d=1$}
	\end{subfigure}	
	\begin{subfigure}{0.24\linewidth}
	\includegraphics[width=\linewidth,height=0.8\linewidth]{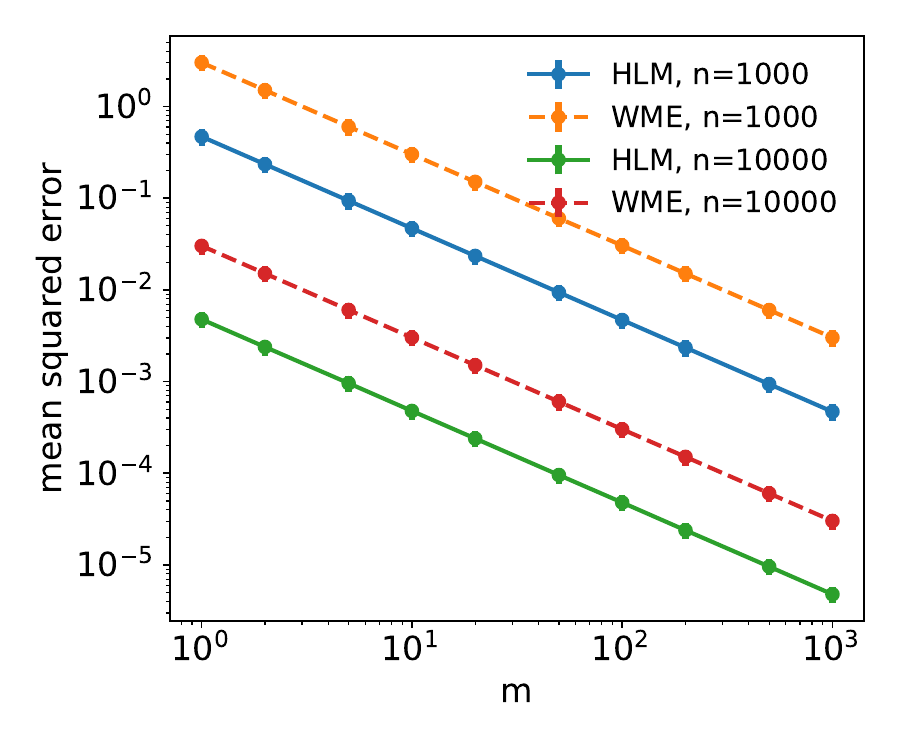}\caption{Uniform, $d=3$}
\end{subfigure}
\\
\begin{subfigure}{0.24\linewidth}
	\includegraphics[width=\linewidth,height=0.8\linewidth]{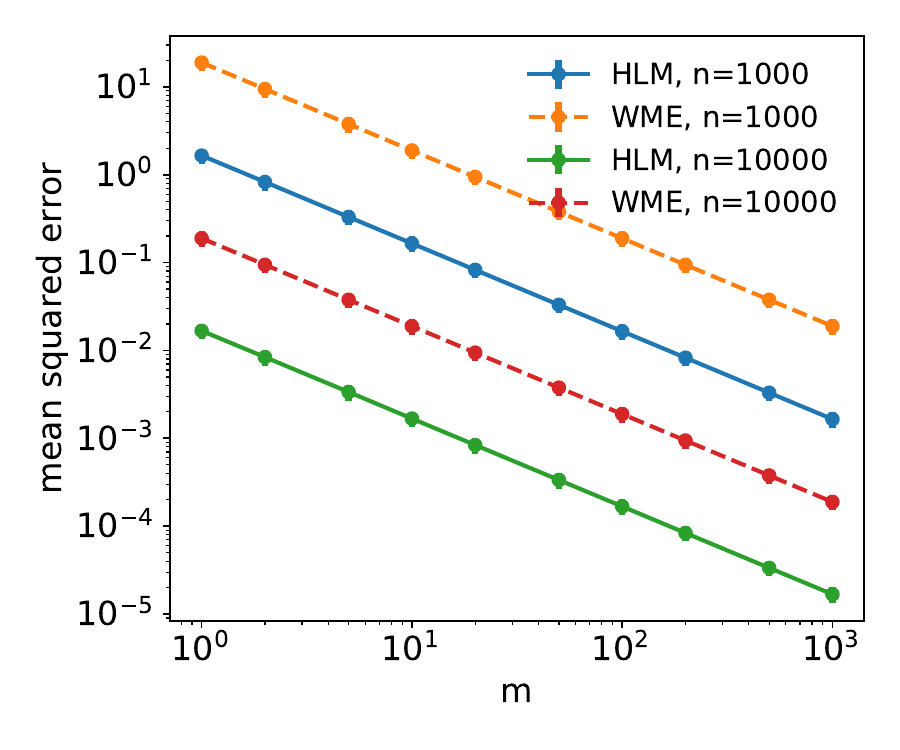}\caption{Gaussian, $d=3$}
\end{subfigure}	
\begin{subfigure}{0.24\linewidth}
	\includegraphics[width=\linewidth,height=0.8\linewidth]{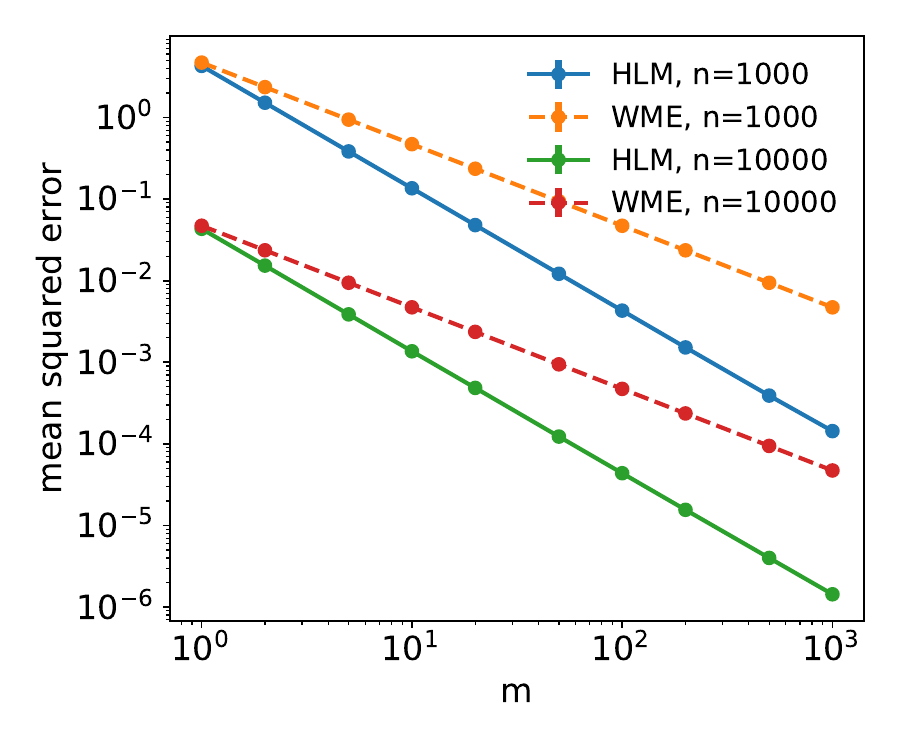}\caption{Lomax, $d=3$}
\end{subfigure}
\begin{subfigure}{0.24\linewidth}
	\includegraphics[width=\linewidth,height=0.8\linewidth]{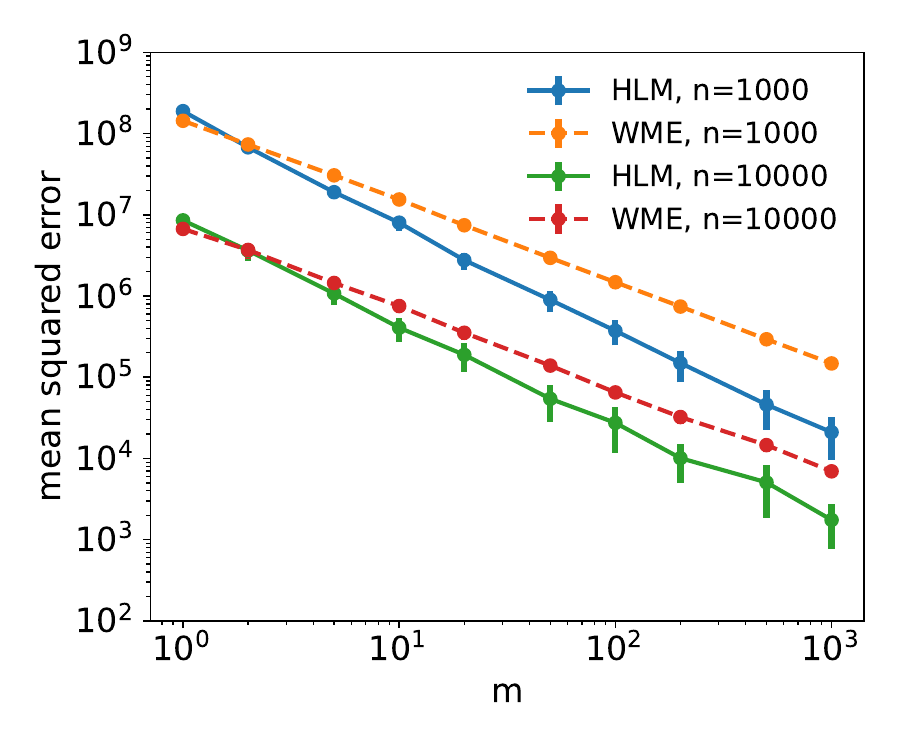}\caption{IPUMS, total income}
\end{subfigure}	
\begin{subfigure}{0.24\linewidth}
	\includegraphics[width=\linewidth,height=0.8\linewidth]{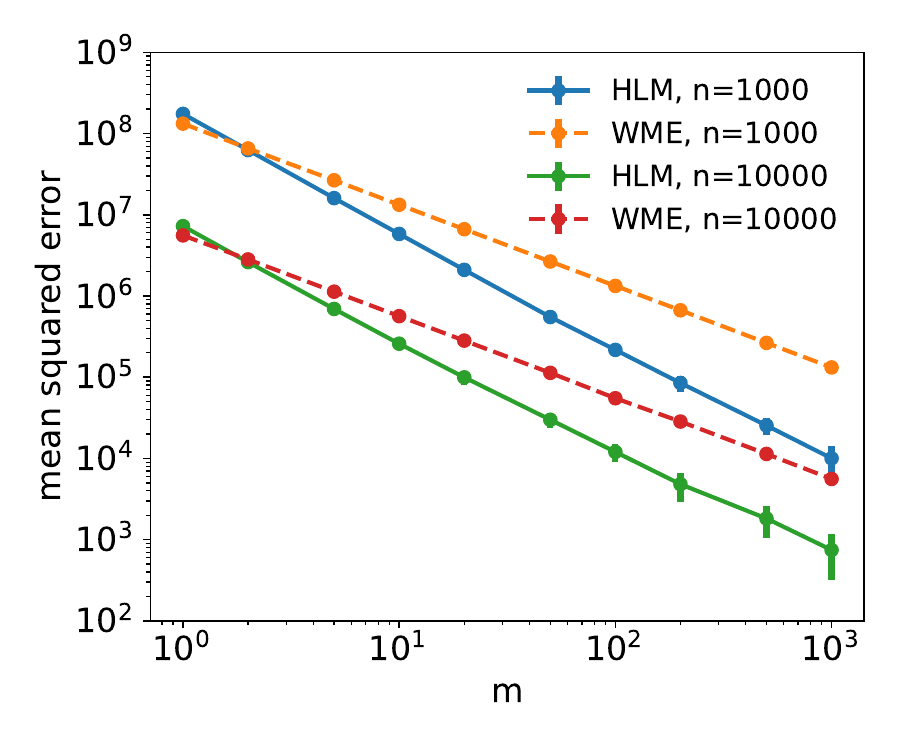}\caption{IPUMS, salary}
\end{subfigure}			
	\caption{Convergence of mean squared error with balanced users.}\label{fig:balanced}	
	\vspace{-4mm}
\end{figure}

Figure \ref{fig:balanced} (a) and (b) show that for one-dimensional uniform and Gaussian distribution, the Huber loss minimization approach has nearly the same performance as the two-stage method. Our explanation is that uniform and Gaussian distributions are symmetric, with no tails or light tails, thus the clipping operation does not introduce additional bias. However, for heavy-tailed and skewed distribution, such as Lomax distribution, our new method has a significantly faster convergence rate than the two-stage method. These results agree with the theoretical analysis, which shows that our method reduces the clipping bias. With higher dimensionality, Figure \ref{fig:balanced}(d)-(f) show that the advantage of the practical performance of our method becomes more obvious. 

\subsection{Imbalanced Users}
Now we show the performance with unbalanced users. For some $\gamma\geq 1$, let $s_i=\lceil N(i/n)^\gamma\rceil $ for $i=0,\ldots, n$, and $m_i=s_i-s_{i-1}$ for $i=1,\ldots,n$. It can be shown that Assumption \ref{ass:unbalance} is satisfied with $\gamma$.  Therefore, according to the analysis in Section \ref{sec:unbalanced}, we let $w_i=m_i\wedge m_c/\sum_j m_j\wedge m_c$, with $m_c=\gamma N/n$, and $T_i=A/\sqrt{m_i\wedge m_c}$, in which $A$ is tuned optimally for each case. The selection of $T_i$ may be slightly different from \eqref{eq:Ti} in Theorem \ref{thm:mseunb}. In Theorem \ref{thm:mseunb}, $T_i$ is selected to minimize the theoretical upper bound. To ensure that the analysis is mathematically rigorous, the upper bound of estimation error is larger than the truth. Therefore, the optimal value of $T_i$ in practice is slightly different from that derived in theories. Note that such parameter tuning does not require additional privacy budget since in each experiment, $T_i$ are hyperparameters that is fixed before knowing the value of each sample. They are not determined adaptively based on the data. Figure \ref{fig:unbalance} shows the growth curve of mean squared error with respect to $\gamma$.

\begin{figure}[h!]
	\begin{subfigure}{0.24\linewidth}
	\includegraphics[width=\linewidth,height=0.8\linewidth]{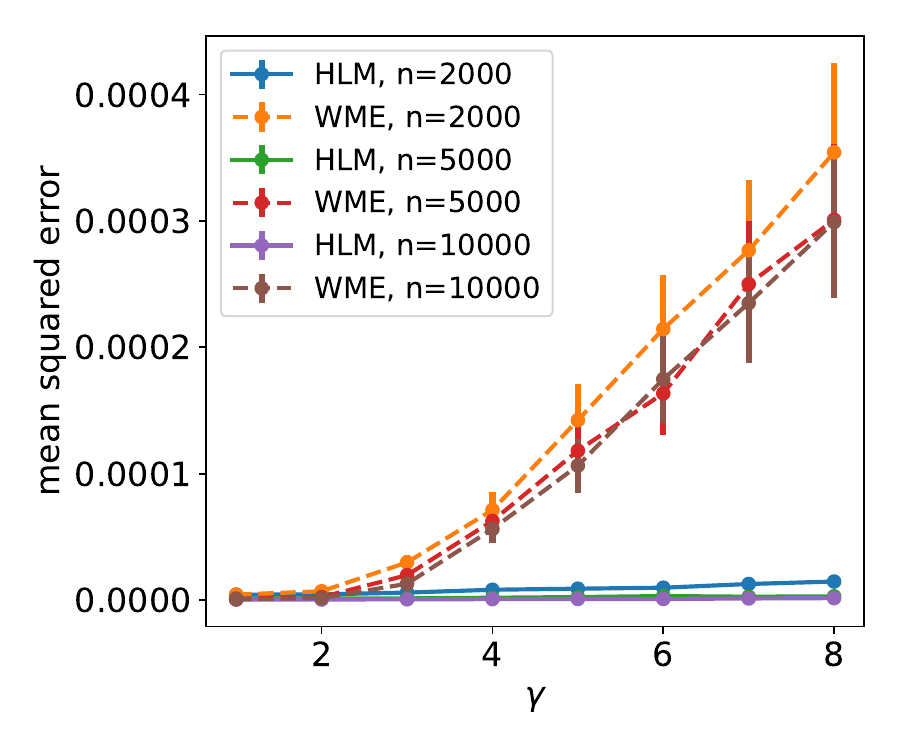}\caption{Uniform.}
\end{subfigure}
\begin{subfigure}{0.24\linewidth}
	\includegraphics[width=\linewidth,height=0.8\linewidth]{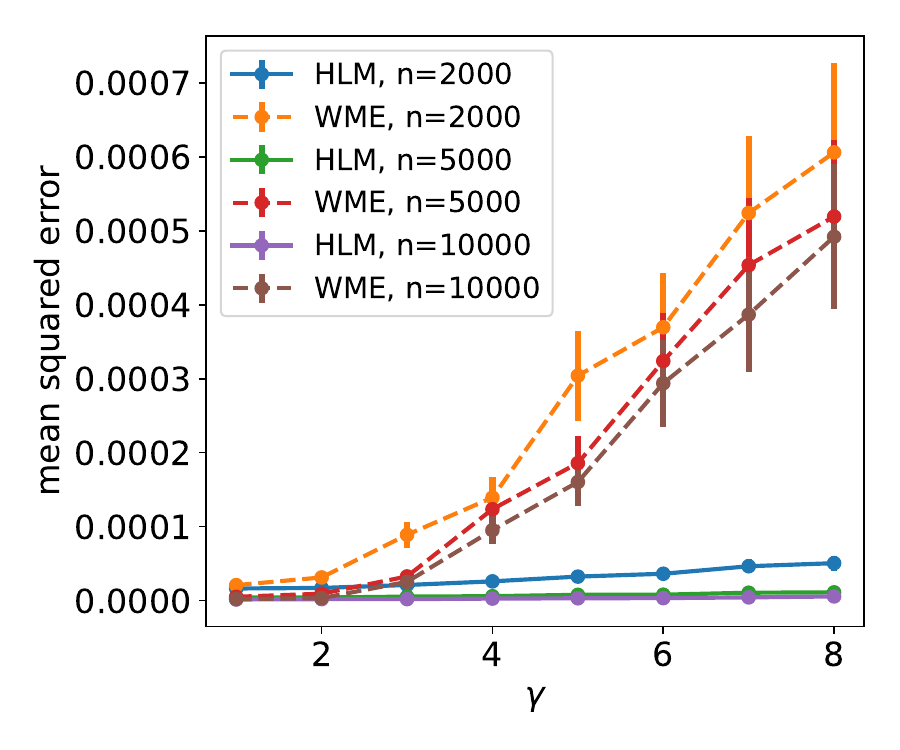}\caption{Gaussian.}
\end{subfigure}	
\begin{subfigure}{0.24\linewidth}
	\includegraphics[width=\linewidth,height=0.8\linewidth]{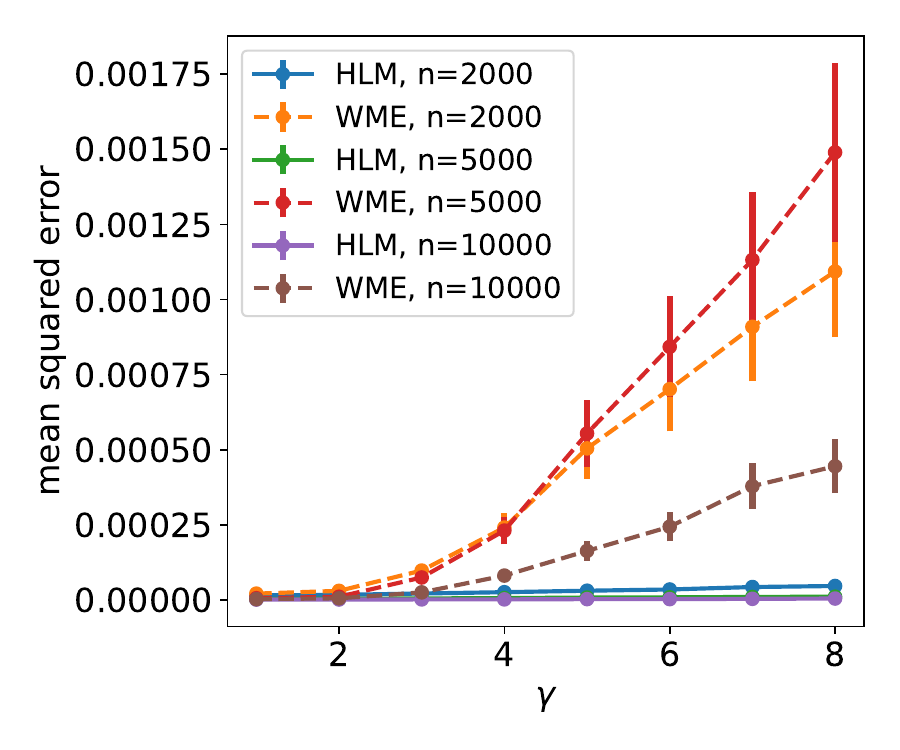}\caption{Exponential.}
\end{subfigure}
\begin{subfigure}{0.24\linewidth}
	\includegraphics[width=\linewidth,height=0.8\linewidth]{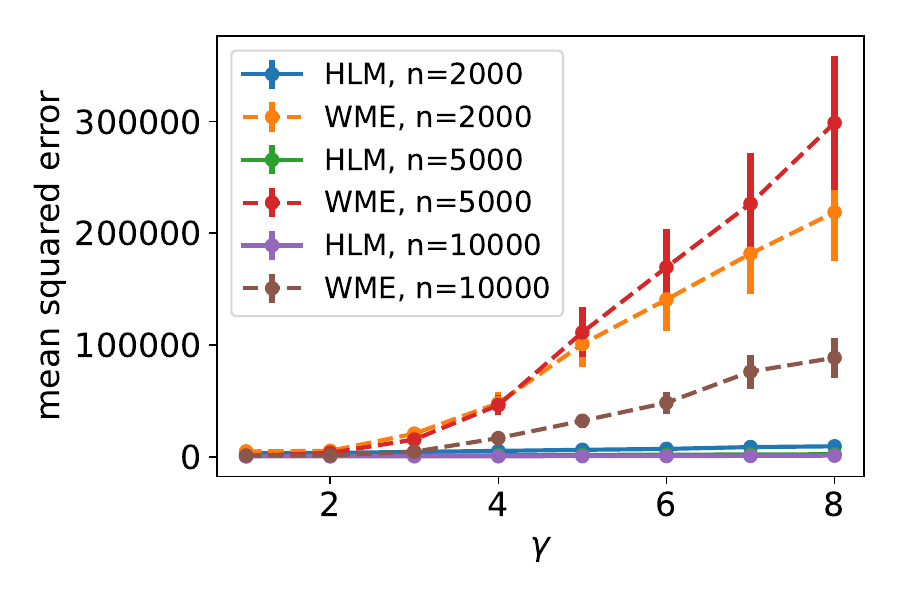}\caption{IPUMS total income.}
\end{subfigure}
\caption{Growth of mean squared error with degree of imbalance $\gamma$.}	\label{fig:unbalance}
\vspace{-3mm}
\end{figure}

From Figure \ref{fig:unbalance}, it can be observed that with the increase of $\gamma$, the two-stage method degrades, while the Huber loss minimization approach performs significantly more stable.

\section{Conclusion}\label{sec:conc}

In this paper, we have proposed a new approach to mean estimation under user-level DP based on Huber loss minimization. The sensitivity is bounded for all possible datasets. Based on the sensitivity analysis, we use the smooth sensitivity framework to determine the noise added to the result. We have also derived the bound on the mean squared error for various cases. The result shows that our method reduces the error for heavy-tailed distributions, and is more suitable to imbalanced users. It is promising to extend our approach to more learning problems, such as calculating average gradients in federated learning.

\textbf{Limitations:} The limitations of our work include: (1) There are some requirements on the minimum number of users $n$. while entirely removing this condition is impossible, to make our method more practical, we expect that it can be somewhat weakened. (2) The case with local sample sizes $m_i$ also being private has not been analyzed. We will leave these two points as future works.

\bibliographystyle{nips}
\bibliography{userdp}


\appendix

\section{Comments About Algorithm Implementation}\label{sec:implement}

In this section, we describe the algorithm for Huber loss minimization. 

In particular, we solve
\begin{eqnarray}
	\mathbf{c}=\arg\min_\mathbf{s} \sum_{i=1}^n w_i\phi_i(\mathbf{s}, \mathbf{y}_i).
\end{eqnarray}
From \eqref{eq:phii}, it can be shown that

\begin{eqnarray}
	\mathbf{c}=\frac{\sum_{i=1}^n w_i \min\left\{1,\frac{T_i}{\norm{\mathbf{c}-\mathbf{y}_i}} \right\}\mathbf{y}_i}{\sum_{i=1}^nw_i \min\left\{1,\frac{T_i}{\norm{\mathbf{c}-\mathbf{y}_i}} \right\}}.
\end{eqnarray}
The algorithm can be designed from above equation. Suppose that the algorithm starts from $\mathbf{c}_0$. The update rule is
\begin{eqnarray}
	\mathbf{c}_{k+1}=\frac{\sum_{i=1}^n w_i \min\left\{1,\frac{T_i}{\norm{\mathbf{c}_k-\mathbf{y}_i}} \right\}\mathbf{y}_i}{\sum_{i=1}^nw_i \min\left\{1,\frac{T_i}{\norm{\mathbf{c}_k-\mathbf{y}_i}} \right\}}.
	\label{eq:update}
\end{eqnarray}
\eqref{eq:update} is run iteratively until the norm of update $\norm{\mathbf{c}_{k+1}-\mathbf{c}_k}$ between two iterations is less than $\xi$.

We provide a brief analysis on the computational complexity as follows.

\textbf{Worst case.} \cite{beck2015weiszfeld} has shown that the Weiszfeld's algorithm for calculating geometric median needs $O(1/\xi)$ steps to achieve precision $\xi$. The proof can also be used to our algorithm \eqref{eq:update}. Moreover, from \eqref{eq:update}, each step requires $O(nd)$ time, in which $d$ is the dimensionality of $\mathbf{y}_i$, thus the overall time complexity is $O(nd/\xi)$.

\textbf{Common case.} From the analysis in Section \ref{sec:balanced} and \ref{sec:unbalanced}, for bounded support, with high probability, $Z(\mathcal{D})\leq T$ holds for balanced users, and $Z_i(\mathcal{D})\leq T_i$ holds for imbalanced users. In this case, we can just calculate the result within one step:
\begin{eqnarray}
	\mathbf{c}=w_i\mathbf{y}_i.
\end{eqnarray}
Therefore the time complexity is $O(nd)$.

\section{Proof of Lemma \ref{lem:common}}\label{sec:concentrated}

Now the users are balanced. \eqref{eq:mu0unb} and \eqref{eq:phii} becomes
\begin{eqnarray}
	\hat{\mu}_0(D) = \underset{\mathbf{s}}{\arg\min}\sum_{i=1}^n \phi(\mathbf{s}, \mathbf{y}_i(\mathcal{D})),
	\label{eq:mu0}
\end{eqnarray}
and
\begin{eqnarray}
	\phi(\mathbf{s}, \mathbf{y})=\left\{
	\begin{array}{ccc}
		\frac{1}{2}\norm{\mathbf{s}-\mathbf{y}}^2 &\text{if} & \norm{\mathbf{s}-\mathbf{y}}\leq T\\
		T\norm{\mathbf{s}-\mathbf{y}}-\frac{1}{2}T^2 &\text{if} & \norm{\mathbf{s}-\mathbf{y}}>T.
	\end{array}
	\right.
	\label{eq:phi}
\end{eqnarray}
In this section, we prove a strengthened version of Lemma \ref{lem:common} for future usage. Define the \textit{modulus of continuity} as
\begin{defi}\label{def:omega}
	(Modulus of continuity) Define
	\begin{eqnarray}
		\omega(\mathcal{D}, k)=\underset{\mathcal{D}':d_H(\mathcal{D},\mathcal{D}')\leq k}{\sup}\norm{\hat{\mu}_0(\mathcal{D})-\hat{\mu}_0(\mathcal{D}')}.
		\label{eq:omega}
	\end{eqnarray}
\end{defi}

From the definition, $\omega(\mathcal{D}, 0)$ is just the local sensitivity $LS(\mathcal{D})$. Then we show the following lemma.
\begin{lem}\label{lem:omega}
	If $Z(\mathcal{D})<(1-2k/n)T$, then
	\begin{eqnarray}
		\omega(\mathcal{D}, k) \leq \frac{k(T+Z(\mathcal{D}))}{n-k}.
	\end{eqnarray}
\end{lem}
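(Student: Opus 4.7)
The plan is to reduce everything to a single convex-analysis argument: show that for every dataset $\mathcal{D}'$ with $d_H(\mathcal{D}, \mathcal{D}') \leq k$, the minimizer $\hat{\mu}_0(\mathcal{D}')$ lies in the closed ball $\bar{B}(\mu, r)$, where $\mu := \hat{\mu}_0(\mathcal{D})$ and $r := k(T+Z(\mathcal{D}))/(n-k)$. First I would pin down $\mu$ itself: under $Z(\mathcal{D}) < (1-2k/n)T \leq T$ (the case $k=0$ is trivial), every $\mathbf{y}_i(\mathcal{D})$ lies within $T$ of $\bar{\mathbf{y}}(\mathcal{D})$, so near $\bar{\mathbf{y}}(\mathcal{D})$ the objective $F(\mathbf{s}) := \sum_i \phi(\mathbf{s}, \mathbf{y}_i(\mathcal{D}))$ is purely quadratic and has vanishing gradient $\sum_i(\bar{\mathbf{y}}(\mathcal{D}) - \mathbf{y}_i(\mathcal{D})) = 0$; convexity of $F$ then gives $\mu = \bar{\mathbf{y}}(\mathcal{D})$.

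Next, a short algebraic check shows that the hypothesis $Z(\mathcal{D}) < (1-2k/n)T$ is equivalent to $r + Z(\mathcal{D}) < T$. Letting $S := \{i : \mathbf{y}_i(\mathcal{D}') \neq \mathbf{y}_i(\mathcal{D})\}$, so $|S| \leq k$, it follows that for every $\mathbf{s} \in \bar{B}(\mu, r)$ and every $i \notin S$, $\|\mathbf{s} - \mathbf{y}_i(\mathcal{D})\| \leq r + Z(\mathcal{D}) < T$. Consequently the contribution of each unchanged user to $F'(\mathbf{s}) := \sum_i \phi(\mathbf{s}, \mathbf{y}_i(\mathcal{D}'))$ is quadratic on $\bar{B}(\mu, r)$, with gradient exactly $\mathbf{s} - \mathbf{y}_i(\mathcal{D})$.

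The main step is to confine $\hat{\mu}_0(\mathcal{D}')$ to $\bar{B}$. Let $\tilde{\mu}$ minimize $F'$ on $\bar{B}$. If $\tilde{\mu}$ is interior, $\nabla F'(\tilde{\mu}) = 0$ and convexity makes it a global minimizer; otherwise KKT gives $\nabla F'(\tilde{\mu}) = -2\lambda(\tilde{\mu} - \mu)$ with $\lambda \geq 0$. I would compute $\nabla F'(\tilde{\mu}) \cdot (\tilde{\mu} - \mu)$ by splitting indices: for $i \notin S$, combine the quadratic gradient with the identity $\sum_{i \notin S}(\tilde{\mu} - \mathbf{y}_i(\mathcal{D})) = (n-|S|)(\tilde{\mu} - \mu) + \sum_{i \in S}(\mathbf{y}_i(\mathcal{D}) - \mu)$ (which follows from $\mu = \bar{\mathbf{y}}(\mathcal{D})$); for $i \in S$, use that each Huber-gradient term has norm at most $T$. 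With $\|\tilde{\mu} - \mu\| = r$ and $\|\mathbf{y}_i(\mathcal{D}) - \mu\| \leq Z(\mathcal{D})$, Cauchy--Schwarz yields
\begin{equation*}
\nabla F'(\tilde{\mu}) \cdot (\tilde{\mu} - \mu) \;\geq\; r\bigl[(n-|S|)r - |S|(T + Z(\mathcal{D}))\bigr] \;=\; \frac{r\,(T+Z(\mathcal{D}))\,n(k-|S|)}{n-k} \;\geq\; 0.
\end{equation*}
Comparing with the KKT identity $-2\lambda r^2 = \nabla F'(\tilde{\mu}) \cdot (\tilde{\mu} - \mu) \geq 0$ forces $\lambda = 0$, so $\nabla F'(\tilde{\mu}) = 0$ and $\tilde{\mu}$ is a global minimizer of $F'$. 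Hence $\hat{\mu}_0(\mathcal{D}') \in \bar{B}$, and taking supremum over $\mathcal{D}'$ gives the claimed bound on $\omega(\mathcal{D}, k)$.

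The delicate case is $|S| = k$, where the lower bound is merely $\geq 0$ rather than strict, and where non-uniqueness of the unconstrained minimizer of $F'$ is a concern (the Huber loss is only convex, not strictly so in the linear regime). Both issues are absorbed by the same observation: the KKT step still produces $\nabla F'(\tilde{\mu}) = 0$, so $\tilde{\mu}$ globally minimizes $F'$; since the $\arg\min$ set of the coercive convex $F'$ is a bounded convex set that we have just shown intersects $\bar{B}$, any consistent tie-breaking in the definition of $\hat{\mu}_0$ preserves $\hat{\mu}_0(\mathcal{D}') \in \bar{B}$. The remainder is bookkeeping, chiefly the algebraic simplification $(n-|S|)k - |S|(n-k) = n(k-|S|)$ and the translation identity used to rewrite $\sum_{i \notin S}(\tilde{\mu} - \mathbf{y}_i(\mathcal{D}))$.
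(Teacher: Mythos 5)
Your proposal is correct in substance but reaches the bound by a different mechanism than the paper. The paper's proof (Appendix B) works with the gradient map $g$ of the perturbed objective: it bounds $\norm{g(\hat{\mu}_0(\mathcal{D}))}$ above by $k(T+Z(\mathcal{D}))$ (since $g$ vanishes at $\hat{\mu}_0(\mathcal{D}')$ and each swapped user shifts the gradient by at most $T+Z(\mathcal{D})$), establishes $\nabla g(\mathbf{s})\succeq (n-k)\mathbf{I}$ for $\mathbf{s}$ within $T-Z(\mathcal{D})$ of $\bar{\mathbf{y}}$, and integrates $\nabla g$ along the segment joining the two minimizers, inserting a $\min\{\norm{\hat{\mu}_0(\mathcal{D})-\hat{\mu}_0(\mathcal{D}')},\,T-Z(\mathcal{D})\}$ safeguard in case the segment exits that region; the hypothesis $Z<(1-2k/n)T$ is then exactly what rules out the second branch of the min. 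You instead minimize the perturbed objective over $\bar{B}(\mu,r)$ with $r=k(T+Z(\mathcal{D}))/(n-k)$ and use KKT to force the boundary multiplier to vanish, confining the new minimizer to the ball directly. The quantitative ingredients are identical (quadraticity of the $n-|S|$ unchanged terms on the relevant ball, the norm-$T$ bound on swapped Huber gradients, and the equivalence of the hypothesis with $r+Z(\mathcal{D})<T$), and your algebra checks out, so this is a legitimate alternative packaging; your version handles the ``what if the new minimizer is far away'' issue somewhat more transparently than the min trick.

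One step needs repair: the tie-breaking remark at the end. The KKT argument shows that \emph{some} global minimizer $\tilde{\mu}$ of $F'$ lies in $\bar{B}(\mu,r)$; the assertion that any consistent tie-breaking then selects a point of $\bar{B}$ does not follow from the argmin set merely intersecting $\bar{B}$ --- a segment of minimizers could protrude outside the ball. The correct fix is already available from your own setup: at $\tilde{\mu}$ the $n-|S|\geq n-k$ unchanged terms lie strictly inside their quadratic regime (note $k<n/2$ is forced by $Z\geq 0$ and the hypothesis), so $F'$ is strongly convex in a neighborhood of $\tilde{\mu}$, and a convex function with a strong local minimum has a unique global minimizer. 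The paper sidesteps this issue entirely because its path-integral bound applies to every zero of $g$, i.e.\ to every minimizer of the perturbed objective, without assuming uniqueness.
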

Let $k=0$, then Lemma \ref{lem:omega} reduces to Lemma \ref{lem:common}. The remainder of this section shows the proof of Lemma \ref{lem:omega}.

According to \eqref{eq:omega}, $\omega(\mathcal{D},k)$ is the supremum change after replacing all items belonging to $k$ users. Denote $\mathcal{D}'$ as a dataset such that $d_H(\mathcal{D}, \mathcal{D}')\leq k$. Let $I$ be the set of users such that $D_i\neq D_i'$. Throughout this section, we denote $\mathbf{y}_i=\mathbf{y}_i(\mathcal{D})$ and $\mathbf{y}_i'=\mathbf{y}_i(\mathcal{D}')$ for simplicity.

From \eqref{eq:mu0},
\begin{eqnarray}
	\hat{\mu}_0(\mathcal{D}')=\underset{\mathbf{s}}{\arg\min}\left[\sum_{i\in I} \phi(\mathbf{s}, \mathbf{y}_i')+\sum_{i\in [n]\setminus I} \phi(\mathbf{s}, \mathbf{y}_i)\right].
\end{eqnarray}
Let $\nabla \phi$ be the gradient of $\phi$ with respect to the first argument. Define
\begin{eqnarray}
	g(\mathbf{s})=\sum_{i\in I} \nabla \phi(\mathbf{s}, \mathbf{y}_i')+\sum_{i\in [n]\setminus I} \nabla \phi(\mathbf{s}, \mathbf{y}_i),
	\label{eq:g}
\end{eqnarray}
then
\begin{eqnarray}
	g(\hat{\mu}_0(\mathcal{D}')) = 0,
	\label{eq:g0}
\end{eqnarray}
and
\begin{eqnarray}
	g(\hat{\mu}_0(D)) &=&\sum_{i\in I} \nabla \phi(\hat{\mu}_0(D), \mathbf{y}_i')+\sum_{i=[n]\setminus I} \nabla \phi(\hat{\mu}_0(D), \mathbf{y}_i)\nonumber\\
	&=& \sum_{i\in I} \nabla \phi(\hat{\mu}_0(D), \mathbf{y}_i') - \sum_{i\in I} \nabla \phi(\hat{\mu}_0(D), \mathbf{y}_i)+\sum_{i\in [n]} \nabla \phi(\hat{\mu}_0(D),\mathbf{y}_i)\nonumber\\
	&=& \sum_{i\in I} \nabla \phi(\hat{\mu}_0(D), \mathbf{y}_i') - \sum_{i\in I} \nabla \phi(\hat{\mu}_0(D), \mathbf{y}_i),
\end{eqnarray}
in which the last step comes from \eqref{eq:mu0}.

Then we show the following lemma.
\begin{lem}\label{lem:ybar}
	If $Z(\mathcal{D})\leq T$, then $\hat{\mu}_0(\mathcal{D})=\bar{\mathbf{y}}$, in which $\bar{\mathbf{y}}=(1/n)\sum_{i=1}^n \mathbf{y}_i$.
\end{lem}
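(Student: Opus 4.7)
The plan is to verify directly that $\bar{\mathbf{y}}$ satisfies the first-order optimality condition and then invoke convexity to conclude it is \emph{the} minimizer. The hypothesis $Z(\mathcal{D})\le T$ means, by the definition \eqref{eq:zdf}, that $\|\mathbf{y}_i-\bar{\mathbf{y}}\|\le T$ for every $i$. Consequently, at the candidate point $\mathbf{s}=\bar{\mathbf{y}}$, each summand of $f(\mathbf{s}):=\sum_{i=1}^n\phi(\mathbf{s},\mathbf{y}_i)$ lies in the quadratic branch of \eqref{eq:phi}.

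Next I would compute the gradient. On the quadratic branch $\nabla_{\mathbf{s}}\phi(\mathbf{s},\mathbf{y})=\mathbf{s}-\mathbf{y}$, so by the previous step
\begin{eqnarray*}
\nabla f(\bar{\mathbf{y}})=\sum_{i=1}^n(\bar{\mathbf{y}}-\mathbf{y}_i)=n\bar{\mathbf{y}}-\sum_{i=1}^n\mathbf{y}_i=\mathbf{0},
\end{eqnarray*}
directly from the definition of $\bar{\mathbf{y}}$. Since $\phi(\cdot,\mathbf{y})$ is convex (it is the $C^1$ Huber loss) and hence so is $f$, any stationary point is a global minimum, so $\bar{\mathbf{y}}\in\arg\min_{\mathbf{s}}f(\mathbf{s})$.

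The only remaining issue — and the one point that deserves care, since $\hat{\mu}_0(\mathcal{D})$ is defined via $\arg\min$ — is uniqueness, because the Huber loss fails to be strictly convex on its linear branch. To handle this I would observe that in a neighborhood of $\bar{\mathbf{y}}$ the strict inequalities $\|\mathbf{s}-\mathbf{y}_i\|<T$ persist for all $i$, so $f$ coincides there with $\tfrac12\sum_i\|\mathbf{s}-\mathbf{y}_i\|^2$, a strictly convex quadratic. Combined with the global convexity of $f$, this local strict convexity forces the set of minimizers to be the singleton $\{\bar{\mathbf{y}}\}$, giving $\hat{\mu}_0(\mathcal{D})=\bar{\mathbf{y}}$. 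The main (and essentially only) obstacle is this uniqueness argument; everything else is a one-line gradient computation.
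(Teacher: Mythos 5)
Your proposal is correct and takes essentially the same route as the paper's proof: check the first-order optimality condition at $\bar{\mathbf{y}}$ using that every summand is on the quadratic branch when $Z(\mathcal{D})\leq T$, then use local strict convexity near $\bar{\mathbf{y}}$ to force uniqueness of the minimizer. The paper phrases the uniqueness step as a Hessian lower bound $\nabla^2\sum_i\phi(\bar{\mathbf{y}},\mathbf{y}_i)\succeq n\mathbf{I}$ (local strong convexity) rather than via persistence of the strict inequalities, but the underlying argument is the same.
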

\begin{proof}
	The proof has two steps. 
	
	(1) \textbf{$\bar{\mathbf{y}}$ is a minimizer of $\sum_{i=1}^n \phi(\mathbf{s}, \mathbf{y}_i)$.} From \eqref{eq:phi},
	\begin{eqnarray}
		\nabla \phi(\mathbf{s}, \mathbf{y}_i)&=&\left\{
		\begin{array}{ccc}
			\mathbf{s}-\mathbf{y}_i &\text{if} & \norm{\mathbf{s}-\mathbf{y}_i}\leq T\\
			T\frac{\mathbf{s}-\mathbf{y}_i}{\norm{\mathbf{s}-\mathbf{y}_i}} &\text{if} & \norm{\mathbf{s}-\mathbf{y}_i}>T,
		\end{array}
		\right.
		\label{eq:phigrad}
	\end{eqnarray}
	in which the second step holds because $\norm{\bar{\mathbf{y}}-\mathbf{y}_i}\leq Z\leq T$. Moreover, $\sum_{i=1}^n \phi(\mathbf{s}, \mathbf{y}_i)$ is convex, thus $\bar{\mathbf{y}}$ is a minimizer.
	
	(2) \textbf{$\bar{\mathbf{y}}$ is the unique minimizer.} The Hessian
	\begin{eqnarray}
		\nabla^2 \sum_{i=1}^n \phi(\bar{\mathbf{y}}, \mathbf{y}_i)\geq \sum_{i=1}^n \mathbf{1}\left(\norm{\bar{\mathbf{y}}-\mathbf{y}_i}\leq T\right)=n.
	\end{eqnarray}
	Therefore $\sum_{i=1}^n \phi(\mathbf{s}, \mathbf{y}_i)$ is strong convex around $\bar{\mathbf{y}}$, and thus $\bar{\mathbf{y}}$ is unique.
\end{proof}
From \eqref{eq:g},
\begin{eqnarray}
	\norm{g(\hat{\mu}_0(\mathcal{D}))}&\leq& \sum_{i=1}^k \norm{\nabla \phi(\bar{\mathbf{y}}, \mathbf{y}_i')}+\sum_{i=1}^k \norm{\nabla \phi(\bar{\mathbf{y}}, \mathbf{y}_i)}\nonumber\\
	&\leq & kT+k\sum_{i=1}^k \norm{\bar{\mathbf{y}}-\mathbf{y}_i}\nonumber\\
	&\leq & k(T+Z(\mathcal{D})).
	\label{eq:gbound}
\end{eqnarray}
Moreover, for $\mathbf{s}$ satisfying $\norm{\mathbf{s}-\bar{\mathbf{y}}}\leq T-Z$,
\begin{eqnarray}
	\nabla g(\mathbf{s})&=&\sum_{i=1}^k \nabla^2 \phi(\mathbf{s}, \mathbf{y}_i')+\sum_{i=k+1}^n \nabla^2 \phi(\mathbf{s}, \mathbf{y}_i)\nonumber\\
	&\succeq & \sum_{i=k+1}^n \mathbf{1}(\norm{\mathbf{s}-\mathbf{y}_i}\leq T)\nonumber\\
	&\succeq & (n-k)\mathbf{I},
\end{eqnarray}
in which the last step holds because 
\begin{eqnarray}
	\norm{\mathbf{s}-\mathbf{y}_i}\leq \norm{\mathbf{s}-\bar{\mathbf{y}}}+\norm{\bar{\mathbf{y}}-\mathbf{y}_i}\leq T-Z(\mathcal{D})+Z(\mathcal{D})=T.
\end{eqnarray}
Let $\mathbf{L}$ be a path connecting $\hat{\mu}_0(\mathcal{D})$ and $\hat{\mu}_0(\mathcal{D}')$. Then
\begin{eqnarray}
	\norm{g(\hat{\mu}_0(\mathcal{D}))-g(\hat{\mu}_0(\mathcal{D}'))}&=& \norm{ \int_\mathbf{L} \nabla g(\mathbf{s})\cdot d\mathbf{s}}\nonumber\\
	&\geq & (n-k)\min\left\{\norm{\hat{\mu}_0(\mathcal{D}) - \hat{\mu}_0(\mathcal{D}')}, T-Z(\mathcal{D}) \right\},
	\label{eq:gb1}
\end{eqnarray}

Note that from \eqref{eq:g0} and \eqref{eq:gbound},
\begin{eqnarray}
	\norm{g(\hat{\mu}_0(\mathcal{D})) - g(\hat{\mu}_0(\mathcal{D}'))}\leq k(T+Z(\mathcal{D})).
	\label{eq:gb2}
\end{eqnarray}
From the condition $Z(\mathcal{D})<(1-2k/n)T$ in Lemma \ref{lem:common}, $(n-k)(T-Z(\mathcal{D}))>k(T+Z(\mathcal{D}))$. Hence, \eqref{eq:gb1} and \eqref{eq:gb2} yields
\begin{eqnarray}
	\norm{\hat{\mu}_0(\mathcal{D}')-\hat{\mu}_0(\mathcal{D})}\leq \frac{k(T+Z(\mathcal{D}))}{n-k}.
\end{eqnarray}

\section{Proof of Lemma \ref{lem:outlier}}\label{sec:outlier}
Denote $\mathcal{D}'$ as a dataset adjacent to $\mathcal{D}$ at user-level. Then it remains to bound $\norm{\hat{\mu}_0(\mathcal{D})-\hat{\mu}_0(\mathcal{D}')}$. Recall that in the statement of Lemma \ref{lem:outlier}, we have required that there exists a dataset $\mathcal{D}^*$ such that $Z(\mathcal{D}^*)<(1-2(k+1)/n)T$, and $d_H(\mathcal{D}, \mathcal{D}^*)\leq k$. Denote $I$ as the set of users such that $\mathcal{D}$ and $\mathcal{D}^*$ have different values, while $\mathcal{D}$ and $\mathcal{D}'$ differ at user $u$. Now we discuss two cases.

\textbf{Case 1. $u\notin I$}. In other words, $\mathcal{D}$ and $\mathcal{D}'$ differ at a user that is the same between $\mathcal{D}$ and $\mathcal{D}^*$. In this case, $\mathcal{D}'$ has $k+1$ users that are different from $\mathcal{D}^*$. Without loss of generality, suppose that $I=\{1,\ldots, k\}$, while $u=k+1$. $\mathcal{D}^*$, $\mathcal{D}$ and $\mathcal{D}'$ can be written as follows:
\begin{eqnarray}
	\mathcal{D}^*&=&\{D_1,\ldots, D_n\},\\
	\mathcal{D}&=&\{D_1',\ldots, D_k', D_{k+1}, \ldots, D_n \},\\
	\mathcal{D}'&=&\{D_1',\ldots,D_{k+1}', D_{k+2}, \ldots, D_n\}.
\end{eqnarray}
For convenience of expression, denote $\mathbf{y}_i=\mathbf{y}_i(\mathcal{D})$ and $\mathbf{y}_i'=\mathbf{y}(\mathcal{D}_i')$.

In this section, define $g_1(\mathbf{s})$ as follows:
\begin{eqnarray}
	g_1(\mathbf{s})=\sum_{i=1}^{k+1}\nabla \phi(\mathbf{s}, \mathbf{y}_i') + \sum_{i=k+2}^n \nabla \phi(\mathbf{s}, \mathbf{y}_i),
\end{eqnarray}
then 
\begin{eqnarray}
	g_1(\hat{\mu}_0(\mathcal{D}')) = 0,
	\label{eq:g02}
\end{eqnarray}
and
\begin{eqnarray}
	g_1(\hat{\mu}_0(\mathcal{D}))&=&\sum_{i=1}^{k+1} \nabla \phi(\hat{\mu}_0(\mathcal{D}), \mathbf{y}_i')+\sum_{i=k+2}^n \nabla \phi(\hat{\mu}_0(\mathcal{D}), \mathbf{y}_i)\nonumber\\
	&=&\sum_{i=1}^{k} \nabla \phi(\hat{\mu}_0(\mathcal{D}), \mathbf{y}_i')+\sum_{i=k+1}^n \nabla \phi(\hat{\mu}_0(\mathcal{D}), \mathbf{y}_i) \nonumber\\
	&&+\nabla \phi(\hat{\mu}_0(\mathcal{D}), \mathbf{y}_{k+1}')-\nabla \phi(\hat{\mu}_0(\mathcal{D}), \mathbf{y}_{k+1})\nonumber\\
	&=& \nabla \phi(\hat{\mu}_0(\mathcal{D}), \mathbf{y}_{k+1}')-\nabla \phi(\hat{\mu}_0(\mathcal{D}), \mathbf{y}_{k+1}).
\end{eqnarray}
Therefore
\begin{eqnarray}
	\norm{g_1(\hat{\mu}_0(\mathcal{D}))}&\leq& \norm{\nabla \phi(\hat{\mu}_0(\mathcal{D}), \mathbf{y}_{k+1}')}+\norm{\nabla \phi(\hat{\mu}_0(\mathcal{D}), \mathbf{y}_{k+1})}\nonumber\\
	&\overset{(a)}{\leq} & T + \norm{\hat{\mu}_0(\mathcal{D})-\mathbf{y}_{k+1}}\nonumber\\
	&\overset{(b)}{\leq} & T+\norm{\hat{\mu}_0(\mathcal{D})-\hat{\mu}_0(\mathcal{D}^*)}+\norm{\bar{\mathbf{y}}(\mathcal{D}^*)-\mathbf{y}_{k+1}}\nonumber\\
	&\leq & T+\omega(\mathcal{D}^*, k) +Z(\mathcal{D}^*).
	\label{eq:gbound2}
\end{eqnarray}
(a) comes from \eqref{eq:phigrad}. By taking gradient over $\mathbf{s}$, we have $\norm{\nabla \phi(\mathbf{s}, y)}\leq \min\{T, \norm{\mathbf{s}-\mathbf{y}} \}$. (b) comes from Lemma \ref{lem:ybar}, which states that $\hat{\mu}_0(\mathcal{D})=\bar{\mathbf{y}}$.

For all $\mathbf{s}$ satisfying $\norm{\mathbf{s}-\hat{\mu}_0(\mathcal{D})}\leq T-Z(\mathcal{D}^*)-\omega(\mathcal{D}^*, k)$,
\begin{eqnarray}
	\nabla g_1(\mathbf{s})&=&\sum_{i=1}^{k+1}\nabla^2 \phi(\mathbf{s}, \mathbf{y}_i')+\sum_{i=k+2}^n \nabla^2 \phi(\mathbf{s}, \mathbf{y}_i)\nonumber\\
	&\succeq & \sum_{i=k+2}^n \mathbf{1}(\norm{\mathbf{s}-\mathbf{y}_i}\leq T)\nonumber\\
	&\succeq & (n-k-1)\mathbf{I},
	\label{eq:g1grad}
\end{eqnarray}
in which the last step holds because
\begin{eqnarray}
	\norm{\mathbf{s}-\mathbf{y}_i}&\leq & \norm{\mathbf{s}-\hat{\mu}_0(\mathcal{D})}+\norm{\hat{\mu}_0(\mathcal{D}) - \bar{\mathbf{y}}}+\norm{\bar{\mathbf{y}}-\mathbf{y}_i}\nonumber\\
	&\leq & T-Z(\mathcal{D}^*)-\omega(\mathcal{D}^*, k)+\omega(\mathcal{D}^*, k)+Z(\mathcal{D}^*)\nonumber\\
	&=& T.
\end{eqnarray}
Then
\begin{eqnarray}
	\norm{g_1(\hat{\mu}_0(\mathcal{D})) - g_1(\hat{\mu}_0(\mathcal{D}'))}&=& \norm{\int_\mathbf{L} \nabla g_1(\mathbf{s}) d\mathbf{s}}\nonumber\\
	&\geq& (n-k-1)\min\left\{\norm{\hat{\mu}_0(\mathcal{D})-\hat{\mu}_0(\mathcal{D}')}, T-Z(\mathcal{D}^*)-\omega(\mathcal{D}^*, k) \right\},\nonumber\\
	\label{eq:gb1new}
\end{eqnarray}
in which $\mathbf{L}$ is the line connecting $\hat{\mu}_0(\mathcal{D})$ and $\hat{\mu}_0(\mathcal{D}')$.

From \eqref{eq:g02} and \eqref{eq:gbound2}, 
\begin{eqnarray}
	\norm{g_1(\hat{\mu}_0(\mathcal{D})) - g_1(\hat{\mu}_0(\mathcal{D}'))}\leq T+\omega(\mathcal{D}^*, k) + Z(\mathcal{D}^*).
	\label{eq:gb2new}
\end{eqnarray}
From the condition $Z(\mathcal{D}^*)<(1-2(k+1)/n)T$ in Lemma \ref{lem:outlier}, 
\begin{eqnarray}
	(n-k-1)(T-Z(\mathcal{D}^*)-\omega(\mathcal{D}^*,k))> T+\omega(\mathcal{D}^*, k) + Z(\mathcal{D}^*).
\end{eqnarray}
Hence 
\begin{eqnarray}							\norm{g_1(\hat{\mu}_0(\mathcal{D})) - g_1(\hat{\mu}_0(\mathcal{D}'))}< (n-k-1)(T-Z(\mathcal{D}^*)-\omega(\mathcal{D}^*, k)).
	\label{eq:gbtemp}
\end{eqnarray}
From \eqref{eq:gb1new} and \eqref{eq:gbtemp}, the second element in the minimum bracket in \eqref{eq:gb1new} will not take effect. Hence,
\begin{eqnarray}
	\norm{\hat{\mu}_0(\mathcal{D})-\hat{\mu}_0(\mathcal{D}')}< T-Z(\mathcal{D}^*)-\omega(\mathcal{D}^*, k).
	\label{eq:diffub}
\end{eqnarray}
\eqref{eq:diffub} will be useful in the analysis of the second case. From \eqref{eq:gb1new} and \eqref{eq:gbtemp}, we can also get
\begin{eqnarray}
	\norm{\hat{\mu}_0(\mathcal{D})-\hat{\mu}_0(\mathcal{D}')}&\leq & \frac{T+\omega(\mathcal{D}^*, k) + Z(\mathcal{D}^*)}{n-k-1}\nonumber\\
	&\leq& \frac{n}{(n-k-1)(n-k)} (T+Z(\mathcal{D}^*)),
\end{eqnarray}
in which the last step uses the bound of $\omega(\mathcal{D}^*, k)$ in Lemma \ref{lem:common}. Now we have bounded $\norm{\hat{\mu}_0(\mathcal{D})-\hat{\mu}_0(\mathcal{D}')}$ for the first case. 

\textbf{Case 2: $u\in I$.} In other words, $\mathcal{D}$ and $\mathcal{D}'$ differ in a user that is different between $\mathcal{D}$ and $\mathcal{D}^*$. Without loss of generality, suppose that $I=\{1,\ldots, k\}$, and $u=k$. $\mathcal{D}^*$, $\mathcal{D}$ and $\mathcal{D}'$ can be written as follows:
\begin{eqnarray}
	\mathcal{D}^*&=&\{D_1,\ldots, D_n\},\\
	\mathcal{D}&=&\{D_1',\ldots, D_k', D_{k+1}, \ldots, D_n \},\\
	\mathcal{D}'&=&\{D_1',\ldots, D_{k-1}',D_{k}'', D_{k+1}, \ldots, D_n\}.
\end{eqnarray}

In order to bound $\norm{\hat{\mu}_0(\mathcal{D})-\hat{\mu}_0(\mathcal{D}')}$, we construct a temporary dataset $\mathcal{D}_{temp}$ as follows:
\begin{eqnarray}
	\mathcal{D}_{temp} = \{D_1', \ldots, D_{k-1}', D_k, \ldots, D_n\}.
\end{eqnarray}
Define
\begin{eqnarray}
	g_2(\mathbf{s})=\sum_{i=1}^{k-1} \nabla \phi(\mathbf{s}, \mathbf{y}_i')+\nabla \phi(\mathbf{s}, \mathbf{y}_k'')+\sum_{i=k+1}^n \nabla \phi(\mathbf{s}, \mathbf{y}_i).
\end{eqnarray}
Then $g_2(\hat{\mu}_0(\mathcal{D}'))=0$, and
\begin{eqnarray}
	g_2(\hat{\mu}_0(\mathcal{D}))=\nabla \phi(\hat{\mu}_0(\mathcal{D}), \mathbf{y}_k'')-\nabla \phi(\hat{\mu}_0(\mathcal{D}), \mathbf{y}_k').
\end{eqnarray}
From \eqref{eq:phigrad},
\begin{eqnarray}
	\norm{g_2(\hat{\mu}_0(\mathcal{D}))-g_2(\hat{\mu}_0(\mathcal{D}'))}\leq 2T.
	\label{eq:gb2u}
\end{eqnarray}
Now we bound $\norm{\hat{\mu}_0(\mathcal{D}) - \hat{\mu}_0(\mathcal{D}') }$. Corresponding to \eqref{eq:diffub}, we get
\begin{eqnarray}
	\norm{\hat{\mu}_0(\mathcal{D}_{temp}) - \hat{\mu}_0(\mathcal{D})}<T-Z(\mathcal{D}^*)-\omega(\mathcal{D}^*, k-1),
\end{eqnarray}
and
\begin{eqnarray}
	\norm{\hat{\mu}_0(\mathcal{D}_{temp}) - \hat{\mu}_0(\mathcal{D}')}<T-Z(\mathcal{D}^*)-\omega(\mathcal{D}^*, k-1).
\end{eqnarray}
Denote $\mathbf{L}$ as the line connecting $\hat{\mu}_0(\mathcal{D})$ and $\hat{\mu}_0(\mathcal{D}')$. For all $\mathbf{s}\in \mathbf{L}$, $\norm{s-\hat{\mu}_0(\mathcal{D}_{temp})}\leq T-Z(\mathcal{D}^*)-\omega(\mathcal{D}^*, k-1)$. Corresponding to \eqref{eq:g1grad}, for all $\mathbf{s}\in \mathbf{L}$,
\begin{eqnarray}
	\nabla g_2(\mathbf{s})\succeq (n-k)\mathbf{I}.
\end{eqnarray}
Therefore
\begin{eqnarray}
	\norm{g_2(\hat{\mu}_0(\mathcal{D}))-g_2(\hat{\mu}_0(\mathcal{D}'))} \geq (n-k)\norm{\hat{\mu}_0(\mathcal{D}) - \hat{\mu}_0(\mathcal{D}') }.
	\label{eq:gb2l}
\end{eqnarray}
From \eqref{eq:gb2u} and \eqref{eq:gb2l},
\begin{eqnarray}
	\norm{\hat{\mu}_0(\mathcal{D}) - \hat{\mu}_0(\mathcal{D}') }\leq \frac{2T}{n-k}.
\end{eqnarray}
Combine case 1 and 2, we get
\begin{eqnarray}
	LS(\mathcal{D})&\leq& \max\left\{\frac{n(T+Z(\mathcal{D}^*))}{(n-k-1)(n-k)} , \frac{2T}{n-k}\right\}\nonumber\\
	&=& \frac{2T}{n-k}.
\end{eqnarray}
The last step comes from the requirement that $Z(\mathcal{D}^*)<(1-2(k+1)/n)T$. The proof is complete.
\section{Proof of Theorem \ref{thm:dp}}\label{sec:dp}

Two requirements on smooth sensitivity are shown in Definition \ref{def:smooth}, i.e.

(1) For any $\mathcal{D}$, $S(\mathcal{D})\geq LS(\mathcal{D})$;

(2) For any neighboring $\mathcal{D}$ and $\mathcal{D}'$, $S(\mathcal{D})\leq e^\beta S(\mathcal{D}')$.

It has been shown in \cite{nissim2007smooth} that based on (1) and (2), the final result $\hat{\mu}$ is $(\epsilon, \delta)$-DP. In the remainder of this section, we show that both requirements (1) and (2) are satisfied.

\textbf{For (1)}, from Lemma \ref{lem:common} and \ref{lem:outlier}, $LS(\mathcal{D})\leq G(\mathcal{D}, 0)$ if the corresponding conditions are satisfied. If these conditions are not satisfied, since $\norm{\Clip(\hat{\mu}_0(\mathcal{D}), R_c)}\leq R_c$ always hold, the sensitivity can always be bounded by $2R_c$. Hence $LS(\mathcal{D})\leq G(\mathcal{D}, 0)$ holds for all $\mathcal{D}$. From \eqref{eq:S}, $G(\mathcal{D}, 0)\leq S(\mathcal{D})$. Therefore the requirement (1) is satisfied.

\textbf{For (2)}, it suffices to show that $G(\mathcal{D}, k)\leq G(\mathcal{D}', k+1)$. From Lemma \ref{lem:outlier}, if $\Delta(\mathcal{D}')$ exists, and $k+1\leq n/4-1-\Delta(\mathcal{D}')$, then
\begin{eqnarray}
	G(\mathcal{D}', k+1)=\frac{2T}{n-k-1-\Delta(\mathcal{D}')}.
\end{eqnarray} 
From the definition of $\Delta$ in \eqref{eq:delta}, since $\mathcal{D}$ and $\mathcal{D}'$ are adjacent, $\Delta(\mathcal{D})\leq \Delta(\mathcal{D}')+1$ holds. Therefore, the condition $k+1\leq n/4-1-\Delta(\mathcal{D}')$ yields $k\leq n/4-1-\Delta(\mathcal{D})$. Use Lemma \ref{lem:outlier} again, we get
\begin{eqnarray}
	G(\mathcal{D}, k)&\leq& \frac{2T}{n-k-\Delta(\mathcal{D})}\nonumber\\
	&\leq & \frac{2T}{n-k-1-\Delta(\mathcal{D}')}\nonumber\\
	&=& G(\mathcal{D}', k+1).
\end{eqnarray}
If the conditions in Lemma \ref{lem:outlier} are not satisfied, i.e. $\Delta(\mathcal{D}')$ does not exists or $k+1>n/4-1-\Delta(\mathcal{D}')$, then $G(\mathcal{D}', k+1)=2R_c$, thus $G(\mathcal{D}, k)\leq G(\mathcal{D}', k+1)$ holds. 

Now we have shown that no matter whether the conditions in Lemma \ref{lem:outlier} holds, we always have $G(\mathcal{D}, k)\leq G(\mathcal{D}', k+1)$. From \eqref{eq:S}, it can be easily shown that $S(\mathcal{D})\leq e^\beta S(\mathcal{D}')$.
\section{Proof of Theorem \ref{thm:mse}}\label{sec:mse}
In this section, we analyze the practical performance of the estimator for a random dataset.

\textbf{Notation.} Throughout this section, for convenience, we just denote $Z=Z(\mathcal{D})$, $\mathbf{Y}_i=\mathbf{y}_i(\mathcal{D})$ and $\bar{\mathbf{Y}}=\bar{\mathbf{y}}(\mathcal{D})$ for simplicity. We use capital letters here since $\mathcal{D}$ is random, thus $\mathbf{Y}_i$ and $\bar{\mathbf{Y}}$ are random variables.

From Lemma \ref{lem:concentration} in Section \ref{sec:common}, for $i=1,\ldots, n$,
\begin{eqnarray}
	\text{P}(\norm{\mathbf{Y}_i-\mu}>t)\leq (d+1)e^{-\frac{3mt^2}{32R^2}}.
\end{eqnarray}

Recall that $Z=\max_i\norm{\mathbf{Y}_i-\bar{\mathbf{Y}}}$. If $\norm{\mathbf{Y}_i-\mu}\leq t/2$ for all $i$, then $\norm{\bar{\mathbf{Y}}-\mu}\leq t/2$ also holds, thus $Z\leq t$. Therefore
\begin{eqnarray}
	\text{P}(Z>t)&\leq & \sum_{i=1}^n \text{P}\left(\norm{\mathbf{Y}_i-\mu}>\frac{t}{2}\right)\nonumber\\
	&\leq & n(d+1)e^{-\frac{3mt^2}{128R^2}}.
\end{eqnarray}
Define
\begin{eqnarray}
	Z_0=\sqrt{\frac{128R^2}{3m}\ln (mn^3(d+1))}.
\end{eqnarray}
Then
\begin{eqnarray}
	\text{P}\left(Z>Z_0\right) \leq \frac{1}{mn^2}.
	\label{eq:largez}
\end{eqnarray}
Recall that
\begin{eqnarray}
	T=C_T\frac{R}{\sqrt{m}}\ln(mn^3(d+1)),
\end{eqnarray}
with $C_T>16\sqrt{2/3}$. Then with probability at least $1-1/(mn^2)$, $Z\leq T/2$ holds, thus
\begin{eqnarray}
	S(\mathcal{D})&=&\max_k e^{-\beta k}G(\mathcal{D}, k)\nonumber\\
	&\leq & \max\left\{\underset{k\leq n/4-1}{\max} e^{-\beta k}\frac{2T}{n-k}, 2R_ce^{-\frac{1}{4}n\beta} \right\}.
\end{eqnarray}
Note that in the statement of Theorem \ref{thm:mse}, it is required that $n>(4/\beta)\ln(nR_c/T)$. This yields $2R_c e^{-n\beta/4}<2T/n$, and $e^{-\beta k}2T/(n-k)$ decreases with $k$. Therefore
\begin{eqnarray}
	S(\mathcal{D}) \leq \frac{2T}{n}.
	\label{eq:sbound}
\end{eqnarray}

It remains to bound the estimation error. The mean squared error of mean estimation can be decomposed in the following way:
\begin{eqnarray}
	\mathbb{E}\left[\norm{\hat{\mu}(\mathcal{D})-\mu}^2\right]&=& \mathbb{E}\left[\norm{\Clip(\hat{\mu}_0(\mathcal{D}), R)+\mathbf{W}-\mu}^2\right]\nonumber\\
	&\overset{(a)}{=} & \mathbb{E}\left[\norm{\Clip(\hat{\mu}_0(\mathcal{D}), R)-\mu}^2\right]+\mathbb{E}[\norm{\mathbf{W}}^2]\nonumber\\
	&\leq & \mathbb{E}\left[\norm{\hat{\mu}_0(\mathcal{D})-\mu}^2\mathbf{1}(Z\leq Z_0)\right]+\mathbb{E}\left[\norm{\mathbf{W}}^2\mathbf{1}(Z\leq Z_0)\right]\nonumber\\
	&&+\mathbb{E}\left[\norm{\Clip(\hat{\mu}_0(\mathcal{D}), R)-\mu}^2\mathbf{1}(Z> Z_0)\right]\nonumber\\
	&&+\mathbb{E}\left[\norm{\mathbf{W}}^2\mathbf{1}(Z> Z_0)\right]\nonumber\\
	&:=& I_1+I_2+I_3+I_4.
	\label{eq:msedecomp}
\end{eqnarray}
(a) holds because $\mathbb{E}[\mathbf{W}|D]=0$ for any $D$, thus $\hat{\mu}_0(\mathcal{D})-\mu$ and $\mathbf{W}$ are uncorrelated. Now we bound these four terms separately.

\textbf{Bound of $I_1$.} From Lemma \ref{lem:ybar}, for $Z\leq R\ln(nd)/\sqrt{m}$, $Z<T$ holds, thus $\hat{\mu}_0(\mathcal{D})=\bar{\mathbf{Y}}$. For convenience, denote $\mathbf{Y}$ as an i.i.d copy of $\mathbf{Y}_i$, $i=1,\ldots, n$ and $\mathbf{X}$ as an i.i.d copy of $\mathbf{X}_{ij}$, $i=1,\ldots, n$, $j=1,\ldots, m$. Then
\begin{eqnarray}
	I_1&=&\mathbb{E}\left[\norm{\frac{1}{n}\sum_{i=1}^n \mathbf{Y}_i-\mu}^2 \mathbf{1}\left(Z\leq \frac{R}{\sqrt{m}}\ln(nd)\right)\right]\nonumber\\
	&\leq & \mathbb{E}\left[\norm{\frac{1}{n}\sum_{i=1}^n \mathbf{Y}_i-\mu}^2 \right]\nonumber\\
	&=&\frac{1}{n}\mathbb{E}\left[\norm{\mathbf{Y}-\mu}^2\right]\nonumber\\
	&=&\frac{1}{mn}\mathbb{E}\left[\norm{\mathbf{X}-\mu}^2\right]\nonumber\\
	&\leq & \frac{R^2}{mn}.
	\label{eq:I1b}
\end{eqnarray}

\textbf{Bound of $I_2$.} 
\begin{eqnarray}
	I_2&\overset{(a)}{\leq}& \frac{d}{\alpha^2}\mathbb{E}[S^2(\mathcal{D})\mathbf{1}(Z\leq Z_0)]\nonumber\\
	&\overset{(b)}{\lesssim}& \frac{T^2d}{n^2\alpha^2}\nonumber\\
	&\overset{(c)}{\sim}& \frac{dR^2}{mn^2\epsilon^2}\ln(mn^3d)\ln \frac{1}{\delta}.
	\label{eq:I2b}
\end{eqnarray}
(a) holds because $\mathbf{W}\sim \mathcal{N}(0, (\lambda^2/\alpha^2)\mathbf{I})$. (b) comes from \eqref{eq:sbound}. (c) comes from \eqref{eq:alpha}.

\textbf{Bound of $I_3$.} From \eqref{eq:largez},
\begin{eqnarray}
	I_3&\leq& 4R^2\text{P}\left(Z>Z_0\right)\nonumber\\
	&\leq & \frac{4R^2}{mn^2}.
	\label{eq:I3b}
\end{eqnarray}

\textbf{Bound of $I_4$.}
\begin{eqnarray}
	I_4&\leq& \frac{d}{\alpha^2}\mathbb{E}[S^2(\mathcal{D})\mathbf{1}(Z> Z_0)]\nonumber\\
	&\lesssim & \frac{dR^2}{\alpha^2}\text{P}(Z>Z_0)\nonumber\\
	&\lesssim &\frac{dR^2}{mn^2\epsilon^2}\ln\frac{1}{\delta}.
	\label{eq:I4b}
\end{eqnarray}
Now all four terms in \eqref{eq:msedecomp} have been bounded. Therefore
\begin{eqnarray}
	\mathbb{E}\left[\norm{\hat{\mu}(\mathcal{D})-\mu}^2\right]\lesssim \frac{R^2}{mn}+\frac{dR^2}{mn^2\epsilon^2}\ln(mn^3d)\ln \frac{1}{\delta}.
\end{eqnarray}
The proof is complete.
\section{Proof of Theorem \ref{thm:heavytail}}\label{sec:heavytail}
In this section, following Appendix \ref{sec:mse}, we still denote $\mathbf{Y}_i=\mathbf{y}_i(\mathcal{D})$ and $\bar{\mathbf{Y}}=\bar{\mathbf{y}}(\mathcal{D})$ for simplicity. 

Define 
\begin{eqnarray}
	r_0=\max\left\{2M_p^\frac{1}{p}\sqrt{\frac{1}{m}\ln \frac{3(d+1)}{\nu}}, 4M_p^\frac{1}{p} (3m)^{\frac{1}{p}-1}\nu^{-\frac{1}{p}}\ln \frac{3(d+1)}{\nu} \right\},
	\label{eq:r0df}
\end{eqnarray}
in which $\nu=\sqrt{d}/(n\epsilon)$. From the statement of Theorem \ref{thm:heavytail}, $T>4r_0$. From Lemma \ref{lem:concunb},
\begin{eqnarray}
	\text{P}(\norm{\mathbf{Y}_i-\mu}>r_0)\leq \nu=\frac{\sqrt{d}}{n\epsilon}.
\end{eqnarray}
Define
\begin{eqnarray}
	n_{out}=\sum_{i=1}^n \mathbf{1}(\norm{\mathbf{Y}_i-\mu}>r_0).
\end{eqnarray}
Then $n_{out}$ follows a Binomial distribution with parameter $(n, p)$ with $p\leq 1/n$. $\mathbb{E}[n_{out}]\leq 1$. This indicates that with the increase of $n$, the number of outliers is still bounded by $O(1)$ with high probability.

\begin{lem}\label{lem:deltabound}
	$\Delta(\mathcal{D}, k)\leq n_{out}$ for $k<n/4-n_{out}-1$.
\end{lem}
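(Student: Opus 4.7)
The plan is to exhibit a dataset $\mathcal{D}^*$ close to $\mathcal{D}$ in Hamming distance that is well concentrated, and then apply the definition \eqref{eq:delta} of $\Delta$ directly.

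First, I would identify the outlier indices $I=\{i\in[n]:\norm{\mathbf{Y}_i-\mu}>r_0\}$; by the definition of $n_{out}$ we have $|I|=n_{out}$. I then build $\mathcal{D}^*$ by leaving $D_i$ unchanged for $i\notin I$ and, for each $i\in I$, replacing $D_i$ with $m$ copies of $\mu$ (admissible because Assumption \ref{ass:tail} places no support restriction on $\mathbf{X}$). By construction $d_H(\mathcal{D},\mathcal{D}^*)\leq |I|=n_{out}$, and every user-wise mean in $\mathcal{D}^*$ lies within distance $r_0$ of $\mu$: it equals $\mu$ exactly for $i\in I$, and is within $r_0$ by definition of $I$ for $i\notin I$.

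It then follows that $\norm{\bar{\mathbf{y}}(\mathcal{D}^*)-\mu}\leq r_0$ and hence, by the triangle inequality, $Z(\mathcal{D}^*)\leq 2r_0$. Comparing the selection rule for $T$ in Theorem \ref{thm:heavytail} with the definition \eqref{eq:r0df} of $r_0$, one sees that $T/r_0=C_T/(2M_p^{1/p})>4$ since $C_T>8M_p^{1/p}$, so $Z(\mathcal{D}^*)<T/2$. Thus $\mathcal{D}^*$ is an admissible witness in \eqref{eq:delta}, and the desired bound $\Delta(\mathcal{D})\leq n_{out}$ follows.

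There is no substantial obstacle in this argument; its real content is the observation that modifying only the outlier users is already enough to push $Z$ below $T/2$. The side condition $k<n/4-n_{out}-1$ plays no role in the construction itself; it simply describes the regime in which the bound $\Delta(\mathcal{D})\leq n_{out}$ can be substituted into the hypothesis $k\leq n/4-1-\Delta(\mathcal{D})$ of Definition \ref{def:G}(b), so that case (b), rather than the trivial case (c), governs the smooth sensitivity in the downstream analysis of Theorem \ref{thm:heavytail}.
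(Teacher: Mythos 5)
Your construction is exactly the paper's: replace each outlier user's data with a set whose mean is $\mu$, note $d_H(\mathcal{D},\mathcal{D}^*)\leq n_{out}$, bound $Z(\mathcal{D}^*)\leq 2r_0<T/2$ via the triangle inequality and the condition $C_T>8M_p^{1/p}$ (equivalently $T>4r_0$), and invoke the definition of $\Delta$. The argument is correct and matches the paper's proof, including your (accurate) observation that the side condition on $k$ plays no role in the proof itself.
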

\begin{proof}
	From the dataset $\mathcal{D}$, we construct $\mathcal{D}^*$ as follows. Let $\mathcal{D}^*=\{D_1^*, \ldots, D_n^*\}$ such that $D_i^*=D_i$ if $\norm{\mathbf{Y}_i-\mu}\leq r_0$, and $D_i^*$ is an arbitrary set with mean value $\mu$. In other words, denote $\mathbf{Y}_i^*$ as the mean value in $D_i^*$, then
	\begin{eqnarray}
		\mathbf{Y}_i^*=\left\{
		\begin{array}{ccc}
			\mathbf{Y}_i &\text{if} & \norm{\mathbf{Y}_i-\mu}\leq r_0\\
			\mu &\text{if} & \norm{\mathbf{Y}_i-\mu}>r_0.
		\end{array}
		\right.
	\end{eqnarray}
	Then $d_H(\mathcal{D}, \mathcal{D}^*)=n_{out}$. Let $\bar{\mathbf{Y}}^*=(1/n)\sum_{i=1}^n \mathbf{Y}_i^*$. Note that
	\begin{eqnarray}
		Z(\mathcal{D}^*)=\underset{i}{\max}\norm{\bar{\mathbf{Y}}^*-\mu}+\underset{i}{\max}\norm{\mathbf{Y}_i^*-\mu}\leq 2r_0<\frac{1}{2}T.
	\end{eqnarray}
\end{proof}
$G(\mathcal{D}, k)$ can be bounded using Definition \ref{def:G}. Since $R_c=R$, for $k\geq n/4-n_{out} - 1$, $G(\mathcal{D}, k)\leq 2R$. Therefore, for sufficiently large $n$, if $n_{out}<n/8$, then
\begin{eqnarray}
	\lambda&=&\underset{k}{\max}e^{-\beta k}G(\mathcal{D}, k)\nonumber\\
	&\leq & \max\left\{\underset{k<n/4-n_{out}-1}{\max}\frac{2T}{n-k-n_{out}}, e^{-\beta(n/4-n_{out} - 1)} \right\}\nonumber\\
	&\overset{(a)}{\leq} & \underset{k<n/4-n_{out}-1}{\max}\frac{2T}{n-k-n_{out}}\nonumber\\
	&=&\frac{8T}{3n}.
	\label{eq:lambound}
\end{eqnarray}
(a) holds since Theorem \ref{thm:heavytail} requires that $n>8(1+(1/\beta)\ln(n/2T))$, thus the $e^{-\beta (n/4-n_{out} - 1)}\leq 2T/n$.

Then the mean squared error of $\hat{\mu}$ can be bounded by
\begin{eqnarray}
	\mathbb{E}\left[\norm{\hat{\mu}(\mathcal{D})-\mu}^2\right] &=& \mathbb{E}\left[\norm{\Clip(\hat{\mu}_0(\mathcal{D}), R)+\mathbf{W}-\mu}^2\right]\nonumber\\
	&=&\mathbb{E}\left[\norm{\Clip(\hat{\mu}_0(\mathcal{D}), R)-\mu}^2\right]+\mathbb{E}[\norm{\mathbf{W}}^2]\nonumber\\
	&=&\mathbb{E}\left[\norm{\Clip(\hat{\mu}_0(\mathcal{D}), R)-\mu}^2\mathbf{1}(n_{out}<\frac{1}{8}n)\right]+\mathbb{E}\left[\norm{\mathbf{W}}^2\mathbf{1}(n_{out}<\frac{1}{8}n)\right]\nonumber\\
	&&+\mathbb{E}\left[\norm{\Clip(\hat{\mu}_0(\mathcal{D}), R)-\mu}^2\mathbf{1}(n_{out}\geq\frac{1}{8}n)\right]+\mathbb{E}\left[\norm{\mathbf{W}}^2\mathbf{1}(n_{out}\geq \frac{1}{8}n)\right]\nonumber\\
	&:=&I_1+I_2+I_3+I_4.
\end{eqnarray}
From Chernoff inequality,
\begin{eqnarray}
	\text{P}(n_{out} > l)\leq e^{-n\nu}\left(\frac{en\nu}{l}\right)^l.
\end{eqnarray}
Thus
\begin{eqnarray}
	\text{P}\left(n_{out} > \frac{1}{8}n\right)\leq \left(\frac{8e\sqrt{d}}{n\epsilon}\right)^\frac{n}{8},
\end{eqnarray}
which decays faster than any polynomial. Therefore, $I_3$ and $I_4$ can be neglected in asymptotic analysis. Now we bound $I_1$ and $I_2$.

\textbf{Bound of $I_1$.} Note that
\begin{eqnarray}
	\norm{\hat{\mu}_0(\mathcal{D}) - \mu}\leq \norm{\hat{\mu}_0(\mathcal{D})-\hat{\mu}_0(\mathcal{D}^*)}+\norm{\hat{\mu}_0(\mathcal{D}^*-\mu)}.
\end{eqnarray}
Since $Z(\mathcal{D}^*)<T$, $\hat{\mu}_0(\mathcal{D}^*)=\bar{\mathbf{Y}}^*$, then
\begin{eqnarray}
	\mathbb{E}\left[\norm{\hat{\mu}_0(\mathcal{D}^*)-\mu}^2\right] &=&\mathbb{E}\left[\norm{\bar{\mathbf{Y}}^*-\mu}^2\right]\nonumber\\
	&\leq & \tr(\Var[\bar{\mathbf{Y}}^*])+\norm{\mathbb{E}[\bar{\mathbf{Y}}^*]-\mu}^2 \nonumber\\
	&\lesssim& \frac{1}{mn}+r_0\nu,
\end{eqnarray}
in which the last step uses Lemma \ref{lem:addbias}.

From Lemma \ref{lem:common}, 
\begin{eqnarray}
	\norm{\hat{\mu}_0(\mathcal{D}) - \hat{\mu}_0(\mathcal{D}^*)}&\leq& \omega(\mathcal{D}^*, n_{out}) \nonumber\\
	&\leq & \frac{n_{out} (T+Z(\mathcal{D}^*))}{n-n_{out}}\nonumber\\
	&\leq & \frac{\frac{3}{2}Tn_{out}}{n-n_{out}}.
\end{eqnarray}
The expectation can be bounded by
\begin{eqnarray}
	\mathbb{E}\left[\norm{\hat{\mu}_0(\mathcal{D})-\hat{\mu}_0(\mathcal{D}^*)}^2 \mathbf{1}\left(n_{out}< \frac{n}{8}\right)\right] &\leq & \mathbb{E}\left[\left(\frac{\frac{3}{2}Tn_{out}}{n-\frac{n}{8}}\right)^2\right]\nonumber\\
	&\sim & \frac{T^2}{n^2}\mathbb{E}[n_{out}^2]\nonumber\\
	&\sim & \frac{T^2}{n^2}.
\end{eqnarray}
Therefore
\begin{eqnarray}
	I_1\lesssim \frac{T^2}{n^2}+\frac{1}{mn}+\nu^2r_0^2.
\end{eqnarray}

\textbf{Bound of $I_2$.} 

\begin{eqnarray}
	\mathbb{E}\left[\norm{\mathbf{W}}^2 \mathbf{1}\left(n_{out}<\frac{n}{8}\right)\right]\leq \frac{d}{\alpha^2}\mathbb{E}\left[\lambda^2 \mathbf{1}(n_{out}<\frac{n}{8})\right]\lesssim \frac{dT^2}{n^2\epsilon^2}\ln \frac{1}{\delta},
\end{eqnarray}
in which the last step uses \eqref{eq:alpha} and \eqref{eq:lambound}.

Combine all terms, the final bound on the mean squared error is
\begin{eqnarray}
	\mathbb{E}\left[\norm{\hat{\mu}(\mathcal{D})-\mu}^2\right]&\lesssim& \frac{1}{mn}+\frac{dT^2}{n^2\epsilon^2}\ln \frac{1}{\delta}+r_0^2\nu^2\nonumber\\
	&\sim & \frac{1}{mn}+\frac{dT^2}{n^2\epsilon^2}\ln \frac{1}{\delta}\nonumber\\
	&\sim & \frac{1}{mn}+\frac{d}{n^2\epsilon^2}\ln \frac{1}{\delta}\left(\frac{\ln(nd)}{m}+m^{\frac{2}{p}-2}n^\frac{2}{p}\epsilon^\frac{2}{p}d^{-\frac{1}{p}} \ln^2(nd)\right),
\end{eqnarray}
in which the second step uses $T>4r_0$
(from \eqref{eq:Ttail} and \eqref{eq:r0df}) and $\nu=\sqrt{d}/(n\epsilon)$. The proof is complete.

\section{Comment on Two-stage Approach}\label{sec:baseline}
In this section,we provide a brief comment on the two-stage approach WME in \cite{levy2021learning}.

\emph{1) For heavy-tailed distributions.} We follow the steps of Appendix D.4 in \cite{levy2021learning}. 

\cite{levy2021learning} defined a $(\tau, \gamma)$ concentration, which requires that there exists a point $\mathbf{c}$, with probability $1-\gamma$, $\norm{\mathbf{Y}_i-\mathbf{c}}\leq \tau$ for all $i$, in which $\mathbf{Y}_i$ is the $i$-th user-wise average. We use capital letter here to denote that it is a random variable. 

From Appendix D.4 in \cite{levy2021learning}, $\gamma \sim 1/(mn^2\epsilon^2)$ is used. From Lemma \ref{lem:concunb}, let $\nu=1/(mn^3\epsilon^2)$, we have
\begin{eqnarray}
	\norm{\mathbf{Y}-\mu}\leq \max\left\{2M_p^\frac{1}{p}\sqrt{\frac{1}{m}\ln \frac{3(d+1)}{\nu}}, 4M_p^\frac{1}{p} (3m)^{\frac{1}{p}-1}\nu^{-\frac{1}{p}}\ln \frac{3(d+1)}{\nu} \right\}
\end{eqnarray}
with probability at least $1-\nu$, in which $\mathbf{Y}$ is i.i.d with $\mathbf{Y}_i$, $i=1,\ldots, n$. The value of $\tau$ can be obtained by taking union bound for all $i$:
\begin{eqnarray}
	\tau\sim \sqrt{\frac{1}{m}\ln(mn^3 d)}+m^{\frac{1}{p}-1}(mn^3)^\frac{1}{p}\ln(mn^3 d).
\end{eqnarray}
Follow other parts of Appendix D.4 in \cite{levy2021learning}, we can get
\begin{eqnarray}
	\mathbb{E}[\norm{\hat{\mu}-\mu}^2]\lesssim \frac{1}{mn}+\frac{d\ln^2(mnd)\ln\frac{1}{\delta}}{n^2\epsilon^2}\left[\frac{1}{m}+m^{\frac{4}{p}-2}n^\frac{6}{p}\ln(nmd)\right].
	\label{eq:msebase}
\end{eqnarray}

Moreover, \cite{levy2021learning} has already shown the tightness of the mean squared error for the bounded case. For heavy-tailed distributions, following Appendix D.4 in \cite{levy2021learning}, it can also be shown that the bound \eqref{eq:msebase} is tight.

\emph{2) For imbalanced users.} For simplicity, we focus on the one dimensional problem. With the two-stage approach, for users with different $m_i$, the final estimate will be
\begin{eqnarray}
	\hat{\mu}=\frac{1}{N}\sum_{i=1}^n m_i\Pi_{[a,b]}Y_i,
\end{eqnarray}
in which $\pi_{[a,b]}$ means clipping on $[a,b]$. The sensitivity is determined by the user with maximum $m_i$. Let $m_{\max}=\max_i m_i$. Then the sensitivity if $m_{\max} (b-a)/n$. 

From \cite{levy2021learning}, now suppose that $m_i$ are the same for all $i$ except one that is significantly larger. Then $\tau\sim R\sqrt{n\ln n/N}$, $b-a=4\tau$, the sensitivity scales as $(Rm_{\max}/N)\sqrt{n\ln n/N}$. Denote $\gamma_0=nm_{\max}/N$ as the ratio between maximum $m_i$ and average $m_i$. Then the mean squared error induced by privacy mechanism is
\begin{eqnarray}
	\mathbb{E}[W^2]\gtrsim \frac{R^2m_{\max}^2}{N^2}\frac{n\ln N}{N}=\frac{\gamma_0^2R^2\ln n}{Nn\epsilon^2}.
\end{eqnarray}

\section{Proof of Lemma \ref{lem:unbcommon}}\label{sec:unbcommon}
Following the proof of Theorem \ref{thm:mse} in Appendix \ref{sec:mse}, denote $\mathbf{y}_i=\mathbf{y}_i(\mathcal{D})$, $\mathbf{y}_i'=\mathbf{y}_i(\mathcal{D}')$ and $\bar{\mathbf{y}}=(1/n)\sum_{i=1}^n \mathbf{y}_i$. The proof begins with the following lemma.
\begin{lem}\label{lem:ybar2}
	If $T_i>Z_i(\mathcal{D})$ for all $i$, then $\hat{\mu}_0(\mathcal{D})=\bar{\mathbf{y}}$.
\end{lem}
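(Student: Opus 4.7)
The plan is to mirror the argument used to prove Lemma \ref{lem:ybar}, adapting it to the weighted, heterogeneous-threshold setting. I will first show that $\bar{\mathbf{y}}$ is a stationary point of the objective $F(\mathbf{s}) := \sum_{i=1}^n w_i \phi_i(\mathbf{s}, \mathbf{y}_i)$, then invoke convexity (plus local strong convexity) to upgrade this to uniqueness.

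For the stationarity step, note that the hypothesis $T_i > Z_i(\mathcal{D}) = \norm{\bar{\mathbf{y}} - \mathbf{y}_i}$ means that at the candidate point $\mathbf{s} = \bar{\mathbf{y}}$, every Huber loss $\phi_i$ sits in its quadratic regime. Consequently $\nabla \phi_i(\bar{\mathbf{y}}, \mathbf{y}_i) = \bar{\mathbf{y}} - \mathbf{y}_i$ for every $i$. Summing with weights,
\begin{equation*}
\nabla F(\bar{\mathbf{y}}) = \sum_{i=1}^n w_i (\bar{\mathbf{y}} - \mathbf{y}_i) = \bar{\mathbf{y}} \sum_{i=1}^n w_i - \sum_{i=1}^n w_i \mathbf{y}_i = \bar{\mathbf{y}} - \bar{\mathbf{y}} = \mathbf{0},
\end{equation*}
using $\sum_i w_i = 1$ and the definition $\bar{\mathbf{y}} = \sum_i w_i \mathbf{y}_i$. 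Hence $\bar{\mathbf{y}}$ is a critical point of $F$.

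For uniqueness, each $\phi_i(\cdot, \mathbf{y}_i)$ is convex (Huber loss is a convex function of its first argument), so $F$ is convex. Moreover, within the open ball $\{\mathbf{s} : \norm{\mathbf{s} - \mathbf{y}_i} < T_i\}$ the Hessian of $\phi_i$ equals the identity, so at $\bar{\mathbf{y}}$ every summand contributes the identity and
\begin{equation*}
\nabla^2 F(\bar{\mathbf{y}}) \succeq \sum_{i=1}^n w_i \, \mathbf{I} = \mathbf{I},
\end{equation*}
making $F$ strongly convex in a neighborhood of $\bar{\mathbf{y}}$. A convex function that is strongly convex near a stationary point attains a strict local minimum there, and by convexity this local minimum is global and unique. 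Therefore $\hat{\mu}_0(\mathcal{D}) = \bar{\mathbf{y}}$.

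There is no real obstacle here; the proof is a direct adaptation of Lemma \ref{lem:ybar}. The only place where the imbalanced setting matters is in ensuring the weighted gradient cancels, which works precisely because the $w_i$ are normalized to sum to one and $\bar{\mathbf{y}}$ is defined as the corresponding weighted mean, so the algebraic cancellation goes through verbatim.
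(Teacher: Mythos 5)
Your proof is correct and follows essentially the same route as the paper: show that every $\phi_i$ is in its quadratic regime at $\bar{\mathbf{y}}$ so the weighted gradient vanishes by the normalization $\sum_i w_i = 1$, then conclude by convexity. The paper's own proof of this lemma stops at stationarity; your added strong-convexity/uniqueness step simply transplants the argument the paper already gives for the balanced-case analogue (Lemma \ref{lem:ybar}), so nothing new is needed.
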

\begin{proof}
	We just need to prove that $\bar{\mathbf{y}}$ is a minimizer of $\sum_{i=1}^n w_i\phi_i(\mathbf{s},\mathbf{y}_i)$. Since $Z_i<T_i$ for all $i$, \begin{eqnarray}
		\nabla\phi_i(\bar{\mathbf{y}}, \mathbf{y}_i)=\bar{\mathbf{y}}-\mathbf{y}_i.
	\end{eqnarray}
	Then
	\begin{eqnarray}
		\sum_{i=1}^n w_i\nabla \phi_i(\bar{\mathbf{y}}, \mathbf{y}_i)=\sum_{i=1}^n w_i(\bar{\mathbf{y}}-\mathbf{y}_i) = 0.
	\end{eqnarray}
\end{proof}

Now we derive the sensitivity. Without loss of generality, we assume that first $k$ users are replaced by others. Define
\begin{eqnarray}
	g(\mathbf{s})=\sum_{i\in I} w_i\nabla \phi(\mathbf{s}, \mathbf{y}_i')+\sum_{i\in [n]\setminus I} w_i\nabla \phi(\mathbf{s}, \mathbf{y}_i),
	\label{eq:gdf_ub}
\end{eqnarray}
then $g(\hat{\mu}_0(\mathcal{D}')) = 0$, and
\begin{eqnarray}
	g(\hat{\mu}_0(\mathcal{D})) &=& \sum_{i\in I} w_i\nabla \phi(\hat{\mu}_0(\mathcal{D}), \mathbf{y}_i')+\sum_{i\in [n]\setminus I} w_i\nabla \phi(\hat{\mu}_0(\mathcal{D}), \mathbf{y}_i)\nonumber\\
	&=&\sum_{i\in I} w_i\nabla \phi(\hat{\mu}_0(\mathcal{D}), \mathbf{y}_i') - \sum_{i\in I} w_i\nabla \phi(\hat{\mu}_0(\mathcal{D}), \mathbf{y}_i)+\sum_{i=1}^n w_i\nabla \phi(\hat{\mu}_0(\mathcal{D}), \mathbf{y}_i)\nonumber\\
	&=&\sum_{i\in I} w_i\nabla \phi(\hat{\mu}_0(\mathcal{D}), \mathbf{y}_i') - \sum_{i\in I} w_i\nabla \phi(\hat{\mu}_0(\mathcal{D}), \mathbf{y}_i).
\end{eqnarray}
Recall that $Z=\max_i\norm{\bar{\mathbf{y}}-\mathbf{y}_i}$. From Lemma \ref{lem:ybar2}, if $Z_i<T_i$ for all $i$, then
\begin{eqnarray}
	\norm{\hat{\mu}_0(\mathcal{D})-\mathbf{y}_i}=\norm{\bar{\mathbf{y}}-\mathbf{y}_i}\leq Z,	
\end{eqnarray}
thus
\begin{eqnarray}
	\norm{g(\hat{\mu}_0(\mathcal{D}))}\leq \sum_{i\in I} w_i(T_i+Z_i).
\end{eqnarray}
From \eqref{eq:gdf_ub}, 
\begin{eqnarray}
	\nabla g(\mathbf{s}) &=& \sum_{i\in I} w_i\nabla^2 \phi(\mathbf{s}, \mathbf{y}_i')+\sum_{i\in [n]\setminus I} w_i\nabla^2 \phi(\mathbf{s}, \mathbf{y}_i)\nonumber\\
	&\succeq & \sum_{i\in [n]\setminus I} w_i\mathbf{1}(\norm{\mathbf{s}-\mathbf{y}_i}\leq T_i).
\end{eqnarray}
For all $\mathbf{s}$ satisfying $\norm{\mathbf{s}-\bar{\mathbf{y}}}\leq \min_i(T_i-Z_i)$, $\norm{\mathbf{s}-\mathbf{y}_i}\leq T_i$ always holds, thus
\begin{eqnarray}
	\nabla g(\mathbf{s})\succeq \left(\sum_{i\in [n]\setminus I} w_i\right) \mathbf{I} = \left(N-\sum_{i\in I} w_i\right)\mathbf{I}.
\end{eqnarray}
Under condition $h(\mathcal{D}, 1)\leq \underset{i}{\min}(T_i-Z_i(\mathcal{D}))$, 
\begin{eqnarray}
	\norm{\hat{\mu}_0(\mathcal{D}')-\hat{\mu}_0(\mathcal{D})}\leq \frac{\sum_{i\in I} w_i(T_i+Z_i(\mathcal{D}))}{N-\sum_{i\in I} w_i}.
\end{eqnarray}
Recall that we have assumed $\mathcal{D}'=\{D_1', \ldots, D_k', D_{k+1}, \ldots, D_n\}$, i.e. first $k$ users are replaced by others. Actually, the replaced users can be arbitrarily selected from $n$ users. Therefore, a simple generalization yields:
\begin{eqnarray}
	\omega(\mathcal{D}, k) \leq \underset{I\subseteq [n], |I|=k}{\max}\frac{\sum_{i\in I} w_i(T_i+Z_i(\mathcal{D}))}{N-\sum_{i\in I}w_i}.
\end{eqnarray}
The proof is complete.

\section{Proof of Lemma \ref{lem:unboutlier}}\label{sec:unboutlier}
Similar to the proof of Lemma \ref{lem:outlier} in Appendix \ref{sec:outlier}, we analyze two cases.

For the first case, $u\notin I$, in which $u$ and $I$ have the same definition as in Appendix \ref{sec:balanced}, without loss of generality, suppose $\mathcal{D}$ and $\mathcal{D}^*$ differ in the first $k$ users, while $\mathcal{D}$ and $\mathcal{D}'$ differ in the $(k+1)$-th user. Then
\begin{eqnarray}
	\norm{\hat{\mu}_0(\mathcal{D}')-\hat{\mu}_0(\mathcal{D})} &\leq& \frac{m_{k+1}(T_{k+1}+Z_{k+1}(\mathcal{D}^*)+\omega(\mathcal{D}^*, k))}{N-\sum_{i=1}^{k+1} w_i}.
\end{eqnarray}
From Lemma \ref{lem:unbcommon} and the condition $h(\mathcal{D}^*, k+1)<\min_i(T_i-Z_i(\mathcal{D}^*))$ in Lemma \ref{lem:unboutlier},
\begin{eqnarray}
	\omega(\mathcal{D}^*, k)\leq h(\mathcal{D}^*, k)<\underset{i}{\min}(T_i-Z_i(\mathcal{D}^*))\leq T_{k+1}-Z_{k+1}(\mathcal{D}^*),
\end{eqnarray}
thus
\begin{eqnarray}
	\norm{\hat{\mu}_0(\mathcal{D}')-\hat{\mu}_0(\mathcal{D})} &\leq& \frac{2m_{k+1}T_{k+1}}{N-\sum_{i=1}^{k+1} w_i}.
\end{eqnarray}
For the second case, following steps in Appendix \ref{sec:concentrated},
\begin{eqnarray}
	\norm{\hat{\mu}_0(\mathcal{D}')-\hat{\mu}_0(\mathcal{D})}\leq \frac{2m_kT_k}{N-\sum_{i=1}^k w_i}
\end{eqnarray}
Taking maximum, we have
\begin{eqnarray}
	LS(\mathcal{D})\leq \frac{2\underset{i\in [n]}{\max} w_iT_i}{N\left(1-\gamma(k+1)\right)}.
	\label{eq:Abound}
\end{eqnarray}
\section{Proof of Theorem \ref{thm:mseunb}}\label{sec:mseunb}
Similar to the proof of Theorem \ref{thm:mse}, denote $Z_i=Z_i(\mathcal{D})$, $\mathbf{Y}_i=\mathbf{y}_i(\mathcal{D})$ and $\bar{\mathbf{Y}}=(1/n)\sum_{i=1}^n \mathbf{Y}_i$. Denote 
\begin{eqnarray}
	N_c=\sum_{i=1}^n m_i\wedge m_c,
\end{eqnarray}
then from the statement in Theorem \ref{thm:mseunb}, $w_i=m_i\wedge m_c/N_c$. From the statement of Theorem \ref{thm:mseunb}, recall that $m_c=\gamma N/n$.

The proof starts with the following lemma.
\begin{lem}\label{lem:Zub}
	With probability $1-(n+1)/(Nn^2)$, for all $i$,
	\begin{eqnarray}
		Z_i\leq \sqrt{\frac{32}{3}R^2\ln(Nn^2(d+1))}\left(\frac{1}{N_c}+\frac{1}{\sqrt{m_i}}\right).
		\label{eq:Zub}
	\end{eqnarray}
\end{lem}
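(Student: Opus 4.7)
\textbf{Proof proposal for Lemma \ref{lem:Zub}.} The plan is to decompose $Z_i = \norm{\bar{\mathbf{Y}} - \mathbf{Y}_i}$ via the triangle inequality into $\norm{\bar{\mathbf{Y}} - \mu} + \norm{\mathbf{Y}_i - \mu}$ and to control each piece with a high-probability vector concentration bound of the same type used in the proof of Theorem \ref{thm:mse}, namely a statement of the form $\text{P}(\norm{\mathbf{y}_i(\mathcal{D}) - \mu} > t) \leq (d+1)\exp(-3m_i t^2/(32R^2))$ (Lemma \ref{lem:concentration} in Appendix \ref{sec:common}). The two $O(1/\sqrt{m_i})$ and $O(1/N_c)$ terms in the claimed bound correspond, respectively, to the user-wise mean $\mathbf{Y}_i$ and to the (much better concentrated) global weighted mean $\bar{\mathbf{Y}}$.

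First, I would apply the bounded-support concentration inequality to each user-wise mean $\mathbf{Y}_i$ individually, with failure probability $1/(Nn^3)$. Inverting the tail gives $\norm{\mathbf{Y}_i - \mu} \leq \sqrt{(32R^2/(3m_i))\ln((d+1)Nn^3)}$ for a single user, and a union bound over the $n$ users yields this for all $i$ simultaneously at total failure cost $n \cdot 1/(Nn^3) = 1/(Nn^2)$. This produces the $1/\sqrt{m_i}$ summand in the claim.

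Second, I would bound $\norm{\bar{\mathbf{Y}} - \mu}$ by unfolding the weighted combination over the $N$ primitive samples:
\begin{eqnarray*}
\bar{\mathbf{Y}} - \mu \;=\; \sum_{i=1}^n \sum_{j=1}^{m_i} \frac{w_i}{m_i}\bigl(X_{ij} - \mu\bigr).
\end{eqnarray*}
Since $w_i = (m_i \wedge m_c)/N_c \leq m_i/N_c$, each summand has norm at most $2R/N_c$, and there are $N$ independent summands in total. Applying the same style of vector Hoeffding/concentration inequality to this weighted sum, with failure probability $1/(Nn^2)$, gives a tail at the scale $R\sqrt{\ln/N_c}$; because $N_c \geq m_i$, this quantity is always dominated by the crude bound $\sqrt{(32R^2/3)\ln(Nn^2(d+1))}/N_c$ that appears in the stated lemma (absorbing the $\sqrt{N/N_c} = O(1)$ factor into the constant), so the second summand $1/N_c$ in the claim is reached. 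Combining the two estimates via the triangle inequality and taking a final union bound over the two events yields total failure at most $(n+1)/(Nn^2)$ and the stated inequality.

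The main obstacle is the second step: correctly unfolding the weighted user-average $\bar{\mathbf{Y}}$ into its $N$ primitive contributions, controlling the per-summand norm through $w_i/m_i \leq 1/N_c$, and arranging the effective variance so that the resulting concentration scale collapses to the claimed $1/N_c$ factor rather than the crude $1/\sqrt{N_c}$ one would get from a naive Hoeffding application to the $n$ user-wise summands. Once this bookkeeping is carried out, the remaining steps—per-user concentration, union bound, and triangle inequality—are routine applications of Lemma \ref{lem:concentration}.
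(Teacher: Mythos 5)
Your overall route---the triangle inequality $Z_i \leq \norm{\bar{\mathbf{Y}}-\mu}+\norm{\mathbf{Y}_i-\mu}$, per-user concentration via Lemma \ref{lem:concentration}, a separate concentration bound for the global weighted mean, and a union bound over the $n+1$ events---is exactly the paper's, and your first step (the $1/\sqrt{m_i}$ term) is fine up to a cosmetic mismatch in the logarithm ($\ln(Nn^3(d+1))$ versus the stated $\ln(Nn^2(d+1))$; taking per-user failure probability $1/(Nn^2)$ as the paper does reproduces both the stated constant and the stated total failure probability $(n+1)/(Nn^2)$ exactly).

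The genuine gap is in your second step. You claim that the weighted mean $\bar{\mathbf{Y}}-\mu = \sum_{i,j}\frac{w_i}{m_i}(X_{ij}-\mu)$ concentrates at scale $R\sqrt{\ln(\cdot)}/N_c$, i.e.\ at scale $1/N$. That is unattainable: with $N$ independent summands each bounded by $2R/N_c$, the variance proxy is $\sum_{i,j}(w_i/m_i)^2\cdot 4R^2 \leq 4R^2N/N_c^2$, so any Hoeffding/Bernstein-type bound yields a deviation of order $R\sqrt{N\ln(\cdot)}/N_c \sim R\sqrt{\ln(\cdot)/N}$---larger than your target by a factor of $\sqrt{N}$---and by the CLT this is the true order of fluctuation for any nondegenerate distribution. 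The ``bookkeeping'' you defer to cannot collapse $1/\sqrt{N}$ to $1/N$, and your parenthetical ``$\sqrt{N/N_c}=O(1)$'' is not the relevant ratio. Notably, the paper's own proof does not attempt this either: it bounds $\norm{\bar{\mathbf{Y}}-\mu}$ by $\sqrt{\tfrac{32R^2}{3N}\ln(Nn^2(d+1))}$, i.e.\ at the $1/\sqrt{N}$ scale, so the displayed $1/N_c$ in the lemma is evidently an internal inconsistency (read: $1/\sqrt{N}$), and only $a_i\lesssim\sqrt{\ln(\cdot)/m_i}$ is used downstream in Lemma \ref{lem:condhold}, so the result survives with $1/\sqrt{N}$ in place of $1/N_c$ because $N\geq m_i$. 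Your instinct to unfold $\bar{\mathbf{Y}}$ into its $N$ primitive contributions is actually more careful than the paper, which applies Lemma \ref{lem:concentration} to the weighted mean as if it were unweighted; but the correct repair is to accept the $1/\sqrt{N}$ rate and absorb it into the $1/\sqrt{m_i}$ term, not to chase $1/N_c$.
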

From now on, denote $a_i$ as the right hand side of \eqref{eq:Zub}.
\begin{proof}
	From Lemma \ref{lem:concentration},
	\begin{eqnarray}
		\text{P}(\norm{\bar{\mathbf{Y}}-\mu}>t)\leq (d+1)e^{-\frac{3Nt^2}{32R^2}},
	\end{eqnarray}
	and
	\begin{eqnarray}
		\text{P}\left(\norm{\mathbf{Y}_i-\mu}>t\right)\leq (d+1)e^{-\frac{3m_it^2}{32R^2}}.
	\end{eqnarray}
	Recall that $Z_i=\norm{\bar{\mathbf{Y}}-\mathbf{Y}_i}$. Therefore
	\begin{eqnarray}
		\text{P}\left(\cup_{i=1}^n \{Z_i>a_i\}\right) &\leq & \text{P}\left(\norm{\bar{\mathbf{Y}}-\mu}> \sqrt{\frac{32R^2}{3N}\ln(Nn^2(d+1))} \right)\nonumber\\
		&&+\sum_{i=1}^n \text{P}\left(\norm{\mathbf{Y}_i-\mu}>\sqrt{\frac{32R^2}{3m_i}\ln(Nn^2(d+1))}\right)\nonumber\\
		&\leq & \frac{1}{Nn^2}+n \frac{1}{Nn^2}\nonumber\\
		&=& \frac{n+1}{Nn^2}.
	\end{eqnarray}
	The proof of Lemma \ref{lem:Zub} is complete.
\end{proof}
Since $Z_i\leq a_i<T_i$ for all $i$, from Lemma \ref{lem:ybar2}, $\hat{\mu}_0(\mathcal{D})=\bar{\mathbf{Y}}$. Moreover, we show that $\Delta(\mathcal{D})=0$, in which $\Delta$ is defined in \eqref{eq:deltaunb}. This needs the following lemma.
\begin{lem}\label{lem:condhold}
	For $k\leq n/(8\gamma)$, if $Z_i\leq a_i$ for all $i$, in which $a_i$ is the right hand side of \eqref{eq:Zub}, then
	\begin{eqnarray}
		h(\mathcal{D}, k)<\underset{i\in [n]}{\min} (T_i-Z_i).
		\label{eq:valid}
	\end{eqnarray}
\end{lem}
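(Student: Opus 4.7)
The plan is a direct comparison: upper-bound the ratio $h(\mathcal{D},k)$ and lower-bound $\min_j(T_j-Z_j)$, then verify that the quantitative hypotheses $C_T > 16\sqrt{2/3}$, $k \leq n/(8\gamma)$, and Assumption \ref{ass:unbalance} force strict inequality. Abbreviate $L := R\sqrt{\ln(Nn^2(d+1))}$, so that $T_i = C_T L/\sqrt{m_i\wedge m_c}$ and the hypothesis reads $Z_i \leq \sqrt{32/3}\,L(1/N_c + 1/\sqrt{m_i})$.

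For the numerator of $h(\mathcal{D},k)$, I would bound each summand term-wise. Using $w_i = (m_i\wedge m_c)/N_c$, the contribution $w_iT_i$ collapses to $C_T L\sqrt{m_i\wedge m_c}/N_c \leq C_T L\sqrt{m_c}/N_c$, and the inequality $(m_i\wedge m_c)/\sqrt{m_i} \leq \sqrt{m_i\wedge m_c} \leq \sqrt{m_c}$ gives $w_iZ_i \leq \sqrt{32/3}\,L\bigl(\sqrt{m_c}/N_c + m_c/N_c^2\bigr)$. Summing $k$ such terms yields an upper bound proportional to $k\sqrt{m_c}/N_c$ with a smaller $km_c/N_c^2$ tail.

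For the denominator, Assumption \ref{ass:unbalance} combined with $m_c=\gamma N/n$ gives $N_c \geq N/2 \geq nm_c/(2\gamma)$. Since each $w_i \leq m_c/N_c$, the cap $k \leq n/(8\gamma)$ forces $\sum_{i=n-k+1}^n w_i \leq km_c/N_c \leq 1/4$, hence $\sum_{i=1}^{n-k} w_i \geq 3/4$. Substituting back gives an upper bound of the form
\begin{equation*}
h(\mathcal{D},k) \leq \frac{(C_T+\sqrt{32/3})L}{3\sqrt{m_c}} + \frac{2\sqrt{32/3}\,L\gamma}{3nm_c}.
\end{equation*}

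The matching lower bound $T_j - Z_j \geq (C_T - \sqrt{32/3})L/\sqrt{m_j\wedge m_c} - \sqrt{32/3}\,L/N_c$ follows from $1/\sqrt{m_j}\leq 1/\sqrt{m_j\wedge m_c}$; the worst case is $m_j\wedge m_c = m_c$, and $1/N_c \leq 2\gamma/(nm_c)$ controls the tail. After rearrangement, the desired strict inequality reduces to $(2C_T - 4\sqrt{32/3})\,n\sqrt{m_c} > 8\sqrt{32/3}\,\gamma$. Since $C_T > 16\sqrt{2/3} = 4\sqrt{32/3}$, the left-hand coefficient exceeds $4\sqrt{32/3}$, and the theorem's assumption $n > 8\gamma(1+(1/(2\beta))\ln(Nn))$ together with $m_c \geq \gamma$ comfortably yields $n\sqrt{m_c} > 2\gamma$. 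The main obstacle is the bookkeeping of the $1/N_c$ and $1/N_c^2$ corrections across numerator, denominator, and the lower bound simultaneously: the slack built into $C_T > 16\sqrt{2/3}$ (rather than merely $C_T > \sqrt{32/3}$) is exactly what is needed to absorb these lower-order terms on both sides.
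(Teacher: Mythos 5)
Your proof is correct and follows essentially the same route as the paper: bound the numerator of $h(\mathcal{D},k)$ termwise via $w_iT_i\leq C_TL\sqrt{m_c}/N_c$ and $w_ia_i\lesssim L\sqrt{m_c}/N_c$, lower-bound the denominator using $k\leq n/(8\gamma)$ together with $N_c\geq N/2$ from Assumption \ref{ass:unbalance}, lower-bound $\min_i(T_i-Z_i)$ by roughly $(C_T-\sqrt{32/3})L/\sqrt{m_c}$, and close the gap with the slack $C_T>4\sqrt{32/3}$. The only cosmetic difference is that you carry the $1/N_c$ correction in $a_i$ as an explicit lower-order term (discharged via $n>8\gamma$ and $m_c\geq\gamma$), whereas the paper absorbs it into the $1/\sqrt{m_i}$ term and lets both bounds meet at $\tfrac{A}{2}\sqrt{n/(\gamma N)}$.
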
 
\begin{proof}
	Recall $T_i$ in \eqref{eq:Ti}, let
	\begin{eqnarray}
		A=C_T R\sqrt{\ln(Nn^2(d+1))},
	\end{eqnarray}
	then $T_i=A/\sqrt{m_i\wedge m_c}$, thus
	\begin{eqnarray}
		\underset{i}{\min} (T_i-Z_i)&\geq& \min_i \left(\frac{A}{\sqrt{m_i\wedge m_c}} - a_i\right)\nonumber\\
		&\geq & \min_i\left(\frac{A}{\sqrt{m_i\wedge m_c}}-2\sqrt{\frac{32R^2}{3m_i}\ln(Nn^2(d+1))}\right)\nonumber\\
		&\geq & \min_i\left(\frac{A}{\sqrt{m_i\wedge m_c}} - \frac{A}{2\sqrt{m_i}}\right)\nonumber\\
		&\geq & \frac{A}{2\sqrt{m_c}}\nonumber\\
		&=&\frac{A}{2}\sqrt{\frac{n}{\gamma N}}.
		\label{eq:condA1}
	\end{eqnarray}
	Now we provide an upper bound of $h(\mathcal{D}, k)$:
	\begin{eqnarray}
		h(\mathcal{D}, k)&\overset{(a)}{\leq}& \frac{\sum_{i=n-k+1}^n w_i(T_i+a_i)}{\sum_{i=1}^{n-k} w_i}\nonumber\\
		&=& \frac{\sum_{i=n-k-1}^n (m_i\wedge m_c) (T_i+a_i)}{\sum_{i=1}^{n-k} m_i\wedge m_c}\nonumber\\
		&\leq & \frac{\sum_{i=n-k+1}^n (m_i\wedge m_c)\left(\frac{A}{\sqrt{m_i\wedge m_c}}+\frac{A}{2\sqrt{m_i}}\right)}{N_c-km_c}\nonumber\\
		&\leq & \frac{\frac{3}{2}kA\sqrt{m_c}}{\frac{1}{2}N-\frac{k\gamma N}{n}}\nonumber\\
		&\overset{(b)}{<}& \frac{\frac{3n}{8\gamma}A\sqrt{\frac{\gamma N}{n}}}{\frac{3}{4}N}\nonumber\\
		&=&\frac{A}{2}\sqrt{\frac{n}{\gamma N}}.
		\label{eq:condA2}
	\end{eqnarray}
	(a) comes from \eqref{eq:h}, and that $Z_i\leq a_i$ for all $i$. (b) uses the condition $k<n/(8\gamma)$. 
	
	Combine \eqref{eq:condA1} and \eqref{eq:condA2}, \eqref{eq:valid} holds. The proof of Lemma \ref{lem:condhold} is complete.
\end{proof}

Now it remains to bound of $G(\mathcal{D}, k)$. From Lemma \ref{lem:condhold}, if $Z_i\leq a_i$ for all $i$, then $\Delta(\mathcal{D}) = 0$, since $h(\mathcal{D}^*, k_0)<\min_i(T_i-Z_i)$. Then for all $k\leq k_0-1$, 
\begin{eqnarray}
	G(\mathcal{D}, k)&\overset{(a)}{\leq}& \frac{2\max_i w_iT_i}{\sum_{i=1}^{n-k-1} w_i}\nonumber\\
	&=&2\frac{\max_i(m_i\wedge m_c)T_i}{\sum_{i=1}^{n-k-1} m_i\wedge m_c}\nonumber\\
	&\leq & 2\frac{A\sqrt{m_c}}{N_c-(k+1)m_c}\nonumber\\
	&\overset{(b)}{\leq} & \frac{2A\sqrt{m_c}}{\frac{1}{2}N-(k+1)\frac{\gamma N}{n}}\nonumber\\
	&=&\frac{4Am_c}{N\left(1-2\gamma \frac{k+1}{n}\right)}\nonumber\\
	&\overset{(c)}{\leq} & \frac{16A}{3N}\sqrt{m_c}\nonumber\\
	&=&\frac{16A}{3}\sqrt{\frac{\gamma}{Nn}}.
	\label{eq:Gdk}
\end{eqnarray}
(a) comes from Definition \ref{def:Gunb}. Note that $\Delta(\mathcal{D}) = 0$. For $k\leq k_0-1$, $G(\mathcal{D}, k)$ is $h(\mathcal{D}, 1)$ if $h(\mathcal{D}, 1)\leq \underset{i}{\min}(T_i-Z_i(\mathcal{D}))$ holds, or $2\max_i w_iT_i/(\sum_{i=1}^{n-k-1} w_i)$. It can be shown that the former one is less than the latter, thus (a) holds. For (b), from Assumption \ref{ass:unbalance},
\begin{eqnarray}
	N_c=\sum_{i=1}^n m_i\wedge m_c\geq N-\sum_{k:m_k>\gamma N/n} m_k\geq N-\frac{N}{2}=\frac{N}{2}.
\end{eqnarray}
(c) holds because $\gamma(k+1)/n\leq \gamma k_0/n\leq 1/8$.

From \eqref{eq:Gdk}, the smooth sensitivity can be bounded by
\begin{eqnarray}
	S(\mathcal{D})\leq \max\left\{\frac{16A}{3}\sqrt{\frac{\gamma}{Nn}}, 2Re^{-\beta k_0} \right\}.
	\label{eq:Sbound}
\end{eqnarray}
Recall that $k_0=\lfloor n/(8\gamma)\rfloor$. In Theorem \ref{thm:mseunb}, it is required that $n>8\gamma(1+(1/2\beta)\ln(Nn))$, thus $k_0\geq \ln(Nn)/(2\beta)$, and $e^{-\beta k_0}=1/\sqrt{Nn}$. Therefore, the second term in \eqref{eq:Sbound} does not dominate. This result indicates that as long as $Z_i\leq a_i$ for all $i$, 
\begin{eqnarray}
	S(\mathcal{D})\leq \frac{16A}{3}\sqrt{\frac{\gamma}{Nn}}.
\end{eqnarray}

Now we bound the mean squared error. Denote $E$ as the event such that $Z_i\leq a_i$ for all $i$. Then
\begin{eqnarray}
	\mathbb{E}\left[\norm{\hat{\mu}(D)-\mu}^2\right] &\leq & \mathbb{E}\left[\norm{\hat{\mu}_0(D)-\mu}^2\mathbf{1}(E)\right] + \mathbb{E}\left[\norm{W}^2\mathbf{1}(E)\right]\nonumber\\
	&&+\mathbb{E}\left[\norm{\Clip(\hat{\mu}_0(D), R)-\mu}^2\mathbf{1}(E^c)\right]+\mathbb{E}\left[\norm{W}^2\mathbf{1}(E^c)\right]\nonumber\\
	&:=& I_1+I_2+I_3+I_4.
\end{eqnarray}

\textbf{Bound of $I_1$.}
\begin{eqnarray}
	I_1&=&\mathbb{E}\left[\norm{\bar{\mathbf{Y}}-\mu}^2\mathbf{1}(E)\right]\nonumber\\
	&\leq &\mathbb{E}\left[\norm{\bar{\mathbf{Y}}-\mu}^2\right]\nonumber\\
	&=&\tr\Var\left[\sum_i w_i\mathbf{Y}_i\right]\nonumber\\
	&=&\sum_i w_i^2 \frac{R^2}{m_i}\nonumber\\
	&=&\frac{\sum_i (m_i\wedge m_c)^2\frac{R^2}{m_i}}{(\sum_i m_i\wedge m_c)^2}\nonumber\\
	&\leq & \frac{1}{\sum_i(m_i\wedge m_c)}\nonumber\\
	&=&\frac{1}{N_c}\nonumber\\
	&\leq & \frac{2}{N}.
\end{eqnarray}

\textbf{Bound of $I_2$.} 
\begin{eqnarray}
	I_2&=& \frac{\mathbb{E}[S^2(\mathcal{D}) \mathbf{1}(E)]}{\alpha^2}d\nonumber\\
	&\lesssim & \frac{d}{\alpha^2}A^2\frac{\gamma}{Nn}\nonumber\\
	&\sim & \frac{dR^2 \gamma}{Nn\epsilon^2}\ln(Nn^2d)\ln \frac{1}{\delta}.
\end{eqnarray}

\textbf{Bound of $I_3$.}
\begin{eqnarray}
	I_3&\leq& 4R^2\text{P}(E^c)\nonumber\\
	&\leq & 4\frac{n+1}{Nn^2}.
\end{eqnarray}

\textbf{Bound of $I_4$.}

\begin{eqnarray}
	I_4\lesssim \frac{\mathbb{E}[\lambda^2 \mathbf{1}(E^c)]}{\alpha^2}d\lesssim \frac{dR^2}{\epsilon^2}\ln \frac{1}{\delta}\frac{1}{Nn}.
\end{eqnarray}

$I_3$ and $I_4$ converges to zero faster than any polynomial. Therefore
\begin{eqnarray}
	\mathbb{E}\left[\norm{\hat{\mu}(D)-\mu}^2\right]&\lesssim & \frac{R^2}{mn}+\frac{dR^2\gamma}{N\epsilon^2}\ln(Nn^2d)\ln \frac{1}{\delta}.
\end{eqnarray}

\section{Common Lemmas}\label{sec:common}
\begin{lem}\label{lem:concentration}
	(Concentration inequality of bounded random vector) Given a random vector $\mathbf{X}$ supported at $B_d(\mathbf{0}, R)$, and $\mathbb{E}[\mathbf{X}]=\mu$. $\mathbf{X}_1,\ldots, \mathbf{X}_m$ are $m$ i.i.d copies of $\mathbf{X}$. Denote $\bar{\mathbf{X}}$ as the sample mean, i.e. $\bar{\mathbf{X}}=(1/m)\sum_{j=1}^m \mathbf{X}_j$. Then
	\begin{eqnarray}
		\text{P}(\norm{\bar{\mathbf{X}}-\mu}>t)\leq (d+1)e^{-\frac{3mt^2}{32R^2}}.
	\end{eqnarray}
\end{lem}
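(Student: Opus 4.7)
The plan is to reduce the vector concentration problem to a Hermitian matrix one via the standard self-adjoint dilation, and then invoke the matrix Bernstein inequality, whose dimension factor naturally produces the $(d+1)$ prefactor. Specifically, for each $j$ I would introduce the $(d+1)\times(d+1)$ Hermitian matrix $\mathbf{M}_j$ with the $1\times d$ block $(\mathbf{X}_j-\mu)^T$ in the upper-right, $(\mathbf{X}_j-\mu)$ in the lower-left, and zeros elsewhere. A direct eigenvalue computation shows $\|\mathbf{M}_j\|_{op}=\|\mathbf{X}_j-\mu\|$ and $\|\sum_j \mathbf{M}_j\|_{op}=\|\sum_j(\mathbf{X}_j-\mu)\|=m\|\bar{\mathbf{X}}-\mu\|$, so the vector tail bound I seek is exactly a tail bound on the operator norm of $\sum_j \mathbf{M}_j$.

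Next I would check the two ingredients required by matrix Bernstein. The uniform bound is $\|\mathbf{M}_j\|_{op}\leq 2R$, since both $\mathbf{X}_j$ and $\mu$ lie in $B_d(\mathbf{0},R)$. For the variance proxy, a block computation yields $\mathbb{E}[\mathbf{M}_j^2]=\mathrm{diag}(\mathbb{E}\|\mathbf{X}_j-\mu\|^2,\Sigma)$ with $\Sigma=\mathbb{E}[(\mathbf{X}_j-\mu)(\mathbf{X}_j-\mu)^T]$, whose spectral norm equals the larger of the two block norms. Both are at most $\mathbb{E}\|\mathbf{X}_j-\mu\|^2\leq R^2$ (the second via $\|\Sigma\|\leq \mathrm{tr}\,\Sigma$). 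Hence $v:=\|\sum_j\mathbb{E}[\mathbf{M}_j^2]\|_{op}\leq mR^2$. Tropp's matrix Bernstein inequality applied to the $(d+1)\times(d+1)$ Hermitian sum then gives
$$P(\|\bar{\mathbf{X}}-\mu\|\geq t)\leq (d+1)\exp\!\left(-\frac{mt^2}{2R^2+(4/3)Rt}\right).$$

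The last step is pure simplification. Since $\|\bar{\mathbf{X}}-\mu\|\leq 2R$ almost surely, the claim is vacuous for $t>2R$; for $t\leq 2R$ we have $2R^2+(4/3)Rt\leq 14R^2/3\leq 32R^2/3$, so the exponent is dominated by $-3mt^2/(32R^2)$, matching the statement. The main obstacle I expect is simply bookkeeping: correctly invoking the Hermitian-dilation form of matrix Bernstein so that the dimension factor is $d+1$ and not $2(d+1)$ (this uses that the dilated sum has eigenvalues symmetric about $0$, so the operator norm coincides with $\lambda_{\max}$), and then verifying that the rather loose $3/32$ constant is slack enough to absorb the sub-exponential $(4/3)Rt$ term in the Bernstein denominator across the entire a.s.\ range of $t$.
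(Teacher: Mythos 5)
Your proposal is correct and is essentially the paper's own route: the paper invokes Tropp's vector Bernstein inequality (Lemma 1.6.2 of Tropp 2015, stated as Lemma \ref{lem:org} in the appendix), which is itself proved via exactly the self-adjoint dilation you spell out, and then absorbs the linear Bernstein term using $t\leq 2R$ just as you do. The only cosmetic difference is that you use the sharper variance proxy $\mathbb{E}\norm{\mathbf{X}-\mu}^2\leq R^2$ where the paper uses $4R^2$, but both land on the same $3/32$ constant.
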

\begin{proof}
	We use the following lemma.
	\begin{lem}\label{lem:org}
		(\cite{tropp2015introduction}, Lemma 1.6.2) Let $\mathbf{U}_1,\ldots, \mathbf{U}_m$ be independent centered random vectors with dimension $d$. Assume that each one is uniformly bounded, i.e. for $j=1,\ldots, m$,
		\begin{eqnarray}
			\mathbb{E}[\mathbf{U}_j]=0,
		\end{eqnarray}
		and with probability $1$,
		\begin{eqnarray}
			\norm{\mathbf{U}_j}\leq L.
		\end{eqnarray}
		Let $\mathbf{Z}=\sum_{j=1}^m \mathbf{U}_j$\footnote{This definition of $\mathbf{Z}$ is only used in this section.}, and define
		\begin{eqnarray}
			\gamma(\mathbf{Z})=\left|\sum_{j=1}^m \mathbb{E}[\mathbf{U}_j^T\mathbf{U}_j]\right|.
			\label{eq:gammadf}
		\end{eqnarray}
		Then for all $t>0$, 
		\begin{eqnarray}
			\text{P}(\norm{\mathbf{Z}}>t)\leq (d+1)\exp\left[-\frac{t^2/2}{\gamma(\mathbf{Z})+Lt/3}\right].
		\end{eqnarray}
	\end{lem}
	Now we prove Lemma \ref{lem:concentration} based on Lemma \ref{lem:org}. Let $\mathbf{U}_j=\mathbf{X}_j-\mu$. Since $\norm{\mathbf{X}_j}\leq R$, $\norm{\mu}\leq R$ holds, $\norm{\mathbf{X}_j-\mu}\leq 2R$ always holds, and $\gamma(\mathbf{Z})=4mR^2$. Hence
	\begin{eqnarray}
		\text{P}(\norm{\mathbf{Z}}>t)\leq (d+1)\exp\left[-\frac{t^2/2}{4mR^2+\frac{2}{3}Rt}\right].
	\end{eqnarray}
	Hence
	\begin{eqnarray}
		\text{P}(\norm{\bar{\mathbf{X}}-\mu}>t)&\leq & (d+1)\exp\left[-\frac{m^2t^2}{8mR^2+\frac{4}{3} Rtm}\right]\nonumber\\
		&\leq & (d+1)\exp\left[-\frac{3mt^2}{32R^2}\right].
	\end{eqnarray}
\end{proof}
\begin{lem}\label{lem:concunb}
	(Concentrated inequality of unbounded random vector) Given a random vector $\mathbf{X}$, $\mathbb{E}[\mathbf{X}]=\mu$, and $\mathbb{E}\left[\norm{\mathbf{X}-\mu}^p\right]\leq M_p$. $\mathbf{X}_1,\ldots, \mathbf{X}_m$ are $m$ i.i.d copies of $\mathbf{X}$. Denote $\bar{\mathbf{X}}$ as the sample mean. Then with probability at least $1-\nu$,
	\begin{eqnarray}
		\norm{\bar{\mathbf{X}}-\mu}\leq \max\left\{2M_p^\frac{1}{p}\sqrt{\frac{1}{m}\ln \frac{3(d+1)}{\nu}}, 4M_p^\frac{1}{p} (3m)^{\frac{1}{p}-1}\nu^{-\frac{1}{p}}\ln \frac{3(d+1)}{\nu} \right\}.
	\end{eqnarray}
\end{lem}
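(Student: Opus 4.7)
The plan is to run a standard truncation-plus-Bernstein argument that reduces the unbounded problem to the bounded setting handled by Lemma \ref{lem:org}. Fix a truncation radius $\tau=(3mM_p/\nu)^{1/p}$ and define the truncated samples $\mathbf{X}_j' = \mathbf{X}_j\mathbf{1}\{\norm{\mathbf{X}_j-\mu}\le \tau\} + \mu\,\mathbf{1}\{\norm{\mathbf{X}_j-\mu}>\tau\}$ with truncated mean $\bar{\mathbf{X}}'$. Write
\begin{eqnarray*}
\bar{\mathbf{X}}-\mu = (\bar{\mathbf{X}}-\bar{\mathbf{X}}') + (\bar{\mathbf{X}}'-\mathbb{E}[\mathbf{X}']) + (\mathbb{E}[\mathbf{X}']-\mu),
\end{eqnarray*}
control each of the three terms on an event of probability at least $1-\nu/3$, and conclude by a union bound.

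For the first term, Markov's inequality applied to $\norm{\mathbf{X}-\mu}^p$ gives $\text{P}(\norm{\mathbf{X}-\mu}>\tau)\le M_p/\tau^p = \nu/(3m)$, so a union bound over $j\in[m]$ shows that with probability at least $1-\nu/3$ no sample is truncated and $\bar{\mathbf{X}}=\bar{\mathbf{X}}'$. For the bias term, the H\"older-type estimate $\norm{\mathbb{E}[\mathbf{X}']-\mu} \le \mathbb{E}[\norm{\mathbf{X}-\mu}\mathbf{1}\{\norm{\mathbf{X}-\mu}>\tau\}]\le \tau^{1-p}M_p$ together with the value of $\tau$ yields a bias of order $M_p^{1/p}(3m)^{1/p-1}\nu^{1-1/p}$, which for $\nu\le 1$ is already absorbed by the second entry of the max in the conclusion.

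For the stochastic term, I would apply Lemma \ref{lem:org} with $\mathbf{U}_j=\mathbf{X}_j'-\mathbb{E}[\mathbf{X}']$, using $L\le 2\tau$ and $\gamma(\mathbf{Z})\le m\,\mathbb{E}[\norm{\mathbf{X}-\mu}^2]\le mM_p^{2/p}$ (Jensen, since $p\ge 2$). Requiring the Bernstein tail probability to be at most $\nu/3$, so that the exponent dominates $\ln(3(d+1)/\nu)$, produces
\begin{eqnarray*}
\norm{\bar{\mathbf{X}}'-\mathbb{E}[\mathbf{X}']}\lesssim \sqrt{\frac{M_p^{2/p}}{m}\ln\frac{3(d+1)}{\nu}}+\frac{\tau}{m}\ln\frac{3(d+1)}{\nu},
\end{eqnarray*}
and substituting $\tau=(3mM_p/\nu)^{1/p}$ reproduces the two entries of the max stated in the lemma.

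The main obstacle is bookkeeping the constants: matching the factors $2$ and $4$ inside the max requires the truncation radius to simultaneously achieve a $\nu/(3m)$ truncation-failure budget, respect the Jensen bound on the variance, and balance the $Lt/3$ term in Lemma \ref{lem:org}. Once $\tau=(3mM_p/\nu)^{1/p}$ is forced by the first constraint, the Gaussian regime of Bernstein (variance-dominated) yields the first term $2M_p^{1/p}\sqrt{m^{-1}\ln(3(d+1)/\nu)}$ while the linear regime ($L$-dominated) yields $4M_p^{1/p}(3m)^{1/p-1}\nu^{-1/p}\ln(3(d+1)/\nu)$, and a final union bound over the three events of probability $\nu/3$ delivers the claim.
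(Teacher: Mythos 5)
Your proposal is correct and follows essentially the same route as the paper's proof: truncate at radius $(3mM_p/\nu)^{1/p}$, apply the matrix Bernstein bound of Lemma \ref{lem:org} to the truncated part, and dispose of the tail event by Markov plus a union bound over the $m$ samples, with the two entries of the max arising from the variance-dominated and range-dominated regimes of the Bernstein exponent. The one substantive difference is that you center the truncated variables and account for the bias $\norm{\mathbb{E}[\mathbf{X}']-\mu}\le M_p\tau^{1-p}$ explicitly, which the hypothesis $\mathbb{E}[\mathbf{U}_j]=0$ of Lemma \ref{lem:org} formally requires and which the paper's proof silently skips; your bound $L\le 2\tau$ is loose by a factor of $2$ against the stated constant $4$, but the sharper $L\le\tau\left(1+\nu/(3m)\right)$ closes that gap.
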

	\begin{proof}
		We still use Lemma \ref{lem:org}. Pick $r>0$, whose exact value will be determined later. Define 
		\begin{eqnarray}
			\mathbf{U}_j := (\mathbf{X}_j-\mu)\mathbf{1}(\norm{\mathbf{X}_j-\mu}\leq r),
		\end{eqnarray}
		and
		\begin{eqnarray}
			\mathbf{V}_j:=(\mathbf{X}_j-\mu)\mathbf{1}(\norm{\mathbf{X}_j-\mu}> r).
		\end{eqnarray}
		Then
		\begin{eqnarray}
			\bar{\mathbf{X}}-\mu=\frac{1}{m}\sum_{j=1}^m \mathbf{U}_j+\frac{1}{m}\sum_{j=1}^m \mathbf{V}_j,
		\end{eqnarray}
		and
		\begin{eqnarray}
			\text{P}\left(\norm{\bar{\mathbf{X}}-\mu}>t\right)\leq \text{P}\left(\norm{\frac{1}{m}\sum_{j=1}^m \mathbf{U}_j}>t \right) + \text{P}\left(\norm{\frac{1}{m}\sum_{j=1}^m \mathbf{V}_j}> 0\right).
			\label{eq:larget}
		\end{eqnarray}
		Let $\mathbf{Z}=\sum_{j=1}^m \mathbf{U}_j$. Then from \eqref{eq:gammadf},
		\begin{eqnarray}
			\gamma(\mathbf{Z})=\mathbb{E}\left[\sum_{j=1}^m \mathbf{U}_j^T \mathbf{U}_j\right]\leq m\mathbb{E}[\norm{\mathbf{X}-\mu}^2]\leq mM_p^\frac{2}{p}.
		\end{eqnarray}
		Note that $\norm{\mathbf{U}_j}<r$ always holds. Therefore, from Lemma \ref{lem:org},
		\begin{eqnarray}
			\text{P}\left(\norm{\mathbf{X}}>t\right) &\leq & (d+1)\exp\left[-\frac{t^2/2}{mM_p^\frac{2}{p}+\frac{1}{3} rt}\right]\nonumber\\
			&\leq & (d+1)\exp\left[-\min\left\{\frac{t^2}{4mM_p^\frac{2}{p}}, \frac{3t}{4r} \right\}\right],
		\end{eqnarray}
		and
		\begin{eqnarray}
			\text{P}\left(\norm{\frac{1}{m}\sum_{j=1}^m \mathbf{U}_j}>t\right) &=& \text{P}(\norm{\mathbf{Z}}>mt)\nonumber\\
			&\leq & (d+1)\exp\left[-\min\left\{\frac{mt^2}{4M_p^\frac{2}{p}}, \frac{3mt}{4r} \right\}\right]\nonumber\\
			&\leq & (d+1)e^{-\frac{mt^2}{4M_p^\frac{2}{p}}}+(d+1)e^{-\frac{3mt}{4r}}.
			\label{eq:ubound}
		\end{eqnarray}
		
		Now we have bounded the first term in \eqref{eq:larget}. For the second term in \eqref{eq:larget},
		\begin{eqnarray}
			\text{P}\left(\norm{\frac{1}{m}\sum_{j=1}^m \mathbf{V}_j}>0\right)&\leq & \text{P}\left(\cup_{j=1}^m \{\norm{\mathbf{V}_j}>0 \}\right)\nonumber\\
			&\leq & m\text{P}\left(\norm{\mathbf{X}-\mu}>r\right)\nonumber\\
			&\leq & mM_pr^{-p}.
			\label{eq:vbound}
		\end{eqnarray}
		Therefore, from \eqref{eq:larget}, \eqref{eq:ubound} and \eqref{eq:vbound},
		\begin{eqnarray}
			\text{P}\left(\norm{\bar{\mathbf{X}}-\mu}>t\right)\leq (d+1)e^{-\frac{mt^2}{4M_p^\frac{2}{p}}}+(d+1)e^{-\frac{3mt}{4r}}+M_pmr^{-p}.
			\label{eq:combine}
		\end{eqnarray}
		To make the right hand side of \eqref{eq:combine} to be no more than $\nu$, we let each term to be no more than $\nu/3$. Note that now $r$ has not be determined. Therefore, we let the third term equals $\nu/3$ first, thus
		\begin{eqnarray}
			r=\left(\frac{3M_pm}{\nu}\right)^\frac{1}{p}.
			\label{eq:r}
		\end{eqnarray}
		Then we let
		\begin{eqnarray}
			t=\max\left\{2M_p^\frac{1}{p}\sqrt{\frac{1}{m}\ln \frac{3(d+1)}{\nu}}, 4M_p^\frac{1}{p} (3m)^{\frac{1}{p}-1}\nu^{-\frac{1}{p}}\ln \frac{3(d+1)}{\nu} \right\}.
			\label{eq:t}
		\end{eqnarray}
		With \eqref{eq:r} and \eqref{eq:t}, all three terms in the right hand side of \eqref{eq:combine} will be no more than $\nu/3$. Therefore, with probability at least $1-\nu$,
		\begin{eqnarray}
			\norm{\bar{\mathbf{X}}-\mu}\leq \max\left\{2M_p^\frac{1}{p}\sqrt{\frac{1}{m}\ln \frac{3(d+1)}{\nu}}, 4M_p^\frac{1}{p} (3m)^{\frac{1}{p}-1}\nu^{-\frac{1}{p}}\ln \frac{3(d+1)}{\nu} \right\}.
		\end{eqnarray}
		The proof is complete.
	\end{proof}
\begin{lem}\label{lem:addbias}
	(Bias caused by outliers) Given a random vector $\mathbf{X}$, $\mathbb{E}[\mathbf{X}]=\mu$, and $\mathbb{E}\left[\norm{\mathbf{X}-\mu}^p\right]\leq M_p$. $\mathbf{X}_1,\ldots, \mathbf{X}_m$ are $m$ i.i.d copies of $\mathbf{X}$. Denote $\bar{\mathbf{X}}$ as the sample mean, and
	\begin{eqnarray}
		\bar{\mathbf{X}^*}=\left\{
		\begin{array}{ccc}
			\bar{\mathbf{X}} &\text{if} & \norm{\bar{\mathbf{X}}-\mu}\leq r_0\\
			\mu &\text{if} & \norm{\bar{\mathbf{X}}-\mu}> r_0,
		\end{array}
		\right.
		\label{eq:xstar}
	\end{eqnarray}
	in which
	\begin{eqnarray}
		r_0=\max\left\{2M_p^\frac{1}{p}\sqrt{\frac{1}{m}\ln \frac{3(d+1)}{\nu}}, 4M_p^\frac{1}{p} (3m)^{\frac{1}{p}-1}\nu^{-\frac{1}{p}}\ln \frac{3(d+1)}{\nu} \right\},
		\label{eq:r0}
	\end{eqnarray}
	then
	\begin{eqnarray}
		\norm{\mathbb{E}[\bar{\mathbf{X}^*}]-\mu}\leq C_p r_0\nu
	\end{eqnarray}
	for some constant $C_p$ that depends only on $p$.
\end{lem}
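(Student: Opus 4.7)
The plan is to exploit that truncating a mean-zero random vector contributes a bias equal in magnitude to the expected norm of the discarded tail. From the definition in \eqref{eq:xstar}, one has pointwise
\begin{equation*}
	\bar{\mathbf{X}}^* - \mu = (\bar{\mathbf{X}} - \mu)\mathbf{1}(\norm{\bar{\mathbf{X}}-\mu}\leq r_0),
\end{equation*}
and combining $\mathbb{E}[\bar{\mathbf{X}}-\mu]=0$ with $\mathbf{1}(\leq r_0) = 1 - \mathbf{1}(> r_0)$ yields $\mathbb{E}[\bar{\mathbf{X}}^*]-\mu = -\mathbb{E}[(\bar{\mathbf{X}}-\mu)\mathbf{1}(\norm{\bar{\mathbf{X}}-\mu}> r_0)]$. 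Jensen's inequality then reduces the claim to an upper bound on $\mathbb{E}[Z\mathbf{1}(Z>r_0)]$, where $Z:=\norm{\bar{\mathbf{X}}-\mu}$.

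Next I would apply the layer-cake identity
\begin{equation*}
	\mathbb{E}[Z\mathbf{1}(Z>r_0)] = r_0\,\text{P}(Z>r_0) + \int_{r_0}^\infty \text{P}(Z>t)\,dt.
\end{equation*}
The boundary term is bounded directly by $r_0\nu$ using Lemma \ref{lem:concunb} at the point $t=r_0$. For the integral, the most economical route is to reuse the $r$-parametrized tail estimate \eqref{eq:combine} established in the proof of Lemma \ref{lem:concunb}, $\text{P}(Z>t)\leq (d+1)e^{-mt^2/(4M_p^{2/p})}+(d+1)e^{-3mt/(4r)}+M_p m r^{-p}$, with $r$ chosen as an appropriate function of $t$. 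Equivalently, one may implicitly define $\nu(t)$ by demanding that $t$ equals the right-hand side of \eqref{eq:r0} with $\nu(t)$ in place of $\nu$; Lemma \ref{lem:concunb} then gives $\text{P}(Z>t)\leq \nu(t)$, so it suffices to prove $\int_{r_0}^\infty \nu(t)\,dt \lesssim r_0\nu$.

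The hard part is handling the two regimes hidden in the max of \eqref{eq:r0}. In the sub-Gaussian-dominated regime, $\nu(t)$ decays as $e^{-cmt^2/M_p^{2/p}}$, and a standard Gaussian tail integral gives a contribution bounded by $M_p^{2/p}\nu/(mr_0)\lesssim r_0\nu$ via the relation $mr_0^2\gtrsim M_p^{2/p}\ln(3(d+1)/\nu)$ encoded in the first argument of \eqref{eq:r0}. In the polynomial-dominated regime, $\nu(t)\lesssim M_p m^{1-p}t^{-p}\ln^p(3(d+1)/\nu(t))$, and direct integration gives a contribution of order $M_p m^{1-p} r_0^{1-p}\ln^p(\cdot)/(p-1)$, which rearranges to $O(r_0\nu)$ thanks to the matching lower bound $r_0^p\gtrsim M_p m^{1-p}\nu^{-1}\ln^p(\cdot)$ encoded in the second argument of \eqref{eq:r0}. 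Summing the boundary term and the integral gives $\norm{\mathbb{E}[\bar{\mathbf{X}}^*]-\mu}\leq C_p r_0\nu$ with $C_p$ absorbing $(p-1)^{-1}$ and the logarithmic slack. The delicate part is precisely this bookkeeping across the two regimes: one must verify that the analysis never collapses onto the looser H\"older bound $M_p^{1/p}\nu^{1-1/p}$ that a naive application of $\mathbb{E}[Z\mathbf{1}(Z>r_0)]\leq (\mathbb{E}[Z^p])^{1/p}\text{P}(Z>r_0)^{1-1/p}$ would yield, and this is exactly why $r_0$ was chosen as the particular max in \eqref{eq:r0}.
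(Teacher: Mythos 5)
Your proposal is correct and follows essentially the same route as the paper's proof: reduce the bias to $\mathbb{E}\bigl[\norm{\bar{\mathbf{X}}-\mu}\mathbf{1}(\norm{\bar{\mathbf{X}}-\mu}>r_0)\bigr]$ via $\mathbb{E}[\bar{\mathbf{X}}]=\mu$, apply the layer-cake formula, and control the tail integral by inverting the concentration bound of Lemma \ref{lem:concunb}. The only difference is cosmetic: the paper evaluates $\int_{r_0}^\infty \text{P}(Z>t)\,dt$ by a dyadic decomposition over levels $2^{-k}\nu$ (bounding $r_k/r_0\leq 2^{k/p}(1+k\ln 2/\ln\frac{3(d+1)}{\nu})$ and summing a convergent series), whereas you integrate continuously with an explicit two-regime split; both yield $C_p r_0\nu$ with the same dependence on $p$.
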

\begin{proof}
	From \eqref{eq:xstar},
	\begin{eqnarray}
		\mathbb{E}[\bar{\mathbf{X}^*}]=\mathbb{E}[\bar{\mathbf{X}}\mathbf{1}(\norm{\bar{\mathbf{X}}-\mu}\leq r_0)]+\mu\text{P}(\norm{\bar{\mathbf{X}}-\mu}>r_0).
	\end{eqnarray}
	Thus
	\begin{eqnarray}
	\mathbb{E}[\bar{\mathbf{X}^*}]-\mu=\mathbb{E}[(\bar{\mathbf{X}}-\mu)\mathbf{1}(\norm{\bar{\mathbf{X}}-\mu}\leq r_0)].	
	\end{eqnarray}
	Note that $\mathbb{E}[\bar{\mathbf{X}}]=\mathbb{E}[\mathbf{X}]=\mu$. Hence
	\begin{eqnarray}
	\mathbb{E}[\bar{\mathbf{X}^*}]-\mu=-\mathbb{E}[(\bar{\mathbf{X}}-\mu)\mathbf{1}(\norm{\bar{\mathbf{X}}-\mu}> r_0)].
	\end{eqnarray}
	Therefore
	\begin{eqnarray}
		\norm{\mathbb{E}[\bar{\mathbf{X}^*}]-\mu} &\leq & \mathbb{E}[\norm{\bar{\mathbf{X}}-\mu}\mathbf{1}(\norm{\bar{\mathbf{X}}-\mu} > r_0)]\nonumber\\
		&=&\int_0^\infty \text{P}(\norm{\bar{\mathbf{X}}-\mu}\mathbf{1}(\norm{\bar{\mathbf{X}}-\mu}>r_0) > t) dt\nonumber\\
		&=&r_0\text{P}(\norm{\bar{\mathbf{X}}-\mu}>r_0)+\int_{r_0}^\infty \text{P}(\norm{\bar{\mathbf{X}}-\mu}>t)dt\nonumber\\
		&=& r_0\nu +\sum_{k=0}^\infty \int_{r_k}^{r_{k+1}}\text{P}\left(\norm{\bar{\mathbf{X}}-\mu}>r_k\right)dt,
	\end{eqnarray}
	in which the last step uses Lemma \ref{lem:concunb}, and
	\begin{eqnarray}
		r_k=\max\left\{2M_p^\frac{1}{p}\sqrt{\frac{1}{m}\ln \frac{3(d+1)}{2^{-k}\nu}}, 4M_p^\frac{1}{p} (3m)^{\frac{1}{p}-1}(2^{-k}\nu)^{-\frac{1}{p}}\ln \frac{3(d+1)}{2^{-k}\nu} \right\}.
		\label{eq:rk}
	\end{eqnarray}
	From Lemma \ref{lem:concunb}, $\text{P}(\norm{\bar{\mathbf{X}}-\mu}>r_k)\leq 2^{-k}\nu$, thus
	\begin{eqnarray}
		\norm{\mathbb{E}[\bar{\mathbf{X}^*}]-\mu}&\leq& r_0\nu+\sum_{k=0}^\infty 2^{-k}\nu (r_{k+1}-r_k)\nonumber\\
		&=&\nu \sum_{k=1}^\infty 2^{-k}r_k\nonumber\\
		&=& r_0\nu \sum_{k=1}^\infty 2^{-k} \frac{r_k}{r_0}
	\end{eqnarray}
	From \eqref{eq:rk} and \eqref{eq:r0},
	\begin{eqnarray}
		\frac{r_k}{r_0}\leq 2^\frac{k}{p}\frac{\ln \frac{3(d+1)}{\nu}+k\ln 2}{\ln \frac{3(d+1)}{\nu}},
	\end{eqnarray}
	thus
	\begin{eqnarray}
	\norm{\mathbb{E}[\bar{\mathbf{X}^*}]-\mu}&\leq &r_0\nu \left[\sum_{k=1}^\infty 2^{-{-k(1-1/p)}}+\frac{\ln 2}{\ln \frac{3(d+1)}{\nu}}\sum_{k=1}^\infty k2^{-k(1-1/p)}\right]\nonumber\\
	&\leq & C_pr_0\nu,
	\end{eqnarray}
	for some constant $C_p$ that depends only on $p$. The proof is complete.
\end{proof}


\end{document}